\documentclass{article}
\usepackage[T1]{fontenc}
\usepackage[utf8]{inputenc}
\usepackage[a4paper, margin=1in]{geometry}

\usepackage{microtype}
\usepackage{graphicx}
\usepackage{subcaption}
\usepackage{booktabs}

\usepackage[colorlinks,urlcolor=blue,citecolor=blue,linkcolor=blue]{hyperref}

\usepackage{amsmath}
\usepackage{amssymb}
\usepackage{mathtools}
\usepackage{amsthm}

\usepackage[capitalize,noabbrev]{cleveref}

\theoremstyle{plain}
\newtheorem{theorem}{Theorem}[section]
\newtheorem{proposition}[theorem]{Proposition}
\newtheorem{lemma}[theorem]{Lemma}

\theoremstyle{definition}
\newtheorem{definition}[theorem]{Definition}
\newtheorem{assumption}[theorem]{Assumption}
\theoremstyle{remark}
\newtheorem{remark}[theorem]{Remark}

\usepackage{tikz}
\usetikzlibrary{
  arrows.meta,
  positioning,
  decorations.pathreplacing,
  calc
}

\usepackage{bbm}
\usepackage{overpic}
\usepackage{enumitem}
\usepackage{nicematrix} 
\usepackage{stmaryrd} 
\usepackage[numbers,sort&compress]{natbib} 
\usepackage{tabularx}
\usepackage{pifont}
\usepackage[dvipsnames]{xcolor}

\newcommand{\cmark}{\textcolor{green}{\ding{51}}}
\newcommand{\xmark}{\textcolor{red}{\ding{55}}}
\newcommand{\noComparison}{} 
\newcommand{\noExplicitMatch}{{$\emptyset$}} 
\newcommand{\notDiscussed}{} 
\newcommand{\unlikely}{\textcolor{gray}{\ding{55}}}
\newcommand{\Cone}{\textsf{C1}}
\newcommand{\Ctwo}{\textsf{C2}}

\DeclareMathOperator*{\argmax}{argmax}
\DeclareMathOperator*{\softmax}{softmax}
\DeclareMathOperator*{\sign}{sign}
\DeclareMathOperator*{\Var}{Var}
\DeclareMathOperator*{\diag}{diag}
\DeclareMathOperator*{\dist}{dist}
\DeclareMathOperator*{\linspan}{span}

\newcommand{\inner}[2]{\langle #1, #2 \rangle}

\def\din{{d_{\mathrm{in}}}}
\def\dout{{d_{\mathrm{out}}}}

\def\model{\mathrm{model}}
\def\drift{\mathcal{D}}
\def\Cov{\mathrm{Cov}}

\def\MSE{\mathrm{MSE}}
\def\CE{\mathrm{CE}}
\def\margin{\mathrm{margin}}
\def\argmin{\mathrm{argmin}}
\def\shift{\mathbf{S}}
\def\E{\mathbb{E}}
\def\R{\mathbb{R}}

\def\eps{\varepsilon}
\def\stdnoise{\sigma_{\eps}}
\def\I{\mathbf I}
\def\L{\mathcal{L}}

\def\energyinc{\mathcal{E}}
\def\drift{\mathcal{D}}
\def\DISV{\drift_{\mathrm{SV}}}
\def\restISV{\DISV^{\mathrm{rest}}}
\def\restoreISV{\DISV^{\mathrm{restore}}}
\def\rest{\mathrm{rest}}

\def\DnonISV{\drift_{\mathrm{non-SV}}}

\newcommand{\noiseDistrib}{\mathcal{D}_{\xi}} 
\newcommand{\WV}{\mathbf W_{\mathrm{V}}}
\newcommand{\WQ}{\mathbf W_{\mathrm{Q}}}
\newcommand{\WK}{\mathbf W_{\mathrm{K}}}
\newcommand{\Wkq}{{\mathbf W_{\mathrm{KQ}}}} 
\newcommand{\maxmargin}{{\hat{\mathbf W}_{\mathrm{KQ}}}} 
\newcommand{\Ectx}{\mathbf E} 
\newcommand{\query}{\mathbf e} 
\newcommand{\token}{\mathbf e} 
\newcommand{\target}{\mathbf v} 

\newcommand{\attn}{\mathbf a} 
\newcommand{\Cqe}{\mathbf C} 
\def\stdtoken{\sigma_{\mathrm{token}}}
\def\stdnoise{\sigma_{\mathrm{noise}}}
\def\vartoken{\sigma^2_{\mathrm{token}}}
\def\varnoise{\sigma^2_{\mathrm{noise}}}
\newcommand{\indicator}{\delta}

\newcommand{\ISV}{\mathcal{I}_{\mathrm{SV}}}
\newcommand{\nonISV}{\mathcal{I}_{\mathrm{nonSV}}}
\newcommand{\flatten}{\operatorname{flatten}}
\newcommand{\DataGeom}[1]{\textbf{#1 (\textcolor{ForestGreen}{data geometry})}}
\newcommand{\Converg}[1]{\textbf{#1 (\textcolor{orange}{stable trajectory})}}
\definecolor{DustyRose}{RGB}{181, 101, 118}
\definecolor{Mauve}{RGB}{176, 141, 156}
\definecolor{WarmLavender}{RGB}{190, 160, 180}
\newcommand{\Success}[1]{\textbf{#1 (\textcolor{DustyRose}{successful run})}}

\newcommand\textmagenta[1]{{\color{magenta}#1}}
\newcommand\mathmagenta[1]{{\color{magenta}{#1}}}
\newcommand\magenta[1]{\relax\ifmmode\mathmagenta{#1}\else\textmagenta{#1}\fi}

\title{
Gaussian Match-and-Copy\\
\large A Minimalist Benchmark for Studying Transformer Induction
}

\author{
Antoine Gonon
\and
Alexandre Cordonnier
\and
Nicolas Boumal
\and
\small Institute of Mathematics, EPFL, Switzerland
}

\date{} 

\begin{document}
\maketitle

\begin{abstract}
Match-and-copy is 
a core retrieval primitive 
used at inference time by large language models
to retrieve a matching token from 
the context then copy its successor. 
Yet, understanding how this behavior emerges on natural data 
is challenging because 
retrieval and memorization are entangled. 
To disentangle the two, we introduce 
\emph{Gaussian Match-and-Copy} (GMC), 
a minimalist benchmark that isolates long-range 
retrieval through pure second-order correlation signals. 
Numerical investigations show that this task retains 
key qualitative aspects of how Transformers develop 
match-and-copy circuits in practice, 
and separates architectures by their retrieval capabilities.
We also analyze the optimization dynamics 
in a simplified attention setting. 
Although many solutions are \emph{a priori} 
possible under a regression objective, 
including ones that do not implement retrieval, 
we identify an implicit-bias regime in which gradient 
descent drives the parameters to diverge while 
their direction aligns with the max-margin separator, 
yielding hard match selection. 
We prove this max-margin alignment for GD 
trajectories that reach vanishing empirical loss 
under explicit technical conditions. 
\end{abstract}

\section{Introduction}
\label{sec:intro}

How do Transformers learn to learn in-context? 
A long-studied hypothesis in the literature is that \emph{match-and-copy} 
is an important primitive underlying in-context learning (ICL) in large language models 
\citep{Elhage21TransformerCircuits,Olsson22AnthropicInContextLearning,
      Bietti23TransformerMemoryViewpoint,CallumMcDougall23InductionHeadsIllustrated,
      Chan22DataDistributionDrivesInContextLearning,Singh23TransienceICL,
      Reddy24GaussianDataforMechanisticStudyClassificationICL,
      Singh24InductionHeadFormation,Crosbie24AblatingIHDeterioratesICL,
      Musat25EmergenceInductionHeadsICL}. 
For instance, in
\[
\text{``Granny Susie comes tomorrow. I love Granny }\rule{0.5cm}{0.4pt}\text{''}
\]
a natural continuation is \text{``Susie''}. 
This prediction can arise either from memorization 
(a stored association \text{``Granny''}$\mapsto$\text{``Susie''}) 
or from an inference-time retrieval procedure that searches the 
current context for a match, then copies its successor. 
Because the space of possible associations is vast, 
models that can perform match-and-copy are better equipped to adapt to the context. 

In practice, 
Transformers of various scales have been shown to develop a two-head attention circuit 
that implements a match-and-copy 
mechanism in an abstract way (the same heads work independently of the token distribution)
\citep{Elhage21TransformerCircuits,Olsson22AnthropicInContextLearning}. 
Moreover, the formation of such heads coincides with sharp decreases in training loss, 
suggesting that acquiring match-and-copy is important for overall model performance. 
This two-head circuit is composed of a \emph{Previous-Token Head} (PTH) 
that marks the predecessor of a token, 
and an \emph{Induction Head} (IH) 
in a subsequent layer 
that searches for that mark to copy the continuation 
\citep{Elhage21TransformerCircuits,Olsson22AnthropicInContextLearning,
CallumMcDougall23InductionHeadsIllustrated}. 

Despite these insights, it remains difficult to pinpoint 
\emph{why} and \emph{when} this circuit emerges so reliably. 
Analyses based on large-scale language data 
are expensive to run and entangle many factors, 
offering limited experimental control 
\citep{%
Elhage21TransformerCircuits,
Olsson22AnthropicInContextLearning,
Min22ICLwithRandomLabels,
Crosbie24AblatingIHDeterioratesICL,
Lee24ICLwithOtherArchitectures}. 
This has motivated the development of synthetic benchmarks 
that are cheaper to run and 
amenable to analysis. 
There is a long history of such synthetic tasks in the literature 
(see our review in \Cref{sec:related-work}), 
each with its own set of design choices. 
Different constructions are meant to emphasize 
different aspects of match-and-copy (or more generally, in-context learning) mechanisms. 
In this work, we focus on isolating \emph{long-range correlation-based} 
match-and-copy retrieval, 
in an inexpensive and robust way (i.e., we want the PTH$\to$IH circuit to emerge systematically, 
without shortcut solutions). 
This focus reflects two properties of attention-based models: 
their ability to model long-range dependencies--often cited as a key strength over other sequence models 
\citep{Arora24ZoologyRecall,Jelassi24TransformersVsSSMAtCopying,Patro24Mamba360,Fu23H3}--and the fact that 
correlation provides a particularly natural signal for attention to exploit. 

To this end, we introduce \textbf{Gaussian Match-and-Copy (GMC)}. 
GMC is a minimalist prediction task where the only signal is a single, 
hidden correlation between the query (the last token in the context) 
and an earlier 
context token 
positioned at random in an otherwise i.i.d.\ Gaussian context. 
To solve it, a model cannot memorize associations; 
it must learn an active search-and-retrieval algorithm.

\begin{figure}[t]
\centering
\resizebox{0.9\columnwidth}{!}{%
\begin{tikzpicture}[
  font=\small,
  >=Latex,
  token/.style={draw, rounded corners=2pt, minimum width=10mm, minimum height=7mm, inner sep=1.5pt, align=center},
  tokenRed/.style={token, draw=red!75!black, line width=0.8pt},
  tokenMatch/.style={token, draw=teal!70!black, fill=teal!12, line width=0.9pt},
  tokenSucc/.style={token, draw=purple!70!black, fill=purple!10, line width=0.9pt},
  tokenQuery/.style={token, draw=green!55!black, fill=green!10, line width=0.9pt},
  tokenTarget/.style={token, draw=orange!75!black, fill=orange!12, line width=0.9pt},
  lab/.style={inner sep=1pt}
]

\def\gap{2mm} 
\def\qtgap{32mm} 

\node[tokenRed] (x1) {$\token_{1}$};
\node[tokenRed, right=\gap of x1] (x2) {$\token_{2}$};
\node[lab, right=\gap of x2] (xdotsL) {$\cdots$};

\node[tokenMatch, right=\gap of xdotsL] (xt0) {$\token_{t_0}$};
\node[tokenSucc,  right=\gap of xt0]    (xt0p1) {$\token_{t_0+1}$};

\node[lab, right=\gap of xt0p1] (xdotsR) {$\cdots$};
\node[tokenRed, right=\gap of xdotsR] (xT) {$\token_{T}$};

\node[tokenQuery, right=\gap of xT] (q) {$\query$};
\node[tokenTarget, right=\qtgap of q] (y) {$\target$}; 

\draw[black, line width=0.7pt, -{Latex[length=2.2mm]}]
  ($(q.east)+(0,0)$) -- ($(y.west)+(0,0)$);

\node[lab, above=4.5mm of q] (qeq) {$\query=\Cqe\,\token_{t_0}+\xi$};
\node[lab, above=4.5mm of y] (yeq) {$\target=\WV\,\token_{t_0+1}+\varepsilon$};

\draw[green!55!black, line width=0.5pt, -{Latex[length=1.8mm]}] (qeq.south) -- (q.north);
\draw[orange!75!black, line width=0.5pt, -{Latex[length=1.8mm]}] (yeq.south) -- (y.north);

\draw[teal!70!black, line width=0.9pt, -{Latex[length=2.2mm]}]
  (q.north west) .. controls +(-10mm,12mm) and +(0,12mm) .. (xt0.north)
  node[pos=0.55, above=2mm, lab] {(1) Match (correlation-based)};

\draw[purple!70!black, line width=0.9pt, -{Latex[length=2.2mm]}]
  (xt0p1.south) .. controls +(0,-14mm) and +(-12mm,-14mm) .. (y.south west)
  node[pos=0.55, below=2mm, lab] {(2) Copy};

\def\braceY{-12mm}   
\def\braceLabY{3mm} 

\draw[decorate, decoration={brace, mirror, amplitude=5pt}, yshift=\braceY]
  (x1.south west) -- (xT.south east)
  node[midway, below=\braceLabY] {context};

\draw[decorate, decoration={brace, mirror, amplitude=5pt}, yshift=\braceY]
  (q.south west) -- (q.south east)
  node[midway, below=\braceLabY] {query};

\draw[decorate, decoration={brace, mirror, amplitude=5pt}, yshift=\braceY]
  (y.south west) -- (y.south east)
  node[midway, below=\braceLabY] {target};

\end{tikzpicture}
}
\caption{\textbf{Gaussian Match-and-Copy} task (\Cref{def:mc}). 
The query matches a hidden context token via correlation, 
and the target copies the successor of that token.}
\label{fig:gaussian-mc}
\end{figure}
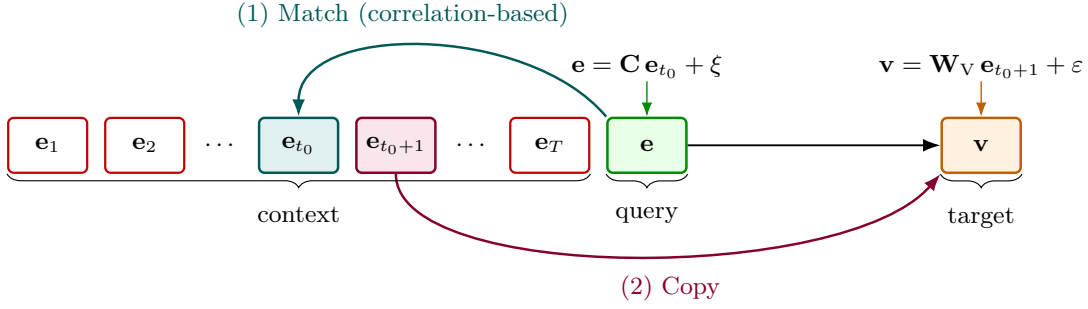

\begin{definition}[Gaussian Match-and-Copy]\label{def:mc}
Fix dimensions $\din,\dout$, noise level $\varnoise$, and token variance $\vartoken$. 
Consider $\WV \in \mathbb{R}^{\dout \times \din}$, 
$\Cqe \in \mathbb{R}^{\din \times \din}$ 
called value and correlation matrix, 
and $\noiseDistrib$ a distribution over $\din$-dimensional tokens. 
A sample $(\token_1, \dots, \token_T, \query, \target)$ is generated as follows:
\begin{enumerate}
    \item \textbf{Context:} 
    Draw $T$ independent token embeddings 
    $\token_1, \dots, \token_T \sim \mathcal{N}(0, \frac{\vartoken}{\din} \I_{\din})$.
    \item \textbf{Hidden Match:} 
    Select a hidden index $t_0$ uniformly at random from $\{1, \dots, T-1\}$.
    \item \textbf{Query Injection:} 
    The query $\query$ is drawn such that it is correlated 
    \emph{only} with $\token_{t_0}$:
    \[
    \Cov(\query, \token_t) = 
    \begin{cases} 
        \frac{\vartoken}{\din}\Cqe & \text{if } t=t_0,\\ 
        0 & \text{if } t \neq t_0.
    \end{cases}
    \]
    Concretely, this is achieved by setting
    $\query=\Cqe\,\token_{t_0}+\xi$, 
    where $\xi\sim \noiseDistrib$ is independent noise. 
    \item \textbf{Target:} The target is the \emph{next} token, projected and noisy: 
    $\target = \WV \token_{t_0+1} + \eps$, where $\eps \sim \mathcal{N}(0, \stdnoise^2 \I_{\dout})$.
\end{enumerate}
\end{definition}

The most natural strategy is clear: 
compare $\query$ with all $\token_t$ to maximize the overlap 
$\token_t^\top \Cqe \query$, identify $t_0$, 
and output $\WV \token_{t_0+1}$. 
This requires a mechanism that can (1) Match and (2) Copy, 
see \Cref{fig:gaussian-mc}.

The key design choices are as follows.
First, there is no fixed association rule to memorize: 
the matched token $\token_{t_0}$ and its successor $\token_{t_0+1}$ 
are independent random variables, so successful prediction cannot rely on stored input--output pairs.
Second, all context tokens share identical marginal distributions.
In particular, the matched token $\token_{t_0}$ cannot be identified by any local statistic, 
such as its moments or a special marker.
The only signal distinguishing it is its second-order correlation with the query.
Third, the match index $t_0$ can occur arbitrarily far 
in the context, requiring genuine long-range retrieval.
Finally, the Gaussian construction admits 
simple geometric and concentration properties, 
which we later exploit to analyze the optimization dynamics theoretically.

\paragraph{Contributions.} 
From the proposed GMC task, 
we extract several conclusions.

(i) \textbf{Reliable PTH$\to$IH emergence in Transformers.}
In our experiments, standard Transformers trained on GMC develop a PTH$\to$IH circuit, 
and the appearance of these heads aligns with a sharp drop 
in the validation loss. 
This mirrors prior observations made on LLMs, 
e.g., in \citep{Elhage21TransformerCircuits,Olsson22AnthropicInContextLearning} 
for models with sizes ranging between 13M and 13B parameters 
trained to predict the next token on generic web data. 
We find this behavior to be robust across a range of architectural and optimization choices.

(ii) \textbf{Transfer beyond the Gaussian distribution.}
Freezing the attention blocks after training on GMC, 
then 
fine-tuning only input/output 
embeddings enables rapid adaptation 
to new data distributions. 
This provides evidence that the 
learned heads implement an abstract 
match-and-copy strategy rather than 
Gaussian-specific heuristics.

(iii) \textbf{Architectural separation.}
Under similar training and inference budgets, 
Transformers solve GMC reliably,
whereas structured state-space models 
and other non-attention sequence models perform substantially worse.
This reveals a clear architectural gap in long-range, 
correlation-based retrieval.

(iv) \textbf{Induction via an implicit max-margin regime in simplified attention models.} 
Motivated by the architectural gap observed above, we study a minimal attention-only model that can implement
a PTH$\to$IH circuit.
In this setting, many optimization outcomes are \emph{a priori} plausible under square loss (MSE, a natural loss for GMC regression),  
including finite-norm
interpolants or divergence along different separating directions. 
Empirically, we observe regimes in which gradient descent attains very small training loss while the weights grow in norm
and their direction aligns with the max-margin separator, driving attention toward increasingly hard match selection.
On the theory side, we prove a conditional max-margin result:
for GMC data conditioned on a high-probability geometric event, any gradient descent trajectory that reaches vanishing loss
and satisfies explicit MSE-specific regularity conditions converges in direction to the max-margin solution and diverges at a logarithmic rate. 
A central ingredient is the geometric event, which 
rules out finite-norm MSE minimizers and forces successful runs into a diverging, increasing-margin regime. 
The remaining assumptions isolate MSE-specific interaction terms needed to recover the max-margin limit.

Together, these results position GMC as 
an inexpensive and simple diagnostic task for studying induction mechanisms in sequence models: 
it isolates a core inference primitive, exposes a capability 
gap between architectures, 
and opens a promising route toward theoretical understanding.

\section{Related Work}
\label{sec:related-work}
Mechanistic studies has established 
\emph{previous-token heads} (PTH) 
and \emph{induction heads} (IH) as canonical circuits enabling 
match-and-copy behaviors in language models 
\citep{Elhage21TransformerCircuits,Olsson22AnthropicInContextLearning}, 
with subsequent work refining probes, metrics, 
and interpretations of induction-style behavior 
\citep{%
CallumMcDougall23InductionHeadsIllustrated,
Chan22DataDistributionDrivesInContextLearning,
Singh23TransienceICL,
Musat25EmergenceInductionHeadsICL,
Bietti23TransformerMemoryViewpoint,
Nichani24OneGramTaskLearnsIH,
Chen24nGramTaskLearnsIH,
Rajaraman24ApproximationNGramsByTransformers,
Edelman24OnetoTwoGramMarkovLearning,
Akyurek24ICLLanguageGenbyAutomata,
Varre25LearningInContextNGrams,
Graves14NeuralTuringMachines,
Wang25LazyRichLearningIH,
Elhage21TransformerCircuits,
Olsson22AnthropicInContextLearning,
Crosbie24AblatingIHDeterioratesICL,
Singh24InductionHeadFormation,
Reddy24GaussianDataforMechanisticStudyClassificationICL,
Bhattamishra23ICLBooleanFunctions}. 
A growing set of studies now explicitly tracks or ablates induction-style 
heads in diverse in-context learning (ICL) settings, 
providing evidence that such circuits can matter beyond 
next-token prediction on natural text 
\citep{Singh24InductionHeadFormation,
Reddy24GaussianDataforMechanisticStudyClassificationICL,
Crosbie24AblatingIHDeterioratesICL,
Nichani24OneGramTaskLearnsIH,
Chen24nGramTaskLearnsIH,
Wang25LazyRichLearningIH,
Bhattamishra23ICLBooleanFunctions,
Musat25EmergenceInductionHeadsICL}. 

At the same time, many influential synthetic ICL 
studies focus on high-level task behavior or 
learning dynamics without explicitly probing for PTH/IH mechanisms 
\citep{Garg22InContextCaseStudy,
Akyurek23InContextLearningLinearModelsviaGD,
Oswald23InContextGD,
Lee24ICLwithOtherArchitectures,
Marion22AttentionLayers,
Xie22InContextLearningImplicitBayesianInference,
Makkuva24AttentionMarkov, 
Ba16FastWeightsAttendRecentPast, 
Fu23H3,
Poli23HyenaHierarchy, 
Arora24ZoologyRecall,
Min22ICLwithRandomLabels,
Chan22DataDistributionDrivesInContextLearning,
Singh23TransienceICL,
Dai23GPTMetaOptimization}. 
This distinction motivates controlled benchmarks 
where the emergence of induction circuits can 
be studied directly and reproducibly, 
while still enabling comparisons across architectures.

\paragraph{Match-and-copy benchmarks, and related ICL tasks.}
To study these mechanisms, a natural data structure involves sequences of the form:
\begin{equation}
    \label{eq:template-mc}
    \underbrace{\token_1, \dots, \token_{t_0}, \token_{t_0+1}, \dots, \token_{T}}_{\text{context}}, 
    \underbrace{\query}_{\text{query}} \to \underbrace{\token_{t_0+1}}_{\text{target}}
\end{equation}
Here, the model receives a query $\query$ 
(the final prompt token) that ``matches'' 
a context token embedding 
$\token_{t_0}$ (placed at an unknown position $t_0$) 
and must copy the subsequent token embedding $\token_{t_0+1}$ 
or a function thereof. 
This template has a long history in associative recall and retrieval-style sequence modeling, 
with variants spanning discrete and continuous tokens and 
multiple architectural choices 
\citep{Graves14NeuralTuringMachines,
Ba16FastWeightsAttendRecentPast,
Fu23H3,
Poli23HyenaHierarchy,
Arora24ZoologyRecall}. 
Match-and-copy is also related to 
broader ICL tasks where the model must learn a 
function $f$ (new at each prompt) from $n$ 
demonstrations $(x_i,f(x_i))_{i\le n}$ to predict $f(x)$ for a new input $x$, 
i.e., tasks of the form:
\begin{equation}
\label{eq:template-icl-rw}
(x_1,f(x_1),\dots,x_n,f(x_n),x)\ \mapsto\ f(x),
\end{equation}
where $f$ may be linear regression 
\citep{Garg22InContextCaseStudy,
Akyurek23InContextLearningLinearModelsviaGD,
Oswald23InContextGD,
Lee24ICLwithOtherArchitectures,
Marion22AttentionLayers}, 
a classifier 
\citep{Chan22DataDistributionDrivesInContextLearning,
Singh23TransienceICL,
Dai23GPTMetaOptimization,
Lee24ICLwithOtherArchitectures,
Singh24InductionHeadFormation,
Reddy24GaussianDataforMechanisticStudyClassificationICL,
Crosbie24AblatingIHDeterioratesICL}, 
a Markov/$n$-gram rule 
\citep{Xie22InContextLearningImplicitBayesianInference,
Bietti23TransformerMemoryViewpoint,
Makkuva24AttentionMarkov,
Nichani24OneGramTaskLearnsIH,
Chen24nGramTaskLearnsIH,
Rajaraman24ApproximationNGramsByTransformers,
Edelman24OnetoTwoGramMarkovLearning,
Akyurek24ICLLanguageGenbyAutomata,
Varre25LearningInContextNGrams}, 
or a Boolean function \citep{Bhattamishra23ICLBooleanFunctions}. 
Note that here, 
depending on how the $x_i$'s and $x$ are generated, 
there might not be an explicit ``match'' signal 
injected at data generation time 
between $x$ and one of the $x_i$'s: 
the model may need to interpolate/extrapolate 
based on similarity. 
In some regimes, however, 
ICL tasks can in principle be solved 
by a match-and-copy circuit 
(by retrieving the $x_i$ closest to $x$ and copying $f(x_i)$ as a best guess for $f(x)$).

\paragraph{Design choices and shortcut solutions.}
Synthetic benchmarks for match-and-copy and in-context learning (ICL) necessarily involve
many design choices: discrete versus continuous tokens; exact matches (equality) versus similarity-based matches;
explicit injection of match signals or not; 
the nature of the match signal (e.g., first- versus second-order statistics);
copying a successor token versus copying a function/label; 
the presence of distractors; and the choice of objective,
architecture, and evaluation protocol.
These choices strongly affect
whether the generic PTH$\to$IH solution is required to solve the task. 
Instead, heuristics 
based on memorization or simpler retrieval mechanisms can sometimes be used. 
E.g., if the set of possible match token values is finite and fixed globally across samples, 
it can be memorized, and used to trigger, only conditionally to these values, 
a PTH$\to$IH-like behavior, ending up with conditional PTH$\to$IH circuits 
that are not truly general-purpose match-and-copy mechanisms
\citep{Bietti23TransformerMemoryViewpoint,Fu23H3}. 
Simpler retrieval mechanisms can also be used when the 
token are distinguishable (not i.i.d.)---for instance, relying on
first-order statistics when 
the match token has a 
distinct mean compared to other context tokens
\citep{Marion22AttentionLayers}. 
When such shortcuts are available, 
the emergence of PTH$\to$IH circuits is not guaranteed 
and can be quite sensitive to other design choices: 
whether induction-style solutions arise robustly 
or only in narrow regimes can be 
delicate to characterize---e.g., the emergence of PTH$\to$IH circuits can 
depend on structural properties of the 
data \citep{Chan22DataDistributionDrivesInContextLearning,Bhattamishra23ICLBooleanFunctions},
can co-exist with other mechanisms, 
or can be transient over training time \citep{Singh23TransienceICL}.
Overall, these factors complicate the use of existing benchmarks to reliably probe specific capabilities
when designing new architectures, and make theoretical analysis more challenging.

\paragraph{Most closely related benchmark constructions.}
Our task instantiates \eqref{eq:template-mc} with i.i.d.\ 
Gaussian context tokens and defines the match through a
\emph{second-order} correlation between the query and a single hidden context token, 
which can occur arbitrarily far in the context. 
The first most related task  
is the original associative recall (AR) 
from \citet{Graves14NeuralTuringMachines}, 
which is essentially discrete match-and-copy 
with exact match and exact copy. 
Indeed, it instantiates \eqref{eq:template-mc} with 
i.i.d.\ discrete tokens sampled uniformly from a finite vocabulary, with 
the query equal to one of the context tokens, 
and the target equal to the successor token. 
This early work does not study PTH$\to$IH circuits (pre-Transformer era), 
and the GMC task can be seen as a continuous-valued analogue 
that isolates more generally correlation-based retrieval. 
We also note that most explicit constructions of PTH$\to$IH circuits 
use orthogonal embeddings of discrete tokens, 
and our choice of Gaussian tokens can be seen as a continuous relaxation of this idea
in high dimension, so this AR task from 
\citep{Graves14NeuralTuringMachines}
is definitely the most closely related to GMC.
Since then, AR benchmarks have been revisited and now span a variety of 
design 
\citep{Ba16FastWeightsAttendRecentPast,Fu23H3,Bietti23TransformerMemoryViewpoint,Arora24ZoologyRecall}. 
These more recent AR variants are less directly related to GMC: 
instead they instantiate discrete ICL tasks of the form \eqref{eq:template-icl-rw}. 
In particular, 
the match$\mapsto$target mapping $x\mapsto f(x)$ 
changes at each prompt,
so success cannot rely on memorizing a global match$\mapsto$target dictionary 
stored in the weights; 
these tasks all require in-context retrieval. 
GMC offers a complementary design point 
to these AR benchmarks with two key features:
(i) an architectural gap (Transformers succeed, other models like Hyena/state-space models (SSM)/RNN fail), and
(ii) its amenability to theoretical analysis of PTH$\to$IH emergence. 
In contrast, 
(i) AR benchmarks admit non-attention architectures that 
can solve them reliably. Indeed, 
Hyena/SSM have been explicitly designed to 
close the gap with attention models \citep{Fu23H3,Poli23HyenaHierarchy}
on these AR benchmarks. 
Moreover, (ii) 
there are no existing 
theoretical analyses 
of the optimization dynamics when training 
on AR benchmarks 
\citep{Bietti23TransformerMemoryViewpoint,Arora24ZoologyRecall}, 
which we believe is mostly due to the fact that 
in these AR settings, 
the
PTH$\to$IH circuits do not always emerge reliably across optimization and architectural choices. 
This makes it difficult to isolate a meaningful regime for 
theoretical guarantees. 
The second most related existing task appears in \citet{Marion22AttentionLayers}: 
single-location Gaussian regression. 
It can be interpreted as an instance of \eqref{eq:template-mc} with i.i.d.\ Gaussian tokens, 
but with a fixed deterministic query across samples, 
injected as a mean shift to the matched token. 
Because the matched token is not identically 
distributed as the other context tokens, 
this does not naturally 
promote a PTH$\to$IH circuit,
whose key operation is correlation-based retrieval,
as the model can simply learn to identify 
the mean-shifted token via first-order statistics.

\paragraph{Positioning and novelty.}
Rather than proposing a ``new match-and-copy task'' per se 
or claiming superiority over existing setups, 
our goal is to identify a point in the design space where 
several desiderata are simultaneously satisfied, 
with the next question in mind:
\emph{What is a simple, inexpensive and theory-friendly benchmark 
for testing architectures on long-range correlation-based 
retrieval via PTH$\to$IH-like circuits?}
We come up with GMC, 
that we see as complementary to prior benchmarks. 
\textbf{GMC is a synthethic match-and-copy benchmark where 
(i) retrieval can only exploit long-range second-order correlations, 
(ii) the PTH$\to$IH circuit emerges reliably across optimization 
and architectural choices, 
with qualitative similarities 
to that observed in large language models, 
(iii) attention and non-attention models can 
be compared on their ability to perform 
long-range correlation-based retrieval 
without shortcut solutions, 
and 
(iv) the setup is amenable to theoretical analysis.} 
\Cref{tab:benchmark-comparison} in 
\Cref{app:related-works-table}
summarizes how 
the most closely related 
prior benchmarks
instantiate key design choices and 
highlights the specific combination 
realized by GMC. 

\section{Empirical Properties of Gaussian Match-and-Copy}
\label{sec:expes}

We now study how sequence models trained on GMC behave in practice. 
All experiments minimize the mean-squared error
\begin{equation}
\label{eq:mse-loss-expes}
  \frac1{n\dout}\sum_{i=1}^{n}\lVert
    \target^{(i)} - 
    \model(x^{(i)};W)
  \rVert_2^{2}
\end{equation}
over a training set of $n$ samples, 
where 
$x^{(i)}=(\token^{(i)}_1,\dots,\token^{(i)}_T,\query^{(i)})$ 
and $\target^{(i)}$ form the $i$-th GMC sample and 
$\model(\cdot;W)$ is a model with trainable weights $W$. 
For experiments, we normalize by $n\dout$ to make the loss 
comparable across different settings. 

Our experiments address three questions:
(i) Do Transformers trained on GMC 
reproduce the qualitative behavior seen in large scale 
LLMs, i.e., do previous-token and induction-head circuits emerge, and does this coincide with a sharp change in training dynamics?
(ii) Do the learned mechanisms transfer beyond the Gaussian training distribution?
(iii) Do competing sequence models perform as well as Transformers on GMC when given equal resources?

\subsection{Transformers solve GMC via PTH\,$\to$\,IH circuits}
\label{subsec:MCviaPTH+IH}

In the settings we tested, Transformers reliably learn the GMC task, and when trained long enough they approach the
irreducible noise floor induced by the observation noise.
Crucially, we observe the characteristic loss drop 
documented in LLMs trained on large-scale web data
\citep{Olsson22AnthropicInContextLearning}, 
together with the spontaneous emergence of PTH and IH circuits:

\begin{enumerate}
    \item \textbf{Loss Drop:} The loss follows a plateau-drop-plateau pattern 
    (\Cref{fig:phase-change-llama3}).
    \item \textbf{Coincidental PTH\,$\to$\,IH Formation:} 
    The \textbf{PTH Score} and \textbf{IH Scores} 
    (defined below) follow the same plateau-drop-plateau pattern. 
    During the initial plateau, scores remain close to random baselines. 
    Coinciding exactly with the loss drop, 
    they rise sharply to values close to $1$ (perfect PTH and IH behavior) 
    and remain stable thereafter.
\end{enumerate}

Layer-wise analysis confirms the expected circuit topology: 
PTHs always appear in an earlier layer than IHs 
(e.g., in our 2-layer default: Layer 0 specializes as PTH and Layer 1 as IH), 
adhering to the composition mechanism 
originally described in \citet{Elhage21TransformerCircuits,Olsson22AnthropicInContextLearning}. 
In \Cref{app:mechanistic-validation}, 
we provide a more detailed per-head analysis 
to confirm that the identified heads 
indeed look similar in function and structure to the
ones documented in LLMs. 
Overall, GMC-trained Transformers 
thus reproduce key qualitative aspects 
of induction head formation seen in 
LLMs trained on natural language data,
despite the drastic difference 
in scale and data distribution.

\begin{figure}[ht]
    \centering
    \includegraphics[width=0.5\linewidth]{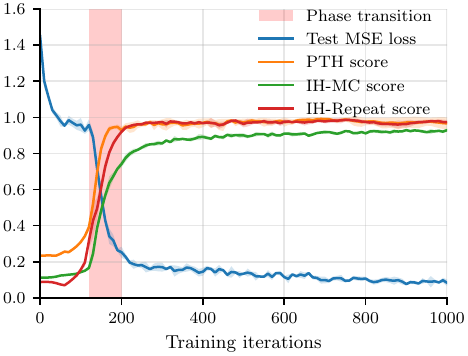}
    \caption{\textbf{Co-occurrence of Loss Drop and Circuit Emergence.} 
    The sudden drop in test loss aligns perfectly with the saturation of PTH and IH attention scores. 
    Here:
    2-layer Llama 3 trained on GMC with $T=8$, 
$\din=16$, $\Cqe=\frac{1}{(1.2)^2}\I$, 
and hyperparameters from \Cref{tab:mc-experiment-specs} in the appendix. 
See \Cref{app:robustness-emergence} for other settings 
(varying depth, $T$, $\din$, and $\Cqe$), 
all showing the same behavior. 
}
    \label{fig:phase-change-llama3}
\end{figure}

\paragraph{Cost and robustness of PTH\,$\to$\,IH emergence.} 
A central motivation for GMC is to provide 
inexpensive setups where induction-style 
circuits can be studied
in a controlled and reproducible way.
To keep the benchmark accessible,
we therefore focus on small-dimensional 
regimes that already exhibit a clear 
loss-drop together with PTH and IH scores. 
We also ran additional sweeps 
over Llama 3 and GPT-2 architectures 
across various depths 
($n_\mathrm{layer}\in\{2,4,8,16,32\}$), 
context length (up to $T=1024$), 
data dimension (up to $\din=1024$), 
correlation strength and
optimization 
hyperparameters (reported in \Cref{app:robustness-emergence}).
While suboptimal choices can affect the \emph{timing} 
and \emph{sharpness} of the transition (or lead to non-convergence),
when a model solves the task in our experiments, 
it does so with high PTH/IH scores 
that rise concurrently with the loss drop.

\paragraph{Head scores definition.}
Following the seminal work by 
\citet{Olsson22AnthropicInContextLearning}, 
we probe attention heads using scores computed 
from attention coefficients $a_{t,s}$ 
(from query position $t$ to key position $s$). 
A \emph{previous-token head} concentrates its attention to the 
immediately preceding position: 
the \emph{PTH score} of a head is $a_{t,t-1}$ 
averaged over query positions $t$. 
We report the maximum PTH score over all heads. 
An \emph{induction head} links a query token to the 
successor of a matched position. 
We use the \emph{IH-MC score}, tailored to GMC, 
which measures attention from the query to the 
successor of 
the hidden match ($a_{\text{last}, \text{match}+1}$), 
and the \emph{IH-Repeat} score 
\citep{Olsson22AnthropicInContextLearning}, 
which measures attention on duplicated sequences.
All scores are in $[0, 1]$; 
precise definitions and random baselines 
are provided in \Cref{app:pth-ih}.

\subsection{Transfer beyond Gaussian data}
Does GMC teach a general match-and-copy mechanism or just Gaussian statistics? 
We take a Llama 3 model pretrained \emph{only} on GMC, 
freeze its internals (hence the induction circuit), 
replace 
the linear input/output embeddings 
by new ones with compatible dimensions 
with the Omniglot dataset \citep{OmniglotDataset},
which is a standard ICL benchmark 
\citep{Chan22DataDistributionDrivesInContextLearning,
Singh23TransienceICL,Singh24InductionHeadFormation,Lee24ICLwithOtherArchitectures},  
and fine-tune only these embeddings 
on this new task. 
Details of the setup are in \Cref{app:downstream-omniglot}. 
\textbf{Result in \Cref{fig:downstream-omniglot}:} 
The GMC-pretrained model rapidly learns to solve Omniglot 
just by adapting the input/output embeddings, 
suggesting the match-and-copy mechanism 
learned on GMC 
is abstract and transferable 
to other data distributions beyond Gaussians.
Compared to a baseline where we train the full model 
from scratch on Omniglot, 
the GMC-pretrained model reaches $0.9$ 
accuracy with at least $3\times$ fewer FLOPs, 
demonstrating that the learned induction 
circuit also helps efficiency of downstream learning. 

\begin{figure}[ht]
  \centering
  \includegraphics[width=\linewidth]{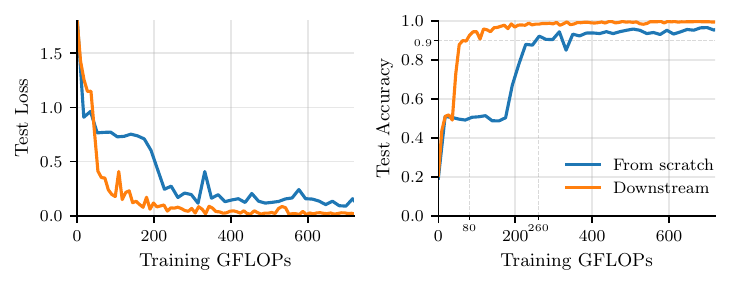}
  \caption{\textbf{Transfer Learning.} A model pretrained on GMC 
  can reuse its attention layers to solve Omniglot few-shot classification via 
  lightweight embedding adaptation.}
  \label{fig:downstream-omniglot}
\end{figure}

\subsection{GMC separates attention from non-attention sequence models}
\label{subsec:gmc-comparison-archis}
Finally, we compare Transformers against Gated RNNs (GRU) and State-Space Models (S4, H3, Hyena), 
at matched parameter counts ($\approx 2.5-3$M), inference GFLOPS 
($\approx 24$), and training FLOPs 
(see \Cref{app:comparison-other-archs} for details).
\textbf{Result in \Cref{fig:comparison-other-archs}:} 
While Transformers hit the noise floor, 
all non-attention baselines---including RNNs and SSM-based or convolutional sequence models---plateau 
at significantly higher loss, 
even when granted $30\times$ more training steps (\Cref{app:comparison-other-archs}), 
or when exploring different compute-equivalent hyperparameter trade-offs 
(e.g., depth versus width).

Note that H3 and Hyena have 
both been designed to 
perform well on associative recall tasks 
\citep{Fu23H3,Poli23HyenaHierarchy}, 
yet \Cref{fig:comparison-other-archs} shows 
they still struggle on GMC. 
This suggests that small differences 
in the task structure (as highlighted in \Cref{sec:related-work}) 
can
matter, 
and that long-range correlation-based retrieval is 
still a  
regime where attention-based architectures have a clear advantage. 
This could serve as a useful diagnostic for future 
architecture design.

\begin{figure}[ht]
    \centering
    \includegraphics[width=0.5\linewidth]{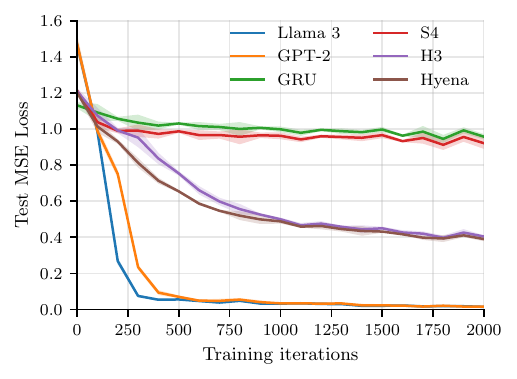}
    \caption{\textbf{Architecture Gap.} Transformers solve GMC; alternatives struggle.} 
    \label{fig:comparison-other-archs}
\end{figure}

\section{Implicit Bias in a Minimal PTH$\to$IH Attention Model}
\label{sec:theory-implicit-bias}

\subsection{Overview and main result}
\label{subsec:theory-overview}

We zoom in on a simplified setting where a minimal attention model can realize a
PTH$\to$IH circuit on GMC.
In this regime, the task 
trained with MSE \eqref{eq:mse-loss-expes} 
can \emph{a priori} be solved in many different ways:
there may exist finite-norm global minimizers, and even when there are none,
there is typically a cone of separating directions along which the parameters
can diverge while the loss vanishes. 
Empirically, in a range of regimes we observe that 
gradient descent exhibits a beneficial implicit bias:
the parameters diverge while aligning with a 
\emph{max-margin} separator, 
which drives attention toward hard match selection
and yields margins that keep increasing over training.
We confirm this behavior theoretically in a specific setting under explicit assumptions. 
The proof is deferred to \Cref{app:mse-bias}. 

\begin{theorem}[Max-margin implicit bias for MSE in the minimal model]
\label{thm:mse-bias-main}
Under the technical assumptions collected in \Cref{subsec:mse-technical-assumptions},
for GMC data conditioned on the event of \Cref{subsec:gmc-geometric-event} (which holds with
high probability), the following holds.
Consider gradient descent on the empirical MSE loss for the minimal model of
\Cref{subsec:minimal-setting-empirical}.
If along the iterates $(\Wkq(\tau))_{\tau\ge 0}$ we have
\begin{equation}
\label{eq:success-conditioning}
\MSE(\Wkq(\tau)) \to 0
\text{ and }
\sum_{\tau=0}^{\infty}\|\nabla \MSE(\Wkq(\tau))\|_F^2 < \infty,
\end{equation}
then:
\begin{equation}
\label{eq:main-bias-direction}
\frac{\Wkq(\tau)}{\|\Wkq(\tau)\|_F}
\ \longrightarrow\
\frac{\maxmargin}{\|\maxmargin\|_F},
\end{equation}
where $\maxmargin$ is the max-margin solution defined in
\Cref{subsec:minimal-setting-empirical}.
Moreover, $\|\Wkq(\tau)\|_F$ diverges at a rate 
$\frac{\|\maxmargin\|_F}{2} \log(\tau)$, 
and the induced 
margins increase accordingly (hence attention becomes asymptotically hard on the
true match).
\end{theorem}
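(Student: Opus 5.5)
The approach follows the Soudry et al. implicit-bias analysis for logistic-type losses, adapted to softmax attention trained with MSE.

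\textbf{Reduction to an exponential-tail loss.} In the minimal model of \Cref{subsec:minimal-setting-empirical}, the prediction for sample $i$ is a softmax-weighted average of value vectors, with logits $\langle \Wkq, \mathbf Z^{(i)}_t\rangle$ linear in $\Wkq$. Here $\mathbf Z^{(i)}_t$ is the outer product of (previous-token-shifted) key and query features. Writing margins $m^{(i)}_t(\Wkq)=\langle \Wkq, \mathbf Z^{(i)}_{t_0}-\mathbf Z^{(i)}_t\rangle$, we expand the MSE. Its leading part is a weighted sum of $e^{-m_t^{(i)}}$, with weights $c^{(i)}_t=\|\WV(\token_{t+1}-\token_{t_0+1})\|^2$ up to the noise term. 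The remaining parts are cross terms and higher-order terms in the non-match attention weights, and these are exactly what the MSE-specific assumptions of \Cref{subsec:mse-technical-assumptions} are meant to control. The geometric event of \Cref{subsec:gmc-geometric-event} serves two purposes. First, it guarantees the separability constraints $m_t^{(i)}\ge 1$ are feasible, so $\maxmargin$ exists and is unique as the minimizer of $\|\cdot\|_F$ under these constraints. Second, it rules out finite-norm minimizers. Consequently, $\MSE\to0$ forces $\min_{i,t} m^{(i)}_t(\Wkq(\tau))\to\infty$ and $\|\Wkq(\tau)\|_F\to\infty$. If the noise floor is nonzero, we assume the conditioning means the reducible part vanishes.

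\textbf{Asymptotic gradient structure.} Once all margins are large, the gradient is
\[
-\nabla\MSE(\Wkq)=\sum_{i,t\neq t_0} c^{(i)}_t e^{-m^{(i)}_t}(\mathbf Z^{(i)}_{t_0}-\mathbf Z^{(i)}_t)\,(1+o(1)) + R(\Wkq),
\]
where $R$ is the interaction term, assumed of strictly smaller exponential order. The argument then follows Soudry et al. We take the ansatz $\Wkq(\tau)=\maxmargin\log\tau+\tilde{\mathbf w}+\mathbf r(\tau)$. Here $\tilde{\mathbf w}$ solves $\eta\sum_{(i,t)\in \mathcal S} c^{(i)}_t e^{-\langle\tilde{\mathbf w},\Delta \mathbf Z^{(i)}_t\rangle}\Delta\mathbf Z^{(i)}_t=\maxmargin$ over the support vectors $\mathcal S$. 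Existence of $\tilde{\mathbf w}$ uses the KKT conditions for $\maxmargin$ and, generically, positive dual variables on support vectors. We then show $\|\mathbf r(\tau)\|$ stays bounded. The standard route is to bound $\|\mathbf r(\tau+1)\|^2-\|\mathbf r(\tau)\|^2$ by a summable sequence. Non-support-vector terms decay like $\tau^{-\theta}$ with $\theta>1$ because of the strict margin gap. The support-vector terms give a nonpositive inner product by convexity-type monotonicity of $e^{-x}$. The error terms are bounded using the summability hypothesis $\sum_\tau\|\nabla\MSE\|_F^2<\infty$, which replaces the smoothness/step-size condition, together with the assumed bounds on $R$.

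\textbf{Conclusion and main obstacle.} Boundedness of $\mathbf r$ yields the directional convergence \eqref{eq:main-bias-direction}. It also gives $\|\Wkq(\tau)\|_F=\|\maxmargin\|_F\log\tau+O(1)$, with the stated constant $\tfrac12$ coming from the normalization of the MSE gradient, namely the factor $2$ in $\nabla\|\cdot\|^2$ absorbed into $\eta$ and $c$. Consequently the margins grow like $\log\tau$, and the softmax weight on the true match tends to one. The main obstacle is the MSE interaction term $R$. Unlike the logistic loss, MSE produces products $e^{-m_t}e^{-m_{s}}$ and sign-indefinite cross terms from value-vector inner products, and these could in principle compete with the leading support-vector terms. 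I would handle this by showing that, on the geometric event, these terms are $O(e^{-2\min m})=O(\tau^{-2})$ along the ansatz. That rate makes their contribution to the $\mathbf r$-recursion summable. Where this cannot be derived directly, it is precisely the content of the technical assumptions to be invoked.
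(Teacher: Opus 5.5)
\textbf{Genuine gap: the leading MSE term has the wrong exponential order.} In the noiseless minimal model the residual is $r^{(i)}=\sum_{s\neq y_i}a^{(i)}_{s+1}d_{i,s}$. Hence $\|r^{(i)}\|^2=\sum_{s,t\neq y_i}a^{(i)}_{s+1}a^{(i)}_{t+1}\langle d_{i,s},d_{i,t}\rangle$ is \emph{quadratic} in the softmax tail, of order $e^{-2m}$. Writing it as $\sum_t c_t e^{-m_t}$ with $c_t=\|d_t\|^2$ confuses $\|\sum_s a_{s+1}d_s\|^2$ with $\sum_s a_{s+1}\|d_s\|^2$. Accordingly, the gradient coefficients are $\pi^{\MSE}_{i,t}=a^{(i)}_{t+1}\big((1-a^{(i)}_{y_i+1})h_{i,t}-\|r^{(i)}\|^2\big)\sim h_{i,t}\,e^{-m_{i,t}}S_i$. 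There is no $e^{-m}$ term at all, and the products $e^{-m_t}e^{-m_s}$ that you relegate to a lower-order remainder $R$ \emph{are} the leading term.

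So your ansatz $\maxmargin\log\tau+\tilde{\mathbf w}$ is the wrong one. Along it the true gradient is $O(\tau^{-2})$, which cannot track the target increment $\approx\maxmargin/\tau$. Worse, the ansatz is self-inconsistent: if $\|\nabla\MSE\|=O(\tau^{-2})$ along the iterates, then $\sum_\tau\eta\|\nabla\MSE\|<\infty$ and $\Wkq(\tau)$ would converge, contradicting the margin divergence forced by identifiability. Hence $\mathbf r(\tau)$ is not bounded; on the actual trajectory it grows like $-\tfrac12\log(\tau)\maxmargin$.

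Your explanation of the constant $\tfrac12$ also fails. Multiplicative constants such as $\eta$, or the $2$ from $\nabla\|\cdot\|^2$, only shift the correction vector by $O(1)$: solving $f'=Ae^{-f}$ gives $f=\log\tau+O(1)$ for every $A>0$. Your own conclusion $\|\Wkq(\tau)\|_F=\|\maxmargin\|_F\log\tau+O(1)$ in fact contradicts the statement. The $\tfrac12$ comes from the quadratic structure, $f'\asymp e^{-2f}$. This is why the paper uses $w^*(\tau)=\tfrac12\log(\tau)\,\hat w+\tilde w$, with $\tilde w$ solving the product-form system $\eta h_{i,t}e^{-\tilde w^\top\tilde x_{i,t}}\sum_{s\in\ISV(i)}e^{-\tilde w^\top\tilde x_{i,s}}=\tfrac12\alpha_{i,t}$, which is solvable by linear independence (A2) of the support vectors.

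\textbf{The product structure also breaks the two steps you import from the logistic case.} First, the weights are not data constants. They are the trajectory-dependent $h_{i,t}(w)=\sum_{s\neq y_i}\langle d_{i,t},d_{i,s}\rangle\,a^{(i)}_{s+1}/(1-a^{(i)}_{y_i+1})$, which depend on how the distractor mass is split. Their positivity is where the half-space condition (A4) enters. The existence of their limits at a summable rate is assumption (A7), and it is needed even to define $\tilde w$.

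Second, after substituting the correct ansatz, the support-vector drift becomes $\eta\tau^{-1}\sum_i\sum_{t,s\in\ISV(i)}h_{i,t}p_{i,t}p_{i,s}u_{i,t}\big(e^{-u_{i,t}-u_{i,s}}-1\big)$. Only the diagonal terms are nonpositive by monotonicity of the exponential. Off-diagonal pairs contribute positively whenever $h_{i,t}u_{i,t}+h_{i,s}u_{i,s}$ and $u_{i,t}+u_{i,s}$ have opposite signs. The paper symmetrizes in $(s,t)$ and invokes the alignment assumption (A8) to keep these contributions a minority. This yields a restoring force $-\tfrac{c}{\tau}\sum\Phi(u_{i,t})$, which then absorbs the remainders controlled by (A7), the tail domination (A9), and the bound $\|r^{(i)}\|\le C(1-a^{(i)}_{y_i+1})$ from the identifiability lemma. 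Deferring this to ``the technical assumptions'' without identifying the sign problem leaves the central step of your argument unsupported.
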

The conclusion is conditional: 
it applies to trajectories that both 
succeed in driving the MSE to zero and satisfy the
regularity conditions in \Cref{subsec:mse-technical-assumptions}. 
In \Cref{subsec:assumptions-hold} we provide an empirical
sanity check for these conditions on a representative successful run.

The remainder of this section is devoted to (i) specifying the minimal setting
and the empirical phenomena we aim to explain, 
(ii) stating the event and justifying it holds whp for GMC data, and
(iii) isolating the remaining MSE-specific technical assumptions and explaining
how they differ from the standard max-margin theory for monotone margin losses 
\cite{Soudry18ImplicitBiasCE,JiTelgarsky19Nonseparable,
TarzanaghLZO23MaxMarginTokenSelection,Tarzanagh23TransformersAsSVM,
Vasudeva25ImplicitBiasRatesSelfAttention}.  
This organization is meant to clarify which 
parts of the conclusion are driven by typical GMC geometry (point (ii)),
and which parts rely on additional 
MSE-specific asymptotic regularity conditions once optimization succeeds (point (iii)).

\subsection{Simplified setting and empirical observations}
\label{subsec:minimal-setting-empirical}

\paragraph{Simplified task: exact match, exact copy.}
To isolate the core retrieval mechanism, we strip away the stochastic noise in GMC
and work with the noiseless match-and-copy template:
\begin{equation}
\label{eq:exact-mc}
\query^{(i)}=\token^{(i)}_{t_0^{(i)}}
\qquad\text{and}\qquad
\target^{(i)}=\token^{(i)}_{t_0^{(i)}+1},
\end{equation}
for $n$ training sequences $x^{(i)}=(\token^{(i)}_1,\dots,\token^{(i)}_T,\query^{(i)})$. 

\paragraph{Minimal model: two-layer attention-only, with a frozen PTH.}
The smallest architecture that can implement a PTH$\to$IH circuit is a 2-layer
attention-only model with one head per layer.
We freeze the first layer to implement an exact previous-token head (PTH). This is
equivalent to shifting the keys of the second layer by one position; hence, we
work directly with the standard \emph{shifted-key} parametrization \citep{Singh24InductionHeadFormation,Wang25LazyRichLearningIH}. 
We fix $\WV$ to the ground-truth value map 
used by GMC (and in the present simplified exact-copy setup, $\WV=\I$). 
This isolates the non-convex match-selection mechanism in $\Wkq$. 

Fixing $\WV$ is justified because the problem is \emph{strictly convex} with respect to $\WV$, 
and it does not interact with the match selection mechanism. 
In principle, $\WV$ could be recovered optimally in a first optimization stage. 
Prior works typically do optimization in stages \cite{Nichani24OneGramTaskLearnsIH,Chen24nGramTaskLearnsIH,Wang25LazyRichLearningIH}. 
Here we directly focus on the second stage, 
optimizing over the merged key-query matrix
\begin{equation}
\label{eq:merged-kq}
\Wkq := \WK^\top \WQ \in \R^{\din\times\din}.
\end{equation}
This isolates the \emph{non-convex} part of the model, 
responsible for selecting the match
position. 

Concretely, for context $\Ectx=(\token_1,\dots,\token_T)$ and query $\query$, the
model output is
\begin{equation}
\label{eq:minimal-model-output}
\begin{multlined}
\model(\Wkq;\Ectx,\query)
=\\
\sum_{t=2}^{T}
\underbrace{\softmax\!\Big(\big(\token_s^\top \Wkq \query\big)_{s=1}^{T-1}\Big)_{t-1}}_{=:a_t(\Wkq;\Ectx,\query)}
\ \WV\,\token_{t}.
\end{multlined}
\end{equation}
This is exactly the computation performed by an IH that reads a PTH mark: it
scores predecessors via $\token_s^\top \Wkq \query$ and copies the successor
token through the one-step shift 
\citep{Singh24InductionHeadFormation,Wang25LazyRichLearningIH}.

\paragraph{Objective: empirical MSE.}
We train with the empirical MSE loss, which is the objective used throughout the empirical study in \Cref{sec:expes}:
\begin{equation}
\label{eq:mse-minimal}
\MSE(\Wkq)
:=
\sum_{i=1}^n
\Big\|
\target^{(i)}-\model(\Wkq;\Ectx^{(i)},\query^{(i)})
\Big\|_2^2.
\end{equation}
We focus on MSE (rather than a cross-entropy over the match position $\CE(\Wkq)=-\sum_i \log a_{t_0^{(i)}+1}(\Wkq;\Ectx^{(i)},\query^{(i)})$
) because our
benchmark is a regression task 
so it is the natural objective. And more importantly, because there is no 
architecture-agnostic analogue of such a cross-entropy for non-attention models (no attention weights to interpret as match probabilities),
so MSE is the meaningful objective for the architecture comparisons in
\Cref{subsec:gmc-comparison-archis}.

\paragraph{Margins and the associated max-margin problem.}
For each sample $i$ and distractor position $t\neq t_0^{(i)}$, define the match
margin
\begin{equation}
\label{eq:margin}
\margin_{i,t}(\Wkq)
:=
\big(\token^{(i)}_{t_0^{(i)}}-\token^{(i)}_{t}\big)^\top \Wkq\,\query^{(i)}.
\end{equation}
Large positive margins imply $a_{t_0^{(i)}+1}(\Wkq;\Ectx^{(i)},\query^{(i)})\approx 1$
and thus $\model(\Wkq;\Ectx^{(i)},\query^{(i)})\approx \WV\token^{(i)}_{t_0^{(i)}+1}=\target^{(i)}$.

We define the max-margin separator by the 
SVM problem:
\begin{equation}
\label{eq:svm-minimal}
\begin{multlined}
\maxmargin
=
\argmin_{\Wkq}\ \frac12\|\Wkq\|_F^2
\\
\text{s.t. }
\margin_{i,t}(\Wkq)\ge 1
\ \ \forall i\in[n],\ \forall t\neq t_0^{(i)}.
\end{multlined}
\end{equation}

\paragraph{Empirical observations.} 
When we train \eqref{eq:mse-minimal} by gradient descent in this minimal setting,
we observe the following behavior in regimes where training attains very small loss and alignment is visible within our iteration budget:
\begin{enumerate}
\item $\MSE(\Wkq(\tau))$ decreases to a small value and the cumulative squared gradient norm remains bounded,
matching the success condition \eqref{eq:success-conditioning};
\item $\|\Wkq(\tau)\|_F$ grows over training;
\item the normalized direction $\Wkq(\tau)/\|\Wkq(\tau)\|_F$ becomes increasingly aligned with
$\maxmargin/\|\maxmargin\|_F$;
\item the margins $\min_{i,t\neq t_0^{(i)}}\margin_{i,t}(\Wkq(\tau))$ increase, and
attention becomes hard on the correct match.
\end{enumerate} 
An example is given in \Cref{subsec:assumptions-hold}. 
In other configurations, the cosine similarity between
$\Wkq(\tau)/\|\Wkq(\tau)\|_F$ and $\maxmargin/\|\maxmargin\|_F$
does not reach $1$ within the iteration budget.
While it reaches the range $[0.8,0.95]$ in general, 
it is unclear if it reflects slow convergence 
or if it converged toward a different asymptotic direction, 
hence settled that way. 

Recent work on implicit bias in attention models with monotone margin losses
shows that some regimes can converge to \emph{locally optimal} max-margin directions
rather than the globally optimal one, depending on data geometry and parameterization \cite{Tarzanagh23TransformersAsSVM}.
Whether analogous phenomena occur under MSE remains open.
In this work, we  focus on identifying and explaining regimes
where clear global max-margin alignment is observed within a finite horizon,
and on characterizing the properties of trajectories that lead to this behavior.

\subsection{A geometric event on GMC data}
\label{subsec:gmc-geometric-event}

A key difference between MSE and the usual monotone margin losses for max-margin implicit bias results \cite{Soudry18ImplicitBiasCE,JiTelgarsky19Nonseparable,TarzanaghLZO23MaxMarginTokenSelection,Tarzanagh23TransformersAsSVM,Vasudeva25ImplicitBiasRatesSelfAttention} (e.g., logistic
or exponential losses in binary classification, 
discussed in \Cref{subsec:mse-bias-ce}) 
is that MSE can admit finite-norm minimizers. 
Even when it does not, the margins need not keep increasing \emph{monotonically} along
the trajectory, as MSE gradients contain interaction terms absent in monotone
margin losses. 
Here we identify a \emph{geometric event} on the training set under which:
(i) there is \emph{no} finite-norm minimizer for \eqref{eq:mse-minimal}, and
(ii) any successful optimization trajectory must enter a regime where 
margins keep increasing \emph{monotonically}. 

\paragraph{The event.}
We define an explicit event $\mathcal{E}_{\mathrm{GMC}}$ on the $n$ sampled GMC
sequences. 
It is purely geometric, as it can be expressed as 
a set of inequalities involving the token Gram matrices and the 
difference vectors
$\{\token^{(i)}_{t_0^{(i)}}-\token^{(i)}_{t}\}_{i,t\neq t_0^{(i)}}$. 
We defer its full statement to 
\Cref{app:mse-bias} 
and only summarize its role here:
it rules out the existence of a finite-norm zero-loss interpolant
(\Cref{subsec:identifiability-margin-divergence}), 
and it enforces a favorable geometry under 
which the leading terms in the dynamics
promote increasing margins \emph{monotonically}  
in the successful regime (see the discussion above \Cref{lem:mse-asymptotics-dvg-margins}).

\paragraph{High-probability guarantee on GMC.}
We prove that $\mathcal{E}_{\mathrm{GMC}}$ holds with high probability under the
GMC sampling model (stated for the exact-match/exact-copy specialization considered here).
A precise statement and proof are given in \Cref{subsec:gaussian-data}.
The probability improves as the geometry becomes more overparameterized
in the sense relevant to match selection 
(informally: larger $\din$ relative to
the effective number of distractors). 
It justifies why GMC
instances \emph{typically} land in a regime 
where divergence and margin growth
are unavoidable, even though 
alternative behaviors are conceivable in other data
geometries. 

\subsection{MSE-specific technical assumptions}
\label{subsec:mse-technical-assumptions}

With $\mathcal{E}_{\rm geom}$ in place, the 
remaining obstacles are not about GMC geometry
but about the optimization dynamics induced by \eqref{eq:mse-minimal}. 
For margin-based classification losses (like logistic/exponential) \cite{Soudry18ImplicitBiasCE,JiTelgarsky19Nonseparable,TarzanaghLZO23MaxMarginTokenSelection,Tarzanagh23TransformersAsSVM,Vasudeva25ImplicitBiasRatesSelfAttention}, 
the gradient is a positive combination of separating directions, with \emph{fixed} coefficients 
multiplied by a monotone function of the margins. 
For MSE, the gradient is also a positive combination of separating directions (thanks to $\mathcal{E}_{\rm geom}$), keeping margins growing, 
but the coefficients depend on the $\ell^2$ residuals, 
and includes cross-terms induced by the square (this comparison is discussed in details in \Cref{rem:ce-comparison}). 
To recover a max-margin limit, this motivates the next assumption. We defer the precise statement to \Cref{app:mse-bias}. 

\begin{assumption}[Asymptotic stabilization of MSE/softmax interactions]
\label{ass:mse-stabilization}
Asymptotically along the trajectory, 
(i) the gradient coefficients 
that are not monotone functions of the margins 
stabilize asymptotically so that the dynamics become
effectively margin-driven, and
(ii) the cross-terms become negligible relative to the main margin terms so that the gradient becomes dominated by a stable set of support constraints. 
\end{assumption}

\Cref{ass:mse-stabilization} is the main MSE-specific hypothesis.
It replaces automatic simplifications enjoyed by
monotone margin losses: 
it rules out interference from MSE-specific cross-terms
and enforces a support-dominated asymptotic regime in which max-margin arguments can be adapted. 
In \Cref{subsec:assumptions-hold} we provide 
an empirical sanity check of these conditions on a representative run
where strong alignment is observed within the training budget.

\section{Conclusion}
\label{sec:conclusion}

GMC provides a simple and theory-friendly benchmark in which retrieval circuits emerge reliably,
separate architectures by their retrieval capabilities, and admit analytical study. 
In a minimal attention setting, we show that despite many \emph{a priori} possible regression
solutions, there exists a regime in which successful training is accompanied by a max-margin implicit bias that drives attention toward
hard match selection. 

Open questions include how this bias changes when optimizing the unmerged key-query
parameterization (e.g., toward a nuclear-norm max-margin solution \citep{Tarzanagh23TransformersAsSVM}), 
whether the analysis extends beyond the minimal setting, and what additional
ingredients are required to explain the sharp loss-drop dynamics observed in deeper Transformers.

\section*{Acknowledgements}

We thank Pierre Marion for insightful discussions.  
This work was supported in part by the Swiss 
State Secretariat for Education, Research and Innovation (SERI)
under contract number MB22.00027.

\bibliographystyle{plainnat}
\bibliography{references}

@article{Phuong22FormalTransformers,
  author       = {Mary Phuong and
                  Marcus Hutter},
  title        = {Formal Algorithms for Transformers},
  journal      = {CoRR},
  volume       = {abs/2207.09238},
  year         = {2022},
  url          = {https://doi.org/10.48550/arXiv.2207.09238},
  doi          = {10.48550/ARXIV.2207.09238},
  eprinttype    = {arXiv},
  eprint       = {2207.09238},
  bibsource    = {dblp computer science bibliography, https://dblp.org}
}

@inproceedings{Cho14GRU,
  author       = {Kyunghyun Cho and
                  Bart van Merrienboer and
                  {\c{C}}aglar G{\"{u}}l{\c{c}}ehre and
                  Dzmitry Bahdanau and
                  Fethi Bougares and
                  Holger Schwenk and
                  Yoshua Bengio},
  editor       = {Alessandro Moschitti and
                  Bo Pang and
                  Walter Daelemans},
  title        = {Learning Phrase Representations using {RNN} Encoder-Decoder for Statistical
                  Machine Translation},
  booktitle    = {Proceedings of the 2014 Conference on Empirical Methods in Natural
                  Language Processing, {EMNLP} 2014, October 25-29, 2014, Doha, Qatar,
                  {A} meeting of SIGDAT, a Special Interest Group of the {ACL}},
  pages        = {1724--1734},
  publisher    = {{ACL}},
  year         = {2014},
  url          = {https://doi.org/10.3115/v1/d14-1179},
  doi          = {10.3115/V1/D14-1179},
  bibsource    = {dblp computer science bibliography, https://dblp.org}
}

@inproceedings{Gu22S4StructuredStateSpaceSequenceModel,
  author       = {Albert Gu and
                  Karan Goel and
                  Christopher R{\'{e}}},
  title        = {Efficiently Modeling Long Sequences with Structured State Spaces},
  booktitle    = {The Tenth International Conference on Learning Representations, {ICLR}
                  2022, Virtual Event, April 25-29, 2022},
  publisher    = {OpenReview.net},
  year         = {2022},
  url          = {https://openreview.net/forum?id=uYLFoz1vlAC},
  bibsource    = {dblp computer science bibliography, https://dblp.org}
}

@inproceedings{Fu23H3,
  author       = {Daniel Y. Fu and
                  Tri Dao and
                  Khaled Kamal Saab and
                  Armin W. Thomas and
                  Atri Rudra and
                  Christopher R{\'{e}}},
  title        = {Hungry Hungry Hippos: Towards Language Modeling with State Space Models},
  booktitle    = {The Eleventh International Conference on Learning Representations,
                  {ICLR} 2023, Kigali, Rwanda, May 1-5, 2023},
  publisher    = {OpenReview.net},
  year         = {2023},
  url          = {https://openreview.net/forum?id=COZDy0WYGg},
  bibsource    = {dblp computer science bibliography, https://dblp.org}
}

@inproceedings{Marion22AttentionLayers,
  author       = {Pierre Marion and
                  Rapha{\"{e}}l Berthier and
                  G{\'{e}}rard Biau and
                  Claire Boyer},
  title        = {Attention layers provably solve single-location regression},
  booktitle    = {The Thirteenth International Conference on Learning Representations,
                  {ICLR} 2025, Singapore, April 24-28, 2025},
  publisher    = {OpenReview.net},
  year         = {2025},
  url          = {https://openreview.net/forum?id=DVlPp7Jd7P},
  bibsource    = {dblp computer science bibliography, https://dblp.org}
}

@article{Olsson22AnthropicInContextLearning,
  author       = {Catherine Olsson and
                  Nelson Elhage and
                  Neel Nanda and
                  Nicholas Joseph and
                  Nova DasSarma and
                  Tom Henighan and
                  Ben Mann and
                  Amanda Askell and
                  Yuntao Bai and
                  Anna Chen and
                  Tom Conerly and
                  Dawn Drain and
                  Deep Ganguli and
                  Zac Hatfield{-}Dodds and
                  Danny Hernandez and
                  Scott Johnston and
                  Andy Jones and
                  Jackson Kernion and
                  Liane Lovitt and
                  Kamal Ndousse and
                  Dario Amodei and
                  Tom Brown and
                  Jack Clark and
                  Jared Kaplan and
                  Sam McCandlish and
                  Chris Olah},
  title        = {In-context Learning and Induction Heads},
  journal      = {CoRR},
  volume       = {abs/2209.11895},
  year         = {2022},
  url          = {https://doi.org/10.48550/arXiv.2209.11895},
  doi          = {10.48550/ARXIV.2209.11895},
  eprinttype    = {arXiv},
  eprint       = {2209.11895},
  bibsource    = {dblp computer science bibliography, https://dblp.org}
}

@misc{CallumMcDougall23InductionHeadsIllustrated,
  author = {CallumMcDougall},
  title = {Induction Heads Illustrated},
  year = {2023},
  url = {https://www.lesswrong.com/posts/TvrfY4c9eaGLeyDkE/induction-heads-illustrated},
  note = {Accessed: 2026-01-15}
}

@inproceedings{Singh23TransienceICL,
  author       = {Aaditya K. Singh and
                  Stephanie C. Y. Chan and
                  Ted Moskovitz and
                  Erin Grant and
                  Andrew M. Saxe and
                  Felix Hill},
  editor       = {Alice Oh and
                  Tristan Naumann and
                  Amir Globerson and
                  Kate Saenko and
                  Moritz Hardt and
                  Sergey Levine},
  title        = {The Transient Nature of Emergent In-Context Learning in Transformers},
  booktitle    = {Advances in Neural Information Processing Systems 36: Annual Conference
                  on Neural Information Processing Systems 2023, NeurIPS 2023, New Orleans,
                  LA, USA, December 10 - 16, 2023},
  year         = {2023},
  bibsource    = {dblp computer science bibliography, https://dblp.org}
}

@inproceedings{Chan22DataDistributionDrivesInContextLearning,
  author       = {Stephanie C. Y. Chan and
                  Adam Santoro and
                  Andrew K. Lampinen and
                  Jane X. Wang and
                  Aaditya K. Singh and
                  Pierre H. Richemond and
                  James L. McClelland and
                  Felix Hill},
  editor       = {Sanmi Koyejo and
                  S. Mohamed and
                  A. Agarwal and
                  Danielle Belgrave and
                  K. Cho and
                  A. Oh},
  title        = {Data Distributional Properties Drive Emergent In-Context Learning
                  in Transformers},
  booktitle    = {Advances in Neural Information Processing Systems 35: Annual Conference
                  on Neural Information Processing Systems 2022, NeurIPS 2022, New Orleans,
                  LA, USA, November 28 - December 9, 2022},
  year         = {2022},
  bibsource    = {dblp computer science bibliography, https://dblp.org}
}

@article{Llama3,
  author       = {Llama Team},
  title        = {The Llama 3 Herd of Models},
  journal      = {CoRR},
  volume       = {abs/2407.21783},
  year         = {2024},
  url          = {https://doi.org/10.48550/arXiv.2407.21783},
  doi          = {10.48550/ARXIV.2407.21783},
  eprinttype    = {arXiv},
  eprint       = {2407.21783},
  bibsource    = {dblp computer science bibliography, https://dblp.org}
}

@article{Crosbie24AblatingIHDeterioratesICL,
  author       = {Joy Crosbie and
                  Ekaterina Shutova},
  title        = {Induction Heads as an Essential Mechanism for Pattern Matching in
                  In-context Learning},
  journal      = {CoRR},
  volume       = {abs/2407.07011},
  year         = {2024},
  url          = {https://doi.org/10.48550/arXiv.2407.07011},
  doi          = {10.48550/ARXIV.2407.07011},
  eprinttype    = {arXiv},
  eprint       = {2407.07011},
  bibsource    = {dblp computer science bibliography, https://dblp.org}
}

@inproceedings{Min22ICLwithRandomLabels,
  author       = {Sewon Min and
                  Xinxi Lyu and
                  Ari Holtzman and
                  Mikel Artetxe and
                  Mike Lewis and
                  Hannaneh Hajishirzi and
                  Luke Zettlemoyer},
  editor       = {Yoav Goldberg and
                  Zornitsa Kozareva and
                  Yue Zhang},
  title        = {Rethinking the Role of Demonstrations: What Makes In-Context Learning
                  Work?},
  booktitle    = {Proceedings of the 2022 Conference on Empirical Methods in Natural
                  Language Processing, {EMNLP} 2022, Abu Dhabi, United Arab Emirates,
                  December 7-11, 2022},
  pages        = {11048--11064},
  publisher    = {Association for Computational Linguistics},
  year         = {2022},
  url          = {https://doi.org/10.18653/v1/2022.emnlp-main.759},
  doi          = {10.18653/V1/2022.EMNLP-MAIN.759},
  bibsource    = {dblp computer science bibliography, https://dblp.org}
}

@article{Wang25LazyRichLearningIH,
  title={How Transformers Get Rich: Approximation and Dynamics Analysis},
  author={Wang, Mingze and Yu, Ruoxi and E, Weinan and Wu, Lei},
  journal={arXiv preprint arXiv:2410.11474},
  year={2025}
}

@inproceedings{Akyurek24ICLLanguageGenbyAutomata,
  author       = {Ekin Aky{\"{u}}rek and
                  Bailin Wang and
                  Yoon Kim and
                  Jacob Andreas},
  title        = {In-Context Language Learning: Architectures and Algorithms},
  booktitle    = {Forty-first International Conference on Machine Learning, {ICML} 2024,
                  Vienna, Austria, July 21-27, 2024},
  publisher    = {OpenReview.net},
  year         = {2024},
  url          = {https://openreview.net/forum?id=3Z9CRr5srL},
  bibsource    = {dblp computer science bibliography, https://dblp.org}
}

@inproceedings{Oswald23InContextGD,
  author       = {Johannes von Oswald and
                  Eyvind Niklasson and
                  Ettore Randazzo and
                  Jo{\~{a}}o Sacramento and
                  Alexander Mordvintsev and
                  Andrey Zhmoginov and
                  Max Vladymyrov},
  editor       = {Andreas Krause and
                  Emma Brunskill and
                  Kyunghyun Cho and
                  Barbara Engelhardt and
                  Sivan Sabato and
                  Jonathan Scarlett},
  title        = {Transformers Learn In-Context by Gradient Descent},
  booktitle    = {International Conference on Machine Learning, {ICML} 2023, 23-29 July
                  2023, Honolulu, Hawaii, {USA}},
  series       = {Proceedings of Machine Learning Research},
  volume       = {202},
  pages        = {35151--35174},
  publisher    = {{PMLR}},
  year         = {2023},
  url          = {https://proceedings.mlr.press/v202/von-oswald23a.html},
  bibsource    = {dblp computer science bibliography, https://dblp.org}
}

@inproceedings{Akyurek23InContextLearningLinearModelsviaGD,
  author       = {Ekin Aky{\"{u}}rek and
                  Dale Schuurmans and
                  Jacob Andreas and
                  Tengyu Ma and
                  Denny Zhou},
  title        = {What learning algorithm is in-context learning? Investigations with
                  linear models},
  booktitle    = {The Eleventh International Conference on Learning Representations,
                  {ICLR} 2023, Kigali, Rwanda, May 1-5, 2023},
  publisher    = {OpenReview.net},
  year         = {2023},
  url          = {https://openreview.net/forum?id=0g0X4H8yN4I},
  bibsource    = {dblp computer science bibliography, https://dblp.org}
}

@inproceedings{Garg22InContextCaseStudy,
  author       = {Shivam Garg and
                  Dimitris Tsipras and
                  Percy Liang and
                  Gregory Valiant},
  editor       = {Sanmi Koyejo and
                  S. Mohamed and
                  A. Agarwal and
                  Danielle Belgrave and
                  K. Cho and
                  A. Oh},
  title        = {What Can Transformers Learn In-Context? {A} Case Study of Simple Function
                  Classes},
  booktitle    = {Advances in Neural Information Processing Systems 35: Annual Conference
                  on Neural Information Processing Systems 2022, NeurIPS 2022, New Orleans,
                  LA, USA, November 28 - December 9, 2022},
  year         = {2022},
  bibsource    = {dblp computer science bibliography, https://dblp.org}
}

@inproceedings{Bhattamishra23ICLBooleanFunctions,
  author       = {Satwik Bhattamishra and
                  Arkil Patel and
                  Phil Blunsom and
                  Varun Kanade},
  title        = {Understanding In-Context Learning in Transformers and LLMs by Learning
                  to Learn Discrete Functions},
  booktitle    = {The Twelfth International Conference on Learning Representations,
                  {ICLR} 2024, Vienna, Austria, May 7-11, 2024},
  publisher    = {OpenReview.net},
  year         = {2024},
  url          = {https://openreview.net/forum?id=ekeyCgeRfC},
  bibsource    = {dblp computer science bibliography, https://dblp.org}
}

@inproceedings{Lee24ICLwithOtherArchitectures,
  author       = {Ivan Lee and
                  Nan Jiang and
                  Taylor Berg{-}Kirkpatrick},
  title        = {Is attention required for ICL? Exploring the Relationship Between
                  Model Architecture and In-Context Learning Ability},
  booktitle    = {The Twelfth International Conference on Learning Representations,
                  {ICLR} 2024, Vienna, Austria, May 7-11, 2024},
  publisher    = {OpenReview.net},
  year         = {2024},
  url          = {https://openreview.net/forum?id=Qwq4cpLtoX},
  bibsource    = {dblp computer science bibliography, https://dblp.org}
}

@inproceedings{Dai23GPTMetaOptimization,
  author       = {Damai Dai and
                  Yutao Sun and
                  Li Dong and
                  Yaru Hao and
                  Shuming Ma and
                  Zhifang Sui and
                  Furu Wei},
  editor       = {Anna Rogers and
                  Jordan L. Boyd{-}Graber and
                  Naoaki Okazaki},
  title        = {Why Can {GPT} Learn In-Context? Language Models Secretly Perform Gradient
                  Descent as Meta-Optimizers},
  booktitle    = {Findings of the Association for Computational Linguistics: {ACL} 2023,
                  Toronto, Canada, July 9-14, 2023},
  pages        = {4005--4019},
  publisher    = {Association for Computational Linguistics},
  year         = {2023},
  url          = {https://doi.org/10.18653/v1/2023.findings-acl.247},
  doi          = {10.18653/V1/2023.FINDINGS-ACL.247},
  bibsource    = {dblp computer science bibliography, https://dblp.org}
}

@inproceedings{Reddy24GaussianDataforMechanisticStudyClassificationICL,
  author       = {Gautam Reddy},
  title        = {The mechanistic basis of data dependence and abrupt learning in an
                  in-context classification task},
  booktitle    = {The Twelfth International Conference on Learning Representations,
                  {ICLR} 2024, Vienna, Austria, May 7-11, 2024},
  publisher    = {OpenReview.net},
  year         = {2024},
  url          = {https://openreview.net/forum?id=aN4Jf6Cx69},
  bibsource    = {dblp computer science bibliography, https://dblp.org}
}

@inproceedings{Singh24InductionHeadFormation,
  author       = {Aaditya K. Singh and
                  Ted Moskovitz and
                  Felix Hill and
                  Stephanie C. Y. Chan and
                  Andrew M. Saxe},
  title        = {What needs to go right for an induction head? {A} mechanistic study
                  of in-context learning circuits and their formation},
  booktitle    = {Forty-first International Conference on Machine Learning, {ICML} 2024,
                  Vienna, Austria, July 21-27, 2024},
  publisher    = {OpenReview.net},
  year         = {2024},
  url          = {https://openreview.net/forum?id=O8rrXl71D5},
  bibsource    = {dblp computer science bibliography, https://dblp.org}
}

@inproceedings{JiTelgarsky19Nonseparable,
  author       = {Ziwei Ji and
                  Matus Telgarsky},
  editor       = {Alina Beygelzimer and
                  Daniel Hsu},
  title        = {The implicit bias of gradient descent on nonseparable data},
  booktitle    = {Conference on Learning Theory, {COLT} 2019, 25-28 June 2019, Phoenix,
                  AZ, {USA}},
  series       = {Proceedings of Machine Learning Research},
  volume       = {99},
  pages        = {1772--1798},
  publisher    = {{PMLR}},
  year         = {2019},
  url          = {http://proceedings.mlr.press/v99/ji19a.html},
  bibsource    = {dblp computer science bibliography, https://dblp.org}
}

@article{Sanford24_1LayerTransformer_arent_Induction_Head,
  author       = {Clayton Sanford and
                  Daniel Hsu and
                  Matus Telgarsky},
  title        = {One-layer transformers fail to solve the induction heads task},
  journal      = {CoRR},
  volume       = {abs/2408.14332},
  year         = {2024},
  url          = {https://doi.org/10.48550/arXiv.2408.14332},
  doi          = {10.48550/ARXIV.2408.14332},
  eprinttype    = {arXiv},
  eprint       = {2408.14332},
  bibsource    = {dblp computer science bibliography, https://dblp.org}
}

@inproceedings{Bietti23TransformerMemoryViewpoint,
  author       = {Alberto Bietti and
                  Vivien Cabannes and
                  Diane Bouchacourt and
                  Herv{\'{e}} J{\'{e}}gou and
                  L{\'{e}}on Bottou},
  editor       = {Alice Oh and
                  Tristan Naumann and
                  Amir Globerson and
                  Kate Saenko and
                  Moritz Hardt and
                  Sergey Levine},
  title        = {Birth of a Transformer: {A} Memory Viewpoint},
  booktitle    = {Advances in Neural Information Processing Systems 36: Annual Conference
                  on Neural Information Processing Systems 2023, NeurIPS 2023, New Orleans,
                  LA, USA, December 10 - 16, 2023},
  year         = {2023},
  bibsource    = {dblp computer science bibliography, https://dblp.org}
}

@inproceedings{Poli23HyenaHierarchy,
  author       = {Michael Poli and
                  Stefano Massaroli and
                  Eric Nguyen and
                  Daniel Y. Fu and
                  Tri Dao and
                  Stephen Baccus and
                  Yoshua Bengio and
                  Stefano Ermon and
                  Christopher R{\'{e}}},
  editor       = {Andreas Krause and
                  Emma Brunskill and
                  Kyunghyun Cho and
                  Barbara Engelhardt and
                  Sivan Sabato and
                  Jonathan Scarlett},
  title        = {Hyena Hierarchy: Towards Larger Convolutional Language Models},
  booktitle    = {International Conference on Machine Learning, {ICML} 2023, 23-29 July
                  2023, Honolulu, Hawaii, {USA}},
  series       = {Proceedings of Machine Learning Research},
  volume       = {202},
  pages        = {28043--28078},
  publisher    = {{PMLR}},
  year         = {2023},
  url          = {https://proceedings.mlr.press/v202/poli23a.html},
  bibsource    = {dblp computer science bibliography, https://dblp.org}
}

@article{Graves14NeuralTuringMachines,
  author       = {Alex Graves and
                  Greg Wayne and
                  Ivo Danihelka},
  title        = {Neural Turing Machines},
  journal      = {CoRR},
  volume       = {abs/1410.5401},
  year         = {2014},
  url          = {http://arxiv.org/abs/1410.5401},
  eprinttype    = {arXiv},
  eprint       = {1410.5401},
  bibsource    = {dblp computer science bibliography, https://dblp.org}
}

@inproceedings{Ba16FastWeightsAttendRecentPast,
  author       = {Jimmy Ba and
                  Geoffrey E. Hinton and
                  Volodymyr Mnih and
                  Joel Z. Leibo and
                  Catalin Ionescu},
  editor       = {Daniel D. Lee and
                  Masashi Sugiyama and
                  Ulrike von Luxburg and
                  Isabelle Guyon and
                  Roman Garnett},
  title        = {Using Fast Weights to Attend to the Recent Past},
  booktitle    = {Advances in Neural Information Processing Systems 29: Annual Conference
                  on Neural Information Processing Systems 2016, December 5-10, 2016,
                  Barcelona, Spain},
  pages        = {4331--4339},
  year         = {2016},
  url          = {https://proceedings.neurips.cc/paper/2016/hash/9f44e956e3a2b7b5598c625fcc802c36-Abstract.html},
  bibsource    = {dblp computer science bibliography, https://dblp.org}
}

@article{Tarzanagh23TransformersAsSVM,
  author       = {Davoud Ataee Tarzanagh and
                  Yingcong Li and
                  Christos Thrampoulidis and
                  Samet Oymak},
  title        = {Transformers as Support Vector Machines},
  journal      = {CoRR},
  volume       = {abs/2308.16898},
  year         = {2023},
  url          = {https://doi.org/10.48550/arXiv.2308.16898},
  doi          = {10.48550/ARXIV.2308.16898},
  eprinttype    = {arXiv},
  eprint       = {2308.16898},
  bibsource    = {dblp computer science bibliography, https://dblp.org}
}

@article{Vasudeva25ImplicitBiasRatesSelfAttention,
  author       = {Bhavya Vasudeva and
                  Puneesh Deora and
                  Christos Thrampoulidis},
  title        = {Implicit Bias and Fast Convergence Rates for Self-attention},
  journal      = {Trans. Mach. Learn. Res.},
  volume       = {2025},
  year         = {2025},
  url          = {https://openreview.net/forum?id=pKilnjQsb0},
  bibsource    = {dblp computer science bibliography, https://dblp.org}
}

@inproceedings{TarzanaghLZO23MaxMarginTokenSelection,
  author       = {Davoud Ataee Tarzanagh and
                  Yingcong Li and
                  Xuechen Zhang and
                  Samet Oymak},
  editor       = {Alice Oh and
                  Tristan Naumann and
                  Amir Globerson and
                  Kate Saenko and
                  Moritz Hardt and
                  Sergey Levine},
  title        = {Max-Margin Token Selection in Attention Mechanism},
  booktitle    = {Advances in Neural Information Processing Systems 36: Annual Conference
                  on Neural Information Processing Systems 2023, NeurIPS 2023, New Orleans,
                  LA, USA, December 10 - 16, 2023},
  year         = {2023},
  bibsource    = {dblp computer science bibliography, https://dblp.org}
}

@inproceedings{Edelman24OnetoTwoGramMarkovLearning,
  author       = {Ezra Edelman and
                  Nikolaos Tsilivis and
                  Benjamin L. Edelman and
                  Eran Malach and
                  Surbhi Goel},
  editor       = {Amir Globersons and
                  Lester Mackey and
                  Danielle Belgrave and
                  Angela Fan and
                  Ulrich Paquet and
                  Jakub M. Tomczak and
                  Cheng Zhang},
  title        = {The Evolution of Statistical Induction Heads: In-Context Learning
                  Markov Chains},
  booktitle    = {Advances in Neural Information Processing Systems 38: Annual Conference
                  on Neural Information Processing Systems 2024, NeurIPS 2024, Vancouver,
                  BC, Canada, December 10 - 15, 2024},
  year         = {2024},
  bibsource    = {dblp computer science bibliography, https://dblp.org}
}

@inproceedings{Makkuva24AttentionMarkov,
  author       = {Ashok Vardhan Makkuva and
                  Marco Bondaschi and
                  Adway Girish and
                  Alliot Nagle and
                  Martin Jaggi and
                  Hyeji Kim and
                  Michael Gastpar},
  title        = {Attention with Markov: {A} Curious Case of Single-layer Transformers},
  booktitle    = {The Thirteenth International Conference on Learning Representations,
                  {ICLR} 2025, Singapore, April 24-28, 2025},
  publisher    = {OpenReview.net},
  year         = {2025},
  url          = {https://openreview.net/forum?id=SqZ0KY4qBD},
  bibsource    = {dblp computer science bibliography, https://dblp.org}
}

@inproceedings{Rajaraman24ApproximationNGramsByTransformers,
  author       = {Nived Rajaraman and
                  Marco Bondaschi and
                  Ashok Vardhan Makkuva and
                  Kannan Ramchandran and
                  Michael Gastpar},
  editor       = {Amir Globersons and
                  Lester Mackey and
                  Danielle Belgrave and
                  Angela Fan and
                  Ulrich Paquet and
                  Jakub M. Tomczak and
                  Cheng Zhang},
  title        = {Transformers on Markov data: Constant depth suffices},
  booktitle    = {Advances in Neural Information Processing Systems 38: Annual Conference
                  on Neural Information Processing Systems 2024, NeurIPS 2024, Vancouver,
                  BC, Canada, December 10 - 15, 2024},
  year         = {2024},
  bibsource    = {dblp computer science bibliography, https://dblp.org}
}

@inproceedings{Nichani24OneGramTaskLearnsIH,
  author       = {Eshaan Nichani and
                  Alex Damian and
                  Jason D. Lee},
  title        = {How Transformers Learn Causal Structure with Gradient Descent},
  booktitle    = {Forty-first International Conference on Machine Learning, {ICML} 2024,
                  Vienna, Austria, July 21-27, 2024},
  publisher    = {OpenReview.net},
  year         = {2024},
  url          = {https://openreview.net/forum?id=jNM4imlHZv},
  bibsource    = {dblp computer science bibliography, https://dblp.org}
}

@inproceedings{Xie22InContextLearningImplicitBayesianInference,
  author       = {Sang Michael Xie and
                  Aditi Raghunathan and
                  Percy Liang and
                  Tengyu Ma},
  title        = {An Explanation of In-context Learning as Implicit Bayesian Inference},
  booktitle    = {The Tenth International Conference on Learning Representations, {ICLR}
                  2022, Virtual Event, April 25-29, 2022},
  publisher    = {OpenReview.net},
  year         = {2022},
  url          = {https://openreview.net/forum?id=RdJVFCHjUMI},
  bibsource    = {dblp computer science bibliography, https://dblp.org}
}

@inproceedings{Chen24nGramTaskLearnsIH,
  author       = {Siyu Chen and
                  Heejune Sheen and
                  Tianhao Wang and
                  Zhuoran Yang},
  editor       = {Amir Globersons and
                  Lester Mackey and
                  Danielle Belgrave and
                  Angela Fan and
                  Ulrich Paquet and
                  Jakub M. Tomczak and
                  Cheng Zhang},
  title        = {Unveiling Induction Heads: Provable Training Dynamics and Feature
                  Learning in Transformers},
  booktitle    = {Advances in Neural Information Processing Systems 38: Annual Conference
                  on Neural Information Processing Systems 2024, NeurIPS 2024, Vancouver,
                  BC, Canada, December 10 - 15, 2024},
  year         = {2024},
  bibsource    = {dblp computer science bibliography, https://dblp.org}
}

@article{Elhage21TransformerCircuits,
   title={A Mathematical Framework for Transformer Circuits},
   author={Elhage, Nelson and Nanda, Neel and Olsson, Catherine and Henighan, Tom and Joseph, Nicholas and Mann, Ben and Askell, Amanda and Bai, Yuntao and Chen, Anna and Conerly, Tom and DasSarma, Nova and Drain, Dawn and Ganguli, Deep and Hatfield-Dodds, Zac and Hernandez, Danny and Jones, Andy and Kernion, Jackson and Lovitt, Liane and Ndousse, Kamal and Amodei, Dario and Brown, Tom and Clark, Jack and Kaplan, Jared and McCandlish, Sam and Olah, Chris},
   year={2021},
   journal={Transformer Circuits Thread},
   url={https://transformer-circuits.pub/2021/framework/index.html}
}

@article{Musat25EmergenceInductionHeadsICL,
  author       = {Tiberiu Musat and
                  Tiago Pimentel and
                  Lorenzo Noci and
                  Alessandro Stolfo and
                  Mrinmaya Sachan and
                  Thomas Hofmann},
  title        = {On the Emergence of Induction Heads for In-Context Learning},
  journal      = {CoRR},
  volume       = {abs/2511.01033},
  year         = {2025},
  url          = {https://doi.org/10.48550/arXiv.2511.01033},
  doi          = {10.48550/ARXIV.2511.01033},
  eprinttype    = {arXiv},
  eprint       = {2511.01033},
  bibsource    = {dblp computer science bibliography, https://dblp.org}
}

@inproceedings{Varre25LearningInContextNGrams,
  author       = {Aditya Varre and
                  Gizem Y{\"{u}}ce and
                  Nicolas Flammarion},
  title        = {Learning In-context n-grams with Transformers: Sub-n-grams Are Near-Stationary
                  Points},
  booktitle    = {Forty-second International Conference on Machine Learning, {ICML}
                  2025, Vancouver, BC, Canada, July 13-19, 2025},
  publisher    = {OpenReview.net},
  year         = {2025},
  url          = {https://openreview.net/forum?id=OMwdvGDeHL},
  bibsource    = {dblp computer science bibliography, https://dblp.org}
}

@article{Soudry18ImplicitBiasCE,
  author       = {Daniel Soudry and
                  Elad Hoffer and
                  Mor Shpigel Nacson and
                  Suriya Gunasekar and
                  Nathan Srebro},
  title        = {The Implicit Bias of Gradient Descent on Separable Data},
  journal      = {J. Mach. Learn. Res.},
  volume       = {19},
  pages        = {70:1--70:57},
  year         = {2018},
  url          = {https://jmlr.org/papers/v19/18-188.html},
  bibsource    = {dblp computer science bibliography, https://dblp.org}
}

@article{Patro24Mamba360,
  author       = {Badri Narayana Patro and
                  Vijay Srinivas Agneeswaran},
  title        = {Mamba-360: Survey of State Space Models as Transformer Alternative
                  for Long Sequence Modelling: Methods, Applications, and Challenges},
  journal      = {CoRR},
  volume       = {abs/2404.16112},
  year         = {2024},
  url          = {https://doi.org/10.48550/arXiv.2404.16112},
  doi          = {10.48550/ARXIV.2404.16112},
  eprinttype    = {arXiv},
  eprint       = {2404.16112},
  bibsource    = {dblp computer science bibliography, https://dblp.org}
}

@inproceedings{Jelassi24TransformersVsSSMAtCopying,
  author       = {Samy Jelassi and
                  David Brandfonbrener and
                  Sham M. Kakade and
                  Eran Malach},
  title        = {Repeat After Me: Transformers are Better than State Space Models at
                  Copying},
  booktitle    = {Forty-first International Conference on Machine Learning, {ICML} 2024,
                  Vienna, Austria, July 21-27, 2024},
  publisher    = {OpenReview.net},
  year         = {2024},
  url          = {https://openreview.net/forum?id=duRRoGeoQT},
  bibsource    = {dblp computer science bibliography, https://dblp.org}
}

@inproceedings{Arora24ZoologyRecall,
  author       = {Simran Arora and
                  Sabri Eyuboglu and
                  Aman Timalsina and
                  Isys Johnson and
                  Michael Poli and
                  James Zou and
                  Atri Rudra and
                  Christopher R{\'{e}}},
  title        = {Zoology: Measuring and Improving Recall in Efficient Language Models},
  booktitle    = {The Twelfth International Conference on Learning Representations,
                  {ICLR} 2024, Vienna, Austria, May 7-11, 2024},
  publisher    = {OpenReview.net},
  year         = {2024},
  url          = {https://openreview.net/forum?id=LY3ukUANko},
  bibsource    = {dblp computer science bibliography, https://dblp.org}
}

@article{GPT2,
  author       = {Alec Radford and
                  Jeff Wu and
                  Rewon Child and
                  David Luan and
                  Dario Amodei and
                  Ilya Sutskever},
  title        = {Language Models are Unsupervised Multitask Learners},
  journal      = {OpenAI Blog},
  year         = {2019},
  url          = {https://cdn.openai.com/better-language-models/language_models_are_unsupervised_multitask_learners.pdf},
}

@article{OmniglotDataset,
    author = {Brenden M. Lake  and Ruslan Salakhutdinov  and Joshua B. Tenenbaum },
    title = {Human-level concept learning through probabilistic program induction},
    journal = {Science},
    volume = {350},
    number = {6266},
    pages = {1332-1338},
    year = {2015},
    doi = {10.1126/science.aab3050},
    URL = {https://www.science.org/doi/abs/10.1126/science.aab3050},
    eprint = {https://www.science.org/doi/pdf/10.1126/science.aab3050},
}

\newpage
\appendix
\onecolumn

\section{Experimental Details for Gaussian Match-and-Copy}
\label{app:experiments}

This section provides the implementation details, hyperparameters, 
and additional results for the GMC 
experiments discussed in \Cref{sec:expes}.

\subsection{Computational Resources}
\label{app:computational-resources}
All experiments were conducted 
on a single NVIDIA RTX~4050 GPU (Ubuntu~24.04, CUDA~12.4) 
using PyTorch~2.6.0 and HuggingFace Transformers~4.49.0. 
We deliberately selected hyperparameters to enable rapid experimentation--a 1000-step 
run on the average dimensions tested 
completes in approximately one minute--while 
preserving qualitative behaviors of interest 
that are known to emerge at larger scales. 
This design allows us to replicate core properties 
of the PTH$\to$IH circuit 
in a simpler environment 
(e.g., 
Gaussian data rather than natural language),  
while ensuring that the benchmark remains computationally accessible.

\subsection{Task and Model Configurations}
\label{app:mc-experimental-setup}

\Cref{tab:mc-experiment-specs} summarizes 
all task and model configurations used by default in \Cref{sec:expes}, 
unless stated otherwise.
We now discuss \emph{how} each choice was made. 

\paragraph{Data Generation (\Cref{tab:mc-experiment-specs}).}
The GMC task parameters (\Cref{def:mc}) are chosen to balance difficulty and computational efficiency.
\begin{itemize}
    \item \textbf{Dimensions:} We set $\din = \dout$ for simplicity. 
    We report results for $\din \in \{16, 32\}$ and sequence lengths $T \in \{8, 16\}$. 
    We verified that results hold for larger dimensions (tested up to $\din=1024, T=1024$), 
    but we adhere to the small-dimensionality regime to highlight a minimal setting where 
    the task retains key properties of larger models while allowing for rapid iteration.
    
    \item \textbf{Token and Value Generation:} 
    Context tokens are sampled as 
    $\token_t \sim \mathcal{N}(0, \frac{\stdtoken^2}{\din} \I_{\din})$ 
    with $\stdtoken = 1$. 
    This scaling ensures that the token norm 
    $\|\token_t\|_2$ remains roughly constant 
    regardless of the dimension $\din$. 
    We also experimented with $\token_t, \query \sim \mathcal{N}(0, \I_{\din})$, 
    which produced similar results but led 
    to sharper loss drops at 
    the emergence of the PTH and IH, likely because 
    the larger token norms 
    increased the correlation between 
    the hidden match and the query, making it easier to detect. 
    We stick to the former scaling for consistency across dimensions. 
    The value matrix $\WV$ is fixed for a given task instance, 
    with entries drawn i.i.d.\ from $\mathcal{N}(0,1)$. 
    Finally, the noise distribution $\noiseDistrib$ in \Cref{def:mc} 
    is chosen such that $\query\sim \mathcal{N}(0, \frac{\stdtoken^2}{\din}\I_{\din})$, 
    i.e., the same marginal distribution as the other tokens. 
    Concretely, 
    we set $\query = \Cqe \token_{t_0} + S \xi$, 
    where $\xi \sim \mathcal{N}(0, \frac{\stdtoken^2}{\din}\I_{\din})$ 
    is independent noise and $S$ satisfies $SS^\top = \I_{\din} - \Cqe\Cqe^\top$. 

    \item \textbf{Signal-to-Noise Ratio (SNR):} 
    The target is defined as $\target = \WV \token_{t_0+1} + \eps$, 
    where $\eps \sim \mathcal{N}(0, \stdnoise^2 \I_{\dout})$ 
    with $\stdnoise=0.1$. 
    Given the definitions above, 
    the signal component $\WV \token_{t_0+1}$ 
    has coordinate-wise variance 
    $\din \cdot 1 \cdot (\stdtoken^2/\din) = \stdtoken^2 = 1$. 
    The noise component has variance $\stdnoise^2 = 0.01$. 
    This yields a signal-to-noise variance ratio of roughly $100$. 
    
    \item \textbf{Correlations:} 
    The covariance matrix is chosen isotropic $\Cqe = \frac{1}{c^2}\I_{\din}$ for simplicity. 
    Since $\Cov(\query, \token_{t_0}) \propto \Cqe$, the parameter $c$ controls difficulty; 
    higher $c$ reduces the correlation between the query and the match. 
    We use $c \in \{1.1, 1.2, 1.3\}$ to span different difficulty levels. 
    Indeed, for the dimensions we used, we found that 
      $c=1.1$ and $c=1.3$ were good choices to ensure 
      that
      (1) all considered models were able to learn the task, and
      (2) the training curves differed enough for these 
      two values of $c$ to consider them as different difficulty levels. 
      Note however that the task difficulty is not 
      only controlled by $c$ but also by the 
      sequence length $T$ and the noise level $\sigma_\eps$, 
      as can be seen from \Cref{fig:sweep-llama3-gpt2-metrics} 
      where larger $T$ and $c$ slows down the learning process.
    
    \item \textbf{Test Set:} We evaluate models on a fixed test 
    set of 128 fresh GMC sequences. 
    These are generated using the same parameters ($\din,\dout,T$, $\WV, \Cqe$, $\stdtoken, \stdnoise$)  
    as the training set, 
    but with different random seeds for the tokens, 
    hidden match indices, 
    and noise. 
\end{itemize}

\paragraph{Transformer Architectures (\Cref{tab:mc-experiment-specs}).}
We train GPT-2 \citep{GPT2} and Llama 3 \citep{Llama3} architectures 
initialized with the default HuggingFace scheme 
(weights $\sim \mathcal{N}(0, 0.02^2)$). 
Unless stated otherwise, we use 2-layer models, 
as this is the minimal depth required to implement a full induction circuit 
(PTH in layer 0, IH in layer 1) 
\citep{Elhage21TransformerCircuits,Olsson22AnthropicInContextLearning,Sanford24_1LayerTransformer_arent_Induction_Head}. 
Hidden and head sizes are adapted from \citet{Lee24ICLwithOtherArchitectures}, 
as summarized in \Cref{tab:mc-experiment-specs}.

\paragraph{Optimization and Robustness (\Cref{tab:mc-experiment-specs}).}
We use AdamW ($\beta_1=0.9, \beta_2=0.95$) with a batch size of 512 and a 
learning rate of $10^{-3}$ (with 200 steps warmup and cosine decay). 
We selected these hyperparameters after performing sweeps over:
\begin{itemize}
    \item \textbf{Learning rates:} Range $10^{-1}$ to $10^{-5}$. 
    Larger rates caused instability; 
    smaller rates shifted the loss drop 
    later without changing the qualitative outcome.
    \item \textbf{Batch sizes:} Range $32$ to $1024$. 
    512 offered the best tradeoff between training speed and curve smoothness.
    \item \textbf{Optimizers:} AdamW consistently outperformed SGD and 
    standard Adam in preliminary tests.
\end{itemize}
We found the emergence of the PTH$\to$IH circuit 
to be robust to these optimization choices: 
suboptimal hyperparameters typically led to non-convergence, 
but whenever the model solved the task, 
it did so via the same phase-transition mechanism described in \Cref{sec:expes}.

\begin{table}[h]
\centering
\caption{Hyperparameters for standard GMC experiments in \Cref{sec:expes}.}
\label{tab:mc-experiment-specs}
\begin{tabular}{ll}
\toprule
\textbf{Parameter} & \textbf{Value} \\
\midrule
\textit{GMC Task} & \\
Context length ($T$) & $\{8, 16\}$ \\
Dimension ($\din=\dout$) & $\{16, 32\}$ \\
Covariance ($\Cqe$) & $\tfrac{1}{c^2} \I_{\din}$ \\
Inverse correlation strength ($c$) & $\{1.1, 1.2, 1.3\}$ \\
Token standard deviation ($\stdtoken$) & 1 \\
Noise standard deviation ($\stdnoise$) & 0.1 \\
\midrule
\textit{Model Architecture} & \\
Type & \{Llama 3, GPT-2\} \\
Hidden dimension ($d_{\mathrm{model}}$) & 288 \\
Head dimension ($d_{\mathrm{head}}$) & 32 \\
Number of heads & 9 \\
FFN dimension & $4 \times d_{\mathrm{model}} = 1152$ \\
Layers ($n_\mathrm{layer}$) & $\{2, 4, 8, 16, 32\}$ (default: 2) \\
\midrule
\textit{Optimization} & \\
Optimizer & AdamW ($\beta_1=0.9, \beta_2=0.95$) \\
Learning rate & $10^{-3}$ \\
Scheduler & Linear warmup (200 steps) + Cosine decay \\
Batch size & 512 \\
Training steps & 1000 \\
\bottomrule
\end{tabular}
\end{table}
\subsection{Head Detection Metrics}
\label{app:pth-ih}

To track the emergence of mechanistic circuits, 
we compute three attention-pattern scores for every head in the model. 
Unless specified otherwise,  
we report the \textbf{maximum} score across all heads in a given model 
(e.g., \Cref{fig:phase-change-llama3} 
or \Cref{fig:sweep-llama3-gpt2-metrics}), 
as the emergence of a single specialized head is sufficient to implement the mechanism.

For a head with attention weights $a_{t,s}$ (query position $t$ attending 
to key position $s$), we define:
\begin{enumerate}
\item \textbf{PTH Score:} Measures attention to the immediate predecessor 
\citep{Elhage21TransformerCircuits}.
\[
\text{PTH} = \frac{1}{T} \sum_{t=2}^{T+1} a_{t, t-1} \in [0, 1].
\]
\item \textbf{IH-MC Score:} A metric tailored to GMC (\Cref{def:mc}) 
that measures attention from the query to the token \emph{following} the hidden match $t_0$.
\[
\text{IH-MC} = a_{\text{query}, t_0+1} \in [0, 1].
\]
\item \textbf{IH-Repeat Score:} 
A standard metric 
\citep{Olsson22AnthropicInContextLearning} 
using a duplicated sequence 
$(\token_1, \dots, \token_K, \token_1, \dots, \token_K)$. 
Unless stated otherwise, we use $K=T$. 
It measures attention from the second occurrence of a 
token to the successor of its first occurrence.
\[
\text{IH-Repeat} = \frac{1}{K} \sum_{t=1}^{K} a_{K+t, t+1} \in [0, 1].
\]
\end{enumerate} 

\paragraph{Typical score values at initialization.}
With Gaussian-initialized projection matrices the attention weights are
almost uniform at initialization. 
Hence $a_{t,s} \approx \frac{1}{t-1}$ for $s < t$ (causal attention) 
and $a_{t,s} \approx 0$ for $s \ge t$. 
Denote by $H_t = \sum_{k=1}^t \frac{1}{k} \approx \ln t +O(1)$ the $t$-th harmonic number. 
We deduce that the expected scores at initialization for each head are:
\begin{itemize}
    \item $\mathbb{E}[\text{PTH}] \approx \frac{1}{T}\sum_{t=2}^{T+1} \frac{1}{t-1} 
    = \frac{H_T}{T} \approx \frac{\ln T}{T}$,
    \item $\mathbb{E}[\text{IH-MC}] \approx \frac{1}{T}$,
    \item $\mathbb{E}[\text{IH-Repeat}] \approx \frac{1}{K} \sum_{t=1}^{K} \frac{1}{K+t-1} = 
    \frac{1}{K} \sum_{j=K}^{2K-1} \frac{1}{j} =
    \frac{H_{2K-1} - H_{K-1}}{K} \approx \frac{\ln 2}{K}$.
\end{itemize}
All our experiments 
roughly confirm 
these values at step~$0$ 
(e.g.\ \Cref{fig:phase-change-llama3}).

\subsection{Robustness of Emergence}
\label{app:robustness-emergence}
Besides the robustness to optimization 
choices discussed in \Cref{app:mc-experimental-setup}, 
we verified that the emergence of 
induction circuits is robust across 
different GMC configurations. 

\paragraph{Robustness to GMC Parameters.}
As noted in \Cref{subsec:MCviaPTH+IH}, 
we found that across GMC settings 
where the task is successfully solved,
the model consistently develops
PTH and IH circuits coinciding with a sharp drop in loss. 
The only difference lies in the timing and speed of this transition: 
harder tasks (higher $c$ and ratio $T/\din$) 
lead to later and more gradual transitions. 
\Cref{fig:sweep-llama3-gpt2-metrics} 
shows examples of that for 
different correlation strengths ($1/c^2$), 
dimensions ($\din$) 
context lengths ($T$), 
and Llama 3 and GPT-2 architectures. 

\paragraph{Robustness to Depth.} 
We ran experiments for $n_{\text{layer}}\in\{2,4,8,16,32\}$ 
(see \Cref{fig:sweep-llama3-gpt2-metrics} and \Cref{fig:sweep-llama3-layers}). 
All configurations with $n_{\text{layer}} \ge 2$ led to similar behavior: 
an induction circuit emerges coinciding with a drop in loss. 
We also observe that single-layer models ($n_{\text{layer}}=1$) 
never solve the task, 
even though they attempt to develop primitive PTHs or IHs 
(see \Cref{fig:sweep-llama3-layers}). 
This aligns with the mechanistic understanding that 
PTHs and IHs must reside in \emph{consecutive} layers 
to interact effectively (composition of heads), 
thus requiring at least two layers for the full circuit 
implementing match-and-copy. 

\begin{figure}[ht]
\centering
\includegraphics[width=\textwidth]{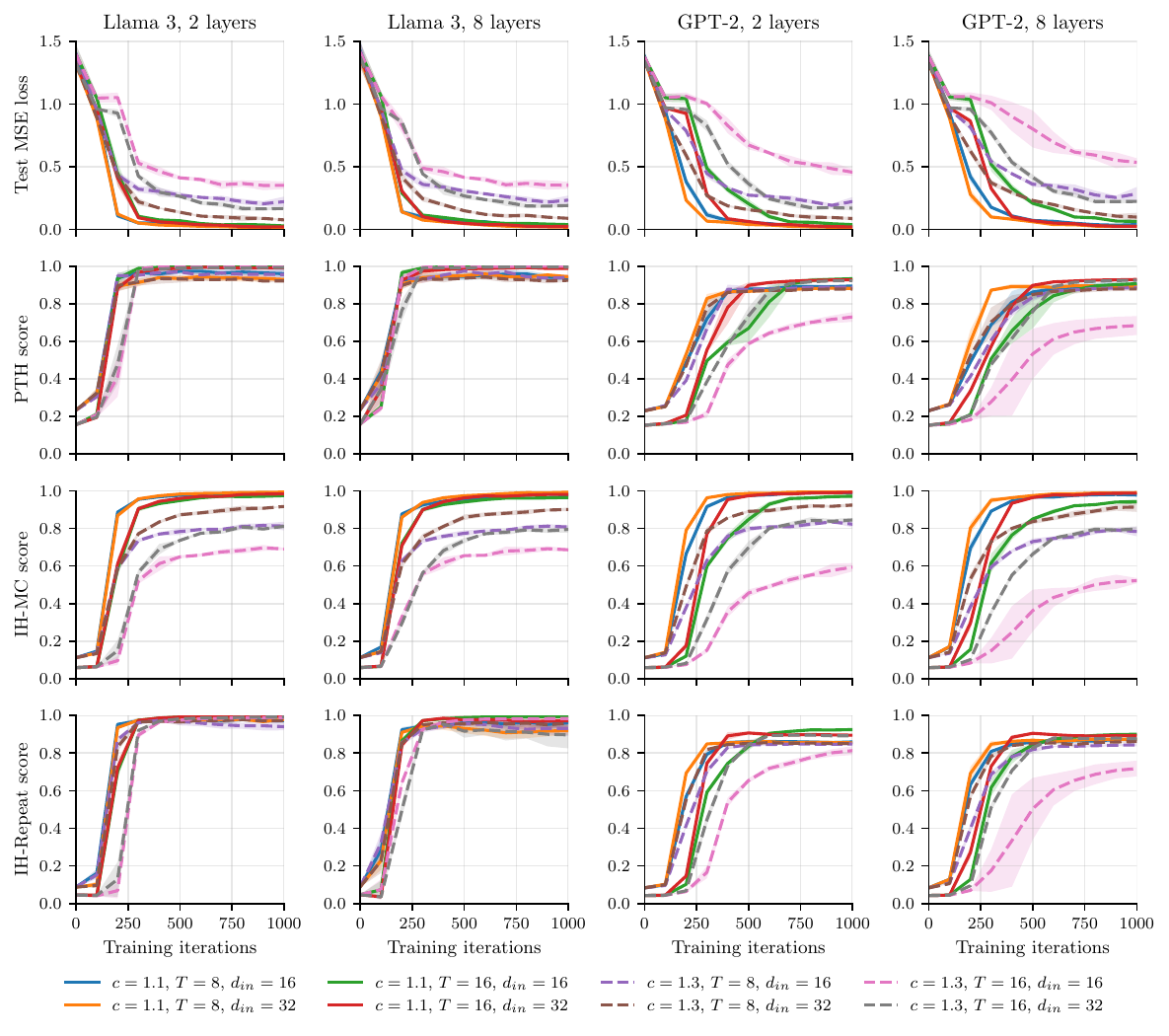} 
\caption{\textbf{Robustness of Emergence.} 
The alignment between the loss drop (top row) 
and the saturation of PTH/IH scores (bottom rows) 
across different architectures (Llama 3, GPT-2) 
and GMC configurations (varying $c$, $\din$, $T$).} 
\label{fig:sweep-llama3-gpt2-metrics}
\end{figure}

\begin{figure}[ht]
    \centering
    \includegraphics[width=\textwidth]{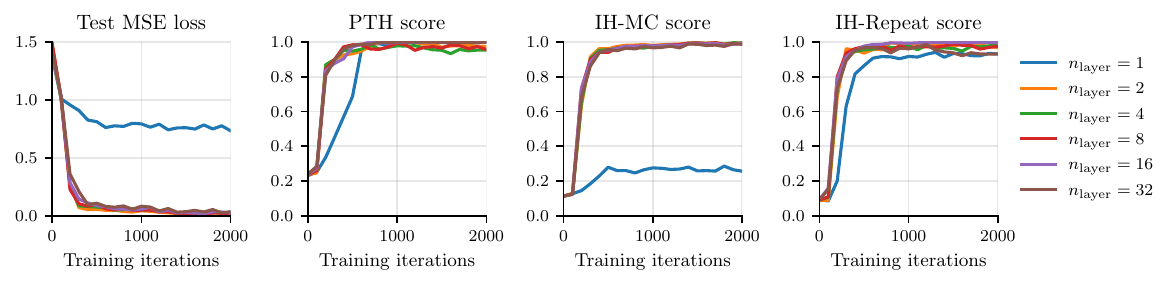}
    \caption{\textbf{Robustness to Depth.} 
    Evolution of loss and detection scores for Llama 3 
    models with varying layer counts ($n_{\text{layer}}$). 
    While all models with $n_{\text{layer}} \ge 2$ exhibit the 
    characteristic loss drop and emergence of PTH/IH circuits, 
    the single-layer model ($n_{\text{layer}}=1$) attempts to 
    increase head scores but fails to solve the task. 
    This confirms that the solution requires the composition of 
    heads across at least two layers.}
    \label{fig:sweep-llama3-layers}
\end{figure}

\subsection{Mechanistic Validation of the PTH$\to$IH Circuit}
\label{app:mechanistic-validation}

We confirm that the induction circuits learned 
by the model strictly adhere to the PTH$\to$IH 
compositional structure 
found in practice in large language models 
\citep{Elhage21TransformerCircuits,Olsson22AnthropicInContextLearning}: 
IHs must come in a layer \emph{after} PTHs, 
since it is the composition of these two head types 
that implements the match-and-copy mechanism. 
We also analyze more closely 
the attention patterns (instead of printing scores only) 
to confirm that the heads implement the functional definitions of PTH and IH 
(attend to previous token, and attend to the match's successor). 
Below, we report these analyses for
the specific 2-layer Llama 3 run ($T=8, c=1.2, \din=16$) 
shown in \Cref{fig:phase-change-llama3}.

\paragraph{Head-wise Scores: Confirms Compositional Structure.} 
\Cref{fig:llama3-metrics-per-head} tracks the evolution 
of attention scores for every head in the model. 
Prior to the loss drop (left, step 120), 
all heads exhibit negligible scores. 
Coinciding with the drop in loss (right, step 200), 
we observe a clean separation of roles: 
heads in Layer 0 specialize as Previous Token Heads 
(high PTH score), 
while heads in Layer 1 specialize as Induction Heads 
(high IH-MC and IH-Repeat scores). 
This simultaneous emergence confirms 
that the model solves the task precisely when it 
unlocks the composition of these two head types. 
See also \Cref{fig:sweep-llama3-layers} where 
single-layer models fail to solve the task, 
lacking the necessary compositional structure 
\citep{Sanford24_1LayerTransformer_arent_Induction_Head}. 

\paragraph{Attention Patterns: Confirms Functional Definition of PTH and IH.} 
We validate the functional definition 
of these heads using 
the IH-Repeat sequence format 
$(\token_1, \dots, \token_K, \token_1, \dots, \token_K)$ 
described in \Cref{app:pth-ih}. 
\Cref{fig:llama3-ih-repeat-attn} visualizes 
the attention weights averaged over all 
9 heads per layer and a batch of 32 sequences of length $2K=16$.
\begin{itemize}
    \item \textbf{Layer 0 (PTHs):} 
    The attention mass concentrates 
    on the first subdiagonal (\Cref{fig:llama3-ih-repeat-attn}a), 
    confirming that these heads consistently attend to 
    the immediate predecessor token.
    \item \textbf{Layer 1 (IHs):} 
    The attention mass concentrates on the $(K-1)$-th subdiagonal 
    during the second half of the sequence (\Cref{fig:llama3-ih-repeat-attn}b). 
    This offset indicates that for a repeated token $\token_{K+t}$, 
    the head attends to the token \emph{following} 
    its first occurrence $\token_{t+1}$. 
\end{itemize}

\begin{figure}[ht]
    \centering
    \begin{subfigure}[b]{0.47\textwidth}
        \centering
        \begin{overpic}[height=5cm,trim=0 0 -140 0,clip]{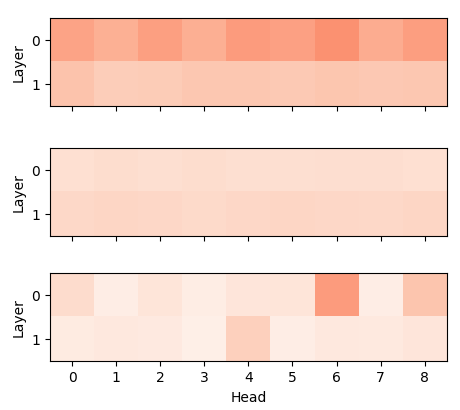}
            \put(84,57){\small \textbf{PTH score}}
            \put(82,35){\small \textbf{IH-MC score}}
            \put(78.5,14){\small \textbf{IH-Repeat score}}
        \end{overpic}
        \caption{Before transition ($i = 120$)\mbox{\hspace{0.9cm}}}
    \end{subfigure}
    \hfill
    \begin{subfigure}[b]{0.42\textwidth}
        \centering
        \includegraphics[height=5cm]{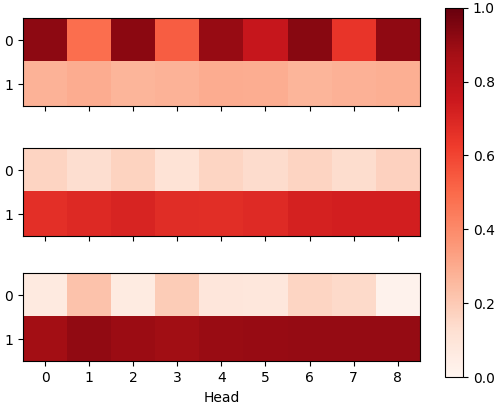}
        \caption{After transition ($i = 200$)\mbox{\hspace{0.7cm}}}
    \end{subfigure}
    \caption{\textbf{Compositional Structure Confirmed by Head-wise Scores.}
    Per-head attention scores for the 2-layer 
    Llama 3 model ($T=8, c=1.2, \din=16$) 
    from \Cref{fig:phase-change-llama3}. 
    (a) Before the loss drop, scores are noise. 
    (b) Immediately after, Layer 0 heads (top markers) 
    maximize PTH scores, and Layer 1 heads (bottom markers) maximize IH scores.}
    \label{fig:llama3-metrics-per-head}
\end{figure}

\begin{figure}[ht]
    \centering
    \begin{subfigure}[b]{0.45\textwidth}
        \centering
        \includegraphics[height=5cm]{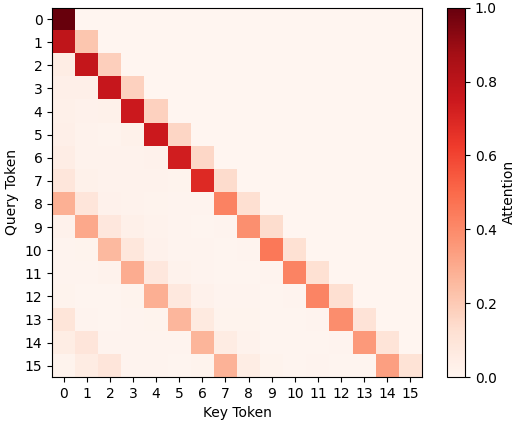}
        \caption{Layer 0 Activations (PTH)\mbox{\hspace{0.5cm}}}
    \end{subfigure}
    \hfill
    \begin{subfigure}[b]{0.45\textwidth}
        \centering
        \includegraphics[height=5cm]{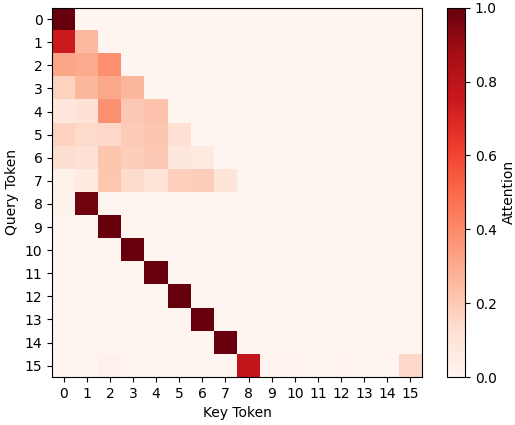}
        \caption{Layer 1 Activations (IH)\mbox{\hspace{0.5cm}}}
    \end{subfigure}
    \caption{\textbf{Functional Definition Confirmed by Attention Maps.} 
    Average attention weights at step $i=1000$ on IH-Repeat samples ($K = 8$). 
    (a) Layer 0 attends to the previous token (subdiagonal). 
    (b) Layer 1 attends to the token following the previous occurrence 
    of the current query (shifted subdiagonal).}
    \label{fig:llama3-ih-repeat-attn}
\end{figure}

\section{Transfer Beyond Gaussian Data: ICL Classification on Omniglot}
\label{app:downstream-omniglot}

To test if the learned induction mechanisms 
are abstract (transferable) or specific to Gaussian data, 
we transfer a GMC-trained model to the Omniglot few-shot 
classification task \citep{OmniglotDataset}.

\paragraph{Dataset and Protocol.}
We use the setup from \citet{Singh24InductionHeadFormation}. 
Sequences consist of interleaved images and labels:
\[
(x_1, y_1, x_2, y_2, x_q) \to y_q
\]
where $x_i$ are embeddings\footnote{
We utilize the $512$-dimensional ResNet-18 embeddings 
of Omniglot characters provided by \citep{Singh24InductionHeadFormation} 
at \url{https://github.com/aadityasingh/icl-dynamics/blob/main/omniglot_resnet18_randomized_order_s0.h5}.
} of handwritten characters and $y_i \in \{1, \dots, 5\}$ are labels. 
In each sequence, 
$x_q$ is a copy of either $x_1$ or $x_2$, 
and the target $y_q$ is the corresponding label. 
This is a match-and-copy task requiring visual feature 
matching rather than Gaussian correlation matching.

In particular, just as in \citet{Singh24InductionHeadFormation}, 
the data is a restriction of the original Omniglot 
dataset \citep{OmniglotDataset} 
($C=1623$ classes of characters with $N=20$ examples each) 
to single exemplars ($E=1$) drawn from 
disjoint $C_{\text{train}}=50$ classes for training 
and $C_{\text{test}}=100$ classes for testing. 
Sequences are constructed by sampling uniformly two 
distinct characters $(x_1, x_2)$ and assigning them 
uniformly random distinct labels $y \in \{1, \dots, L\}$ with $L=5$ 
(hence the same image $x$ can have different associated labels across sequences).  
The query $x_q$ is set to either $x_1$ or $x_2$ with equal probability, 
and the target $y_q$ is the corresponding label. 
The label randomization across sequences 
(same $x$ with different $y$)
forces the model to perform 
in-context learning rather than memorizing fixed image-label associations. 
The total number of unique training sequences is:
\[
\underbrace{C_{\text{train}}(C_{\text{train}}-1)}_{\text{pairs } (x_1,x_2)} 
\cdot \underbrace{L(L-1)}_{\text{pairs } (y_1,y_2)} 
\cdot \underbrace{2}_{\text{query } x_q} = 50 \cdot 49 \cdot 5 \cdot 4 \cdot 2 = 98,000
\]
Unlike \citet{Singh24InductionHeadFormation}, 
which reserves 20\% of training data 
to test "relabeling" performance 
for a separate purpose, 
we train on the full set of $98,000$ sequences and 
we evaluate exclusively on the 
disjoint $C_{\text{test}}$ classes. 

\begin{table}[h!]
\centering
\caption{Specifications of the experiments on Omniglot.}
\label{tab:omniglot-experiment-specs}
\begin{tabular}{lll}
\toprule
\textbf{Architecture} & \textbf{Omniglot Training Parameters (fine-tuning and from scratch)} \\
\midrule
Model: Llama 3 & Training steps: 4000 \\
Layers ($n_\mathrm{layer}$): 2 & Batch size: 32 \\
Hidden dimension ($d_{\mathrm{model}}$): 128 & Optimizer: AdamW \\
Head dimension ($d_{\mathrm{head}}$): 32 & Learning rate: $10^{-3}$ \\
Heads per layer: $d_{\mathrm{model}}/d_{\mathrm{head}}: 4$ & Linear warmup: 100 steps \\
FFN dimension: $4\times d_{\mathrm{model}}= 512$ & Scheduler: cosine decay \\
\bottomrule
\end{tabular} 
\end{table}

\paragraph{Transfer Method.}
We pretrain a Llama 3 model on GMC and transfer it to Omniglot by 
freezing the backbone and fine-tuning only the input and output projections. 
The Llama 3 model is chosen to ensure sufficient capacity while remaining 
computationally efficient (see \Cref{tab:omniglot-experiment-specs}). 
The transfer learning procedure consists of three steps:
\begin{enumerate}
\item \textbf{GMC Pretraining:} 
The model is first trained to convergence on 
an ``easy'' GMC configuration 
($T=8, c=1.1, d_{\text{in}}=16, \sigma=0.1$, 
``easy'' in the sense that the model quickly reaches the noise floor 
for these parameters as shown in \Cref{fig:sweep-llama3-gpt2-metrics}). 
We use the procedure from \Cref{app:mc-experimental-setup}, 
but with 2000 training steps and 400 warmup steps 
(instead of 1000 and 200). 
\item \textbf{Embeddings Adaptation:} 
We freeze the entire backbone (self-attention and FFNs). 
We replace the GMC input/output linear projections 
with new layers suited for Omniglot: 
the input embedding maps the concatenated image vectors 
and one-hot labels (dimension $512+L$) to $d_{\text{model}}$, 
and the unembedding maps $d_{\text{model}}$ to $L=5$ class logits.
\item \textbf{Fine-tuning:} 
We train \emph{only} the new embedding and 
unembedding layers to minimize cross-entropy loss, 
using the hyperparameters in \Cref{tab:omniglot-experiment-specs}.
\end{enumerate}

\paragraph{Main Result: Successful Transfer.}
As shown in \Cref{fig:downstream-omniglot} (main text), 
the model successfully learns Omniglot despite having 
its entire inference mechanism (attention + FFN) frozen 
from the Gaussian task. 
This indicates that the GMC-trained heads 
implement a distribution-agnostic ``match-and-copy'' operation, 
regardless of whether the input data is Gaussian or character embeddings.

\paragraph{Additional Result: Efficiency of Transfer.} 
For comparison, we also train the same Llama 3 model from scratch 
using the same optimization hyperparameters listed in \Cref{tab:omniglot-experiment-specs}.\footnote{
For which we obtain a loss curve 
similar to that reported in \citet{Singh24InductionHeadFormation},
where they used an attention-only architecture trained from scratch on the same Omniglot setup.
} 
Note that these optimization hyperparameters 
were originally tuned for this from-scratch training, 
not for fine-tuning, so 
the comparison is biased in favor of training from scratch. 
The GMC-pretrained model achieves $90\%$ test accuracy 
using $3\times$ fewer training FLOPs than the model trained from scratch. 
Furthermore, the model trained from scratch does not 
reach the $99\%$ test accuracy achieved by the pretrained 
model within the allotted budget (\Cref{fig:downstream-omniglot}). 
This shows that the 
match-and-copy mechanism learned during GMC pretraining 
not only transfers to Omniglot, 
but also confers a significant optimization advantage. 

\section{Comparison with Non-Attention Architectures}
\label{app:comparison-other-archs}

This section details the comparison between 
Transformers and non-attention sequence models presented 
in \Cref{subsec:gmc-comparison-archis}.

\paragraph{Model Configurations.}
To ensure a fair comparison, 
we match all models on inference FLOPs. 
FLOPs are directly measured with 
PyTorch (\texttt{torch.profiler})  
on a training batch (same batch size for all models: 512, 
and task parameters as detailed below). 
We test Gated Recurrent Units (GRU) \citep{Cho14GRU}, 
two State-Space Model (SSM) variants--S4 \citep{Gu22S4StructuredStateSpaceSequenceModel} 
and H3 \citep{Fu23H3}--and the Hyena architecture \citep{Poli23HyenaHierarchy}, 
a hierarchical long-convolution sequence model.

\Cref{tab:inference_flops} reports the 
configurations used for each architecture. 
A few details on the choices made:
\begin{itemize}[leftmargin=1.6em]
\item \textbf{Transformers (Llama 3, GPT-2).}  
  We reuse the settings of \Cref{tab:mc-experiment-specs} with two
  changes:\vspace{-0.3em}
  \begin{enumerate}[label=(\alph*), nosep]
    \item depth fixed to $\,n_{\mathrm{layer}}=2$;
    \item GPT-2 heads increased to $H=10$ so that
      $d_{\mathrm{model}}=H\times d_{\mathrm{head}}=320$, matching the
      FLOPs of the Llama 3 run.
  \end{enumerate}
  As usual the feed-forward width is $4d_{\mathrm{model}}$.
\item \textbf{SSMs (S4, H3), Hyena, and GRU.}
For these non-attention baselines, we explored deeper (up to 15 layers) and wider
(up to $d_{\mathrm{model}}=512$) configurations; none closed the
gap to Transformers.
We therefore retain the mid-range settings listed in
\Cref{tab:inference_flops}.
\end{itemize}

\noindent
Parameter counts in the table \emph{exclude} the input and output
projection layers ($d_{\mathrm{model}}\times d_{\mathrm{in}}$ and
$d_{\mathrm{out}}\times d_{\mathrm{model}}$, respectively), 
whereas FLOPs \emph{include} them.

\paragraph{GMC Configuration.}
All models are trained on the same ``easy'' GMC configuration
(it is among the easiest instances in our 
sweep according to \Cref{fig:sweep-llama3-gpt2-metrics}): 
\[
T=8,\qquad \din=\dout=16,\qquad \Cqe=\frac{1}{(1.1)^2}\I.
\]
All other parameters follow \Cref{app:mc-experimental-setup} 
($\stdnoise=0.1$, $\stdtoken=1$, i.i.d.\ $\WV\sim\mathcal{N}(0,1)$, AdamW, etc.).
Short sequences (small $T$) 
and a strong query-match correlation (small $c$)
maximize the 
chance that non-attention architectures succeed, while
Transformers remain robust on much harder settings
(\Cref{fig:sweep-llama3-gpt2-metrics}). 

\paragraph{Optimization.}
Training follows the protocol of
\Cref{app:mc-experimental-setup} with one modification:
we run $2000$ gradient steps (five times the iterations needed for
Transformers to reach near noise floor) to give slower-converging 
non-attention architectures 
a fair shot.
All other hyper-parameters---AdamW with learning rate
$10^{-3}$, batch size $512$, 
cosine decay with 400-step linear warm-up---are
kept identical across architectures.

\begin{table}[t]
    \centering
    \caption{Model specifications for architecture comparison. 
    All models operate with $d_{\mathrm{model}} \approx 192\text{--}320$ 
    and maintain comparable compute budgets.}
    \label{tab:inference_flops}
    \begin{tabular}{lrrr}
        \toprule
        \textbf{Model} & \textbf{Architecture} & \textbf{Parameter count} & \textbf{Inference GFLOPs} \\
        \midrule
        GPT-2 \citep{GPT2} & 2 layers, 10 heads, $d_{\mathrm{head}}=32$  & 2.47M & 22.8 \\
        Llama 3 \citep{Llama3} & 2 layers, 9 heads, $d_{\mathrm{head}}=32$   & 2.66M & 24.6 \\
        GRU \citep{Cho14GRU} & 10 layers, $d_{\mathrm{model}}=208$ & 2.60M & 24.0 \\
        S4  \citep{Gu22S4StructuredStateSpaceSequenceModel} & 8 layers, $d_{\mathrm{model}}=192$, $d_{\mathrm{state}}=64$ & 2.77M & 24.5 \\
        H3  \citep{Fu23H3} & 6 layers, $d_{\mathrm{model}}=192$, $d_{\mathrm{state}}=64$ & 3.04M & 24.6 \\
        Hyena \citep{Poli23HyenaHierarchy} & 5 layers, $d_{\mathrm{model}}=192$, order 5 & 3.10M & 25.6 \\
        \bottomrule
    \end{tabular}
\end{table}

\paragraph{Loss and Noise Floor.}
The irreducible contribution of the noise to 
the population MSE is 
$\frac{1}{\dout}\E\|\eps\|_2^2=\stdnoise^2=0.01$.  
Indeed, the population version of the MSE loss 
\eqref{eq:mse-loss-expes} we optimize is:
\[
\frac{1}{\dout} \E\|\model - \target\|_2^2
= \frac{1}{\dout} \E\|\model - \WV \token_{t_0+1}\|_2^2 
+ \frac{1}{\dout} \E\|\eps\|_2^2
\]
because $\target = \WV \token_{t_0+1} + \eps$ with independent 
$\eps\sim\mathcal N(0,\stdnoise^2\I_{\dout})$. 
We therefore say that a model reaches 
the \emph{noise floor} when its test MSE 
is on the order of $\stdnoise^2=0.01$.

\paragraph{Extended Training.}
Transformers solve the task 
(reaching the noise floor) 
within 2,000 steps (\Cref{fig:comparison-other-archs}). 
To rule out slow convergence as a factor for the baselines, 
we extended their training to 20,000 steps\footnote{
Same $400$-step warm-up, 
but with cosine decay stretched to 20,000 steps. 
This alters the learning rate profile (and thus the loss curve) during the first 2,000 steps, 
so the the first 2,000 steps of this extended run are not 
directly comparable to the original 2,000-step
curves in \Cref{fig:comparison-other-archs}. 
What \emph{is} comparable, however, are the \emph{final losses} at 
the end of each run, 
corresponding to their performance for a fixed training budget.  
} 
($10\times$ the Transformer budget).

\begin{figure}[ht]
\centering
\includegraphics[width=0.6\textwidth]{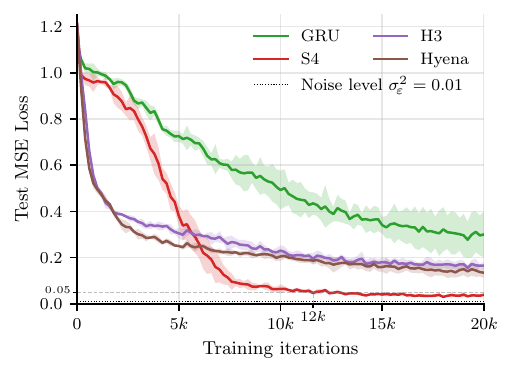}
\caption{\textbf{Extended Training for Non-Attention Models.} 
Even with 20{,}000 training steps, SSMs (S4, H3), the Hyena architecture, and GRU fail to reach the noise floor 
($\stdnoise^2=0.01$) that Transformers achieve in under 1{,}000 steps.}
\label{fig:ssm-20k}
\end{figure}

As shown in \Cref{fig:ssm-20k}, 
even with extended training, 
SSMs fail to solve the task. 
For example, S4 plateaus near MSE $0.05$ 
($5\times$ the noise floor) 
and it takes over 12,000 steps to reach this level, 
which is $30\times$ slower than Llama 3 
(\Cref{fig:comparison-other-archs} shows that 
Llama 3 reaches MSE $0.05$ in under $400$ steps, 
GPT-2 in under 600 steps, 
and both eventually hit the noise floor 
under 2,000 steps).
The other baselines perform worse:
Hyena and H3 plateau above $0.1$, 
and GRU settles in $0.2-0.4$ depending on the run. 
These findings reinforce a broader pattern 
in the literature suggesting that copying 
is fundamentally difficult for state-space models 
\citep{Arora24ZoologyRecall,Jelassi24TransformersVsSSMAtCopying,Patro24Mamba360}. 
Even architectures like H3 \citep{Fu23H3} 
or Hyena \citep{Poli23HyenaHierarchy} 
struggle to perform the precise, 
content-based retrieval required for GMC. 
This can be surprising since H3 explicitly shifts hidden states 
to mimic 
previous-token access, and both H3 and Hyena 
have been designed to 
perform well on tasks that are similar in spirit to GMC 
(associative retrieval, see \Cref{sec:related-work}). 
This shows that although GMC is similar to these tasks, 
it remains a distinct challenge that SSMs struggle to solve. 

\section{Formalization of \Cref{thm:mse-bias-main}}
\label{app:mse-bias}

In this appendix, we provide the formal setup underlying \Cref{thm:mse-bias-main}. 
Its proof is given in \Cref{app:mse-bias-proof}. 
The theorem is stated under explicit assumptions on the data geometry
and on the optimization trajectory. 
These assumptions are not claimed to hold universally; 
rather, they delineate a regime in which 
a max-margin implicit bias for MSE can be rigorously established.
A representative empirical sanity check illustrating this regime is given in
\Cref{subsec:assumptions-hold}. 

This section is organized as follows:
\begin{itemize}
    \item \Cref{subsec:mse-bias-setup} defines the data and model setup.
    \item \Cref{subsec:mse-bias-vectorized} re-expresses the model in vectorized notation to ease comparison with prior work on implicit bias.
    \item \Cref{subsec:mse-bias-ce} makes a small detour to discuss a case already covered in prior work: the Cross-Entropy (CE) loss.
    \item \Cref{subsec:mse-bias-theorem} states the main theorem.
\end{itemize}

\subsection{Notation and Setup}
\label{subsec:mse-bias-setup}
 
Consider $n$ training samples with the GMC format (\Cref{def:mc})
\[
    \underbrace{\token_1, \dots, \token_{t_0}, \token_{t_0+1}, \dots, \token_{T}}_{\text{context}}, 
    \underbrace{\query}_{\text{query}} \to \underbrace{\target}_{\text{target}}
\]
with exact match $\query=\token_{t_0}$ 
and exact copy $\target = \token_{t_0+1}$. 
For each sample $i \in \{1, \dots, n\}$, 
the notation is follows:
\begin{itemize}
    \item \textbf{Data:} 
    A context matrix 
    $\Ectx^{(i)} = [\token^{(i)}_1, \dots, \token^{(i)}_{T}] \in \R^{\din \times T}$ 
    and a query $\query^{(i)} \in \R^{\din}$.
    \item \textbf{Task:} 
    The correct match position is 
    $y_i = t_0^{(i)} \in \{1, \dots, T-1\}$, 
    such that the target is 
    $\target^{(i)} = \token^{(i)}_{y_i+1}$ (exact copy) 
    and the query satisfies $\query^{(i)} = \token^{(i)}_{y_i}$ 
    (exact match).
\end{itemize}

\paragraph{Simplified model: shifted-key single query attention head.}
The model follows the standard single-query attention head with parameters $(\WQ,\WK,\WV)$ 
\citep[Algorithm 3]{Phuong22FormalTransformers},
but with three modifications:
\begin{enumerate}
    \item \textbf{Shifted-keys (Frozen PTH).} The token at position $t$ is 
    copied according to attention weights computed from keys at position $t-1$ (i.e., shifted by one position, highlighted in \magenta{magenta} below):
    \[
        \sum_{t=1}^{T-1} \softmax(\Ectx_{\magenta{1:T-1}}^\top \WK^\top \WQ \query)_{\magenta{t}}\,
        \WV \token_{t\magenta{+1}}.
    \]
    This is exactly the computation needed for match-and-copy: 
    match the query against a previous occurrence, 
    then \emph{copy the successor} (thanks to the shifted key positions). 
    Shifting the keys can be understood 
    as adding a previous layer that would implement an exact PTH \citep[Sec 4.1]{Singh24InductionHeadFormation}\citep[Eq 4]{Wang25LazyRichLearningIH}. 
    \item \textbf{Merged key-query.} 
    We set $\Wkq=\WK^\top \WQ \in \R^{\din \times \din}$ and optimize $\Wkq$ directly. 
    \item \textbf{Fixed value.} 
    We \emph{freeze} the value matrix $\WV$ 
    to the ground truth one used in GMC data generation (namely, $\WV=\I$)
    and optimize only $\Wkq$. 
    This has two motivations.
    (i) Freezing $\WV$ allows us to isolate the non-convex dynamics induced by the softmax. 
    In contrast,
    the (empirical and population) MSE loss is \emph{strictly convex} in $\WV$
    since it appears linearly in the model output. 
    Thus, $\WV$ is not the main source of the optimization difficulty and, 
    in principle, can be solved optimally at each step. 
    In fact, already at initialization, 
    starting from $\Wkq = 0$ and solving 
    first for $\WV$ yields an unbiased estimator 
    of the true $\WV$, which converges to the exact 
    solution in the infinite-data limit. 
    This observation echoes multi-stage training procedures 
    proposed in prior work 
    \cite{Nichani24OneGramTaskLearnsIH,Chen24nGramTaskLearnsIH,Wang25LazyRichLearningIH}; we do not pursue this here. 
    (ii) $\WV$ does not affect \emph{which position is attended to},
and hence does not directly influence the directional implicit bias studied here. 
\end{enumerate}

For a context $\Ectx=(\token_1,\dots,\token_T)$ and query $\query$, the model output from \eqref{eq:minimal-model-output} is:
\begin{equation}
\label{eq:shifted-key-attention-model}
\text{model}(\Wkq;\Ectx,\query)
=
\sum_{t=2}^{T}
\underbrace{\softmax\bigl(\big(\token_s^\top \Wkq \query\big)_{s=1}^{T-1}\bigr)_{t-1}}_{
    =:a_t(\Wkq;\Ectx,\query)
}\; \WV\token_{t}.
\end{equation}

The implicit bias we prove is 
not specific to this key-shift: 
if instead of a target $\target=\token_{t_0+1}$ 
we had $\target=\token_{\sigma(t_0)}$ 
for a given injective map $\sigma$ 
fixed 
across all 
samples (i.e., the target
was copied from a fixed pairing $t_0\mapsto \sigma(t_0)$), 
the results would hold verbatim with notational changes, 
e.g., replacing $t_0+1$ by $\sigma(t_0)$, 
and the shifted-keys by a model 
with keys shifted according to $\sigma$: 
$\sum_{t\in\sigma^{-1}(\{1,\dots,T\})}
\softmax(\Ectx_{\sigma^{-1}(\{1,\dots,T\})}^\top \Wkq \query)_{\sigma^{-1}(t)}\,\token_{t}$.

\subsection{Vectorized Notation to Ease Comparison with \cite{Soudry18ImplicitBiasCE}}
\label{subsec:mse-bias-vectorized}

For readers who are familiar with max-margin bias results, 
we first reformulate the model using vectorized notation. 
This facilitates comparison with the seminal work of \citet{Soudry18ImplicitBiasCE} 
which characterizes the implicit bias 
of logistic regression (linear model with CE loss). 
This highlights how the MSE gradients diverge from the standard 
CE gradients and motivates the specific assumptions 
used to bridge the gap.

Denote the vectorized form of 
$\Wkq \in \R^{\din \times \din}$ by 
\[
w = \flatten(\Wkq) \in \R^{\din^2}.
\]
We introduce the feature vector for sample $i$ and position $t$:
\begin{equation}
    x_{i,t} := \flatten(\token^{(i)}_t \query^{(i)\top}) \in \R^{\din^2}.
\end{equation}
Using the identity $\token_t^{\top} \Wkq \query = \langle \Wkq, 
\token_t \query^\top \rangle_F = w^\top \flatten(\token_t \query^\top)$, 
the model's attention weights in \eqref{eq:shifted-key-attention-model}
can be expressed as:
\[
    a^{(i)}_1(w) = 0, \quad
    a^{(i)}_{t+1}(w) = \softmax\bigl(\{w^\top x_{i,s}\}_{s=1}^{T-1}\bigr)_{t}, 
    \quad \text{for } t \in \{1, \dots, T-1\}.
\]
Introducing the \emph{difference vectors} relative to the correct 
match position $y_i$:
\begin{equation}
    \label{eq:diff-features}
    \tilde{x}_{i,t} := x_{i,y_i} - x_{i,t}, \quad \text{for } t \neq y_i,
\end{equation}
we can rewrite the attention coefficients 
in terms of these difference vectors, 
as in \cite{Soudry18ImplicitBiasCE}:
\begin{equation}
  S_i(w) := \sum_{\substack{s=2,\dots,T \\ s \neq y_i}} \exp(-w^\top \tilde{x}_{i,s}),
  \quad \text{and} \quad
    a^{(i)}_{t+1}(w) 
    = \frac{\exp(-w^\top \tilde{x}_{i,t})}{
        1 + S_i(w)
    },
    \quad \text{for } t \neq y_i,
    \qquad
    a^{(i)}_{y_i+1}(w)
    = \frac{1}{
        1 + S_i(w)}.
\end{equation}
The margins in \eqref{eq:margin} can also be expressed in terms of these difference vectors:
\begin{equation}
    \label{eq:margin-vectorized}
    m_{i,t}(w) 
    := w^\top \tilde{x}_{i,t}, \quad \text{for } t \neq y_i.
\end{equation} 
As before, diverging margins ($m_{i,t}(w) \to \infty$) imply that 
the attention mass concentrates on the correct index $y_i$, i.e., $a^{(i)}_{y_i+1} \to 1$.

\subsection{The CE (Convex) Case: Covered by Existing Literature}
\label{subsec:mse-bias-ce}

This section connects \Cref{thm:mse-bias-main} 
to prior work on implicit bias,
by discussing the case of Cross-Entropy (CE) loss, 
which is already covered in \citet{Soudry18ImplicitBiasCE}. 
This serves to highlight the differences
between the CE and MSE cases,
and motivates the additional assumptions
needed to handle MSE.

The implicit bias of gradient descent is well understood 
for binary classification tasks
$(x_i, y_i)_{i \le n}$ with $y_i \in \{\pm 1\}$, 
provided the loss is strictly decreasing in 
a ``score/margin'' $y_i f(x_i; w)$. 
A canonical example is the linear model $f(x; w) = w^\top x$.
In this regime, 
gradient descent converges toward the max-margin solution. 
This includes linear models trained with cross-entropy,
\[
L(w) = \log\!\bigl(1 + \exp(-y w^\top x)\bigr) = \ell(y w^\top x),
\]
where $\ell$ is the logistic monotone decreasing loss
\cite{Soudry18ImplicitBiasCE,JiTelgarsky19Nonseparable}.

More recently, these results have been extended to a case of particular interest here:
one-layer attention models trained with losses that are monotone in the margin
(for binary classification, $y \in \{\pm 1\}$).
For instance, consider
\[
L(w) = \ell\!\left(y\, v^\top X \softmax\!\left(X^\top \WK^\top \WQ X\right)\right),
\]
where $v$ is fixed (a fixed linear decoder) and $\ell$ is monotone decreasing
(e.g., logistic $\ell(m)=\log(1+e^{-m})$, exponential $\ell(m)=e^{-m}$, or correlation $\ell(m)=-m$ losses) 
\citep{TarzanaghLZO23MaxMarginTokenSelection,Tarzanagh23TransformersAsSVM,Vasudeva25ImplicitBiasRatesSelfAttention}.

In particular, the existing literature already covers the 
multi-class CE loss defined in our setting by:
\begin{equation}
\label{eq:ce-loss}
\CE(w)
:= -\frac{1}{n} \sum_{i=1}^n \log a^{(i)}_{y_i+1}(w)
= \frac{1}{n} \sum_{i=1}^n \log\Big(1 + \sum_{\substack{t=1 \\ t \neq y_i}}^{T-1} \exp(-m_{i,t}(w))\Big).
\end{equation}
Indeed, 
the proof of Theorem~7 in \citet{Soudry18ImplicitBiasCE} applies 
with only a minor adaptation: 
unlike multi-class linear classification where one has a 
separate vector parameter $w_t$ per class $t$, 
here \emph{all positions share the same parameter} $w$. 
This shared-parameter constraint does not fundamentally alter the core argument,
and the standard proof can be adapted with minor modifications.
We obtain the next implicit bias: 
$w(\tau)/\|w(\tau)\|_F \to \hat w/\|\hat w\|_F$ 
and 
$\|w(\tau)\|_F \sim \|\hat w\|_F \log (\tau)$ 
where $\hat w$ is the max-margin solution:
\begin{equation}
    \label{eq:svm}
    \hat w 
    := \argmax_{w} \frac12 \|w\|_2^2
    \quad \text{s.t.} \quad
    \min_{i\le n} \min_{t \neq y_i} m_{i,t}(w) \ge 1.
\end{equation}

In terms of the original matrix $\Wkq$ (recall $w=\flatten(\Wkq)$), we get the following result, to compare with \Cref{thm:mse-bias-main}. 
\begin{theorem}[Implicit bias of $\CE$, adapted from \citet{Soudry18ImplicitBiasCE}]
\label{thm:ce-bias-mainpaper}
Consider $\Wkq(\tau+1) = \Wkq(\tau) - \eta \nabla \CE(\Wkq(\tau))$ 
on the \emph{CE} loss \eqref{eq:ce-loss}. 
Assume: 
(i) the max-margin problem in \eqref{eq:svm} is feasible,
(ii) a standard ``correction vector'' existence condition as in \citet{Soudry18ImplicitBiasCE},
and (iii) $\eta$ is sufficiently small (below a smoothness-based bound). 
Then $\Wkq(\tau)/\|\Wkq(\tau)\|_F \to \maxmargin/\|\maxmargin\|_F$ 
and 
$\|\Wkq(\tau)\|_F \sim \|\maxmargin\|_F \log \tau$. 
\end{theorem}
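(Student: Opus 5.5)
The plan is to reduce \Cref{thm:ce-bias-mainpaper} to the argument of \citet{Soudry18ImplicitBiasCE} for linear multiclass logistic regression, in the vectorized notation $w=\flatten(\Wkq)$ of \Cref{subsec:mse-bias-vectorized}. By \eqref{eq:ce-loss}, $\CE(w)=\frac1n\sum_i \log\bigl(1+\sum_{t\neq y_i}\exp(-w^\top\tilde x_{i,t})\bigr)$. This is a convex function of finitely many linear forms $w\mapsto w^\top\tilde x_{i,t}$, exactly as in the multiclass case, but with a single shared parameter: each pair $(i,t)$ contributes one difference vector $\tilde x_{i,t}$ in the same space $\R^{\din^2}$. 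The problem is therefore structurally identical to a binary logistic-type problem on the data set $\{\tilde x_{i,t}\}$, except that the loss per sample is a log-sum-exp over several margins rather than a single $\ell(m)$.

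\textbf{Step 1: smoothness and convergence of the loss.} The Hessian of $\CE$ is bounded by $\frac1n\sum_i\max_t\|\tilde x_{i,t}\|^2$ times a constant, so $\CE$ is $\beta$-smooth. For $\eta<2/\beta$ (assumption (iii)), the standard descent lemma gives $\sum_\tau\|\nabla\CE(w(\tau))\|^2<\infty$. Assumption (i) gives a separator $\hat w$ with all $m_{i,t}(\hat w)\ge1$, so $\inf\CE=0$ is not attained. Since $\CE$ is convex, GD drives $\CE(w(\tau))\to0$; hence every margin diverges and $\|w(\tau)\|\to\infty$.

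\textbf{Step 2: tail of the gradient.} We have
\[
-\nabla\CE(w)=\frac1n\sum_i\sum_{t\neq y_i}\frac{e^{-m_{i,t}(w)}}{1+S_i(w)}\,\tilde x_{i,t}.
\]
The denominator $1+S_i\to1$, so asymptotically
\[
-\nabla\CE(w)=\frac1n\sum_{i,t}e^{-m_{i,t}(w)}\tilde x_{i,t}\,\bigl(1+o(1)\bigr).
\]
This is exactly the exponential-tail gradient of \citet{Soudry18ImplicitBiasCE} on the pooled data set. The $(1+o(1))$ factor is of the type $1+O(e^{-\text{margin}})$, and \citet{Soudry18ImplicitBiasCE} already handle this for the logistic loss; here we use that $1/(1+S_i)\ge 1-S_i$ with $S_i$ exponentially small.

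\textbf{Step 3: Soudry decomposition.} Write $w(\tau)=\hat w\log\tau+\tilde w+r(\tau)$. Here $\tilde w$ is the correction vector of assumption (ii), satisfying $\eta\exp(-\tilde w^\top\tilde x_{i,t})=\alpha_{i,t}$ on the support set, where $\hat w=\sum_{\text{SV}}\alpha_{i,t}\tilde x_{i,t}$ are the KKT multipliers of \eqref{eq:svm}. Then repeat the proof that $\|r(\tau)\|$ stays bounded. One expands $\|r(\tau+1)\|^2-\|r(\tau)\|^2$ and shows the increments are summable:
\begin{enumerate}
\item The support terms cancel with the $\hat w(\log(\tau+1)-\log\tau)$ step up to $O(\tau^{-2})$.
\item Non-support terms satisfy $m_{i,t}(\hat w)>1$, so they contribute $O(\tau^{-\theta})$ with $\theta>1$.
\item The cross terms have a favorable sign, as in Lemma~5 of that paper.
\end{enumerate}

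\textbf{Step 4: conclusion.} Boundedness of $r$ yields $w(\tau)/\|w(\tau)\|\to\hat w/\|\hat w\|$ and $\|w(\tau)\|=\|\hat w\|\log\tau+O(1)$. Unflattening gives the claim for $\Wkq$ and $\maxmargin$.

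The main obstacle is Step 3 in the shared-parameter multiclass setting. For a fixed sample $i$, several margins $m_{i,t}$ enter the same normalizer $1+S_i$. We must therefore check that the multiplicative coupling $1/(1+S_i)$ does not spoil the $O(\tau^{-\theta})$ summability bounds. We would first bound $|1/(1+S_i)-1|\le S_i\le (T-2)\max_t e^{-m_{i,t}}$. This yields an extra factor of order $\tau^{-1}$ on top of terms already of order $\tau^{-1}$, hence summable, and lets the original argument go through verbatim.
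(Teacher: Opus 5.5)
Your route is the paper's: the paper only asserts that the proof of Theorem~7 in \citet{Soudry18ImplicitBiasCE} carries over. Your observation that \eqref{eq:ce-loss} is a log-sum-exp of pooled linear forms $w^\top\tilde x_{i,t}$ is exactly why the shared parameter is harmless, since multiclass logistic regression reduces to this same form with pooled vectors $(e_{y_n}-e_k)\otimes x_n$.

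The weak point is the step you flag as the main obstacle: your control of the normalizer $1/(1+S_i)$ is circular as written. Claiming $S_i=O(\tau^{-1})$ presupposes that all margins are $\log\tau+O(1)$, i.e.\ that $r(\tau)$ is bounded, which is the conclusion. Also, for $u_{i,t}:=\langle r(\tau),\tilde x_{i,t}\rangle<0$ the support terms have size $\tau^{-1}|u_{i,t}|e^{|u_{i,t}|}$, not $O(\tau^{-1})$. Finally, unlike the binary logistic correction $1/(1+e^{-u})$, $S_i$ depends on the \emph{other} margins of sample $i$. These can be small when $r$ is large in other directions, so ``$S_i$ exponentially small'' does not follow from anything you have at that point.

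To close the gap, normalize the correction vector as $\frac{\eta}{n}e^{-\tilde w^\top\tilde x_{i,t}}=\alpha_{i,t}$ (note the $\frac1n$ from \eqref{eq:ce-loss}) and set $v(\tau)=\log(1+1/\tau)$. Then each support pair contributes exactly
\[
\tau^{-1}\alpha_{i,t}\,u_{i,t}\Big(\frac{e^{-u_{i,t}}}{1+S_i}-1\Big)-\big(v(\tau)-\tau^{-1}\big)\alpha_{i,t}\,u_{i,t}
\]
to the drift.
\begin{itemize}
\item The first part is nonpositive unless $-\log(1+S_i)<u_{i,t}<0$. In that case it is at most $\tau^{-1}\alpha_{i,t}\,|u_{i,t}|\,S_i\le\tau^{-1}\alpha_{i,t}S_i^2$.
\item The second part is $O(\tau^{-2})|u_{i,t}|$: it is absorbed by the restoring first part when $|u_{i,t}|$ is large and is summable otherwise.
\item For non-support pairs you only need $1/(1+S_i)\le 1$.
\end{itemize}
It remains to sum $\tau^{-1}S_i(w(\tau))^2$, which needs an a priori decay rate, not just $S_i\to0$. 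Your Step~1 supplies one if you keep the rate rather than the limit. The standard convex gradient-descent estimate with comparator $\hat w\log\tau$ gives $\CE(w(\tau))=O(\log^2\tau/\tau)$ for $\eta<2/\beta$. Hence $S_i\le e^{n\CE(w(\tau))}-1=O(\log^2\tau/\tau)$, and with this the rest of Soudry's argument goes through.
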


In contrast, our GMC objective is MSE regression, 
which differs\footnote{
MSE is also nonconvex, which 
creates additional challenges compared to the logistic regression 
case \cite{Soudry18ImplicitBiasCE}, 
but nonconvexity is a
feature shared with 
prior attention implicit-bias results 
\citep{TarzanaghLZO23MaxMarginTokenSelection,Tarzanagh23TransformersAsSVM,Vasudeva25ImplicitBiasRatesSelfAttention} 
so this is not a new difficulty specific to MSE.
} in important ways from the classification losses above:
it is not monotone in the margins (it is not of the form $\sum_{i,t} \ell(m_{i,t})$ with $\ell$ decreasing 
and $m_{i,t}$ the margins defined in \eqref{eq:margin-vectorized}), 
it admits finite-norm interpolants under non-identifiability, 
and its gradient contains quadratic interaction terms that 
do not need to keep increasing the margins (see \Cref{rem:ce-comparison} for more details).  
These differences will motivate additional geometric and 
trajectory-stability structural assumptions to recover a max-margin bias. 
Besides these different assumptions, 
\Cref{thm:mse-bias-main} provides 
a different growth rate for the weight norm:
for MSE, we have $\|\Wkq(\tau)\|_F \sim \|\maxmargin\|_F \,\tfrac12 \log \tau$,
whereas for CE (\Cref{thm:ce-bias-mainpaper})
it is $\|\Wkq(\tau)\|_F \sim \|\maxmargin\|_F \, \log \tau$. 
Here,
the extra factor $\tfrac12$ (relative to the $\CE$ scaling of $1$) 
is due to the different structure of the MSE gradients. 
Indeed, we will see in the proof that the function $f(\tau)$
governing the norm growth by $\|w(\tau)\| \approx f(\tau)$
solves $f'(\tau) \approx \|\nabla \MSE(w(\tau))\|$,
and the MSE gradients evolve differently from CE gradients
due to the quadratic terms in the MSE gradients
(discussed in more detail above \eqref{eq:def-tilde-w} in \Cref{app:mse-bias-proof}).

\subsection{Main Theorem}
\label{subsec:mse-bias-theorem}

We now formally state the implicit bias result. 
The assumptions are grouped into three categories: 
\emph{Data Geometry} (properties of the tokens and query), 
\emph{Successful Optimization} (convergence of the loss), and 
\emph{Admissible Trajectory} 
(technical controls on the softmax dynamics 
that ensure asymptotic stability). 

\begin{theorem}[Implicit Bias of MSE]
\label{thm:mse-bias-app}
Consider the dataset and model defined above. 
We train the model's weight vector $w \in \R^{\din^2}$ 
via GD $w(\tau+1) = w(\tau) - \eta \nabla\MSE(w(\tau))$ 
with a fixed step size $\eta>0$ on the MSE loss:
\begin{equation}
    \label{eq:mse-loss-vec}
    \MSE(w) = \sum_{i=1}^n \frac{1}{2} \Bigl\| \sum_{t=1}^{T} a^{(i)}_t(w) 
    \token^{(i)}_t - \token^{(i)}_{y_i+1} \Bigr\|^2.
\end{equation}
For any sample $i$ and token position
$t\neq y_i$, 
let 
$d_{i,t} := \token^{(i)}_{t+1} - \token^{(i)}_{y_i+1}$ 
denote the error directions in token space.
Assume the following conditions hold:

\paragraph{I. \DataGeom{Formalization of the Event in \Cref{subsec:gmc-geometric-event}}.}
\begin{enumerate}
    \item[\textbf{(A1)}] \textbf{Separability.} 
    The hard-margin SVM problem is feasible, i.e., the constraints in the following optimization problem are satisfiable.  
    As a consequence, there is a unique max-margin separator $\hat{w}$ defined by:
    \begin{equation}
        \label{eq:svm-app}
        \hat{w} = \argmin_{w \in \R^{\din^2}} \frac{1}{2}\|w\|^2 
        \quad \text{s.t.} \quad w^\top \tilde{x}_{i,t} \ge 1, 
        \quad \forall i, \forall t \neq y_i.
    \end{equation}
    Uniqueness of $\hat{w}$ follows from the strict convexity of the problem. 
    We denote by $\ISV := \{(i,t) : \hat{w}^\top \tilde{x}_{i,t} = 1\}$ 
    the set of support vectors (active constraints) for the max-margin solution $\hat{w}$. 
    
    \item[\textbf{(A2)}] \textbf{Linear Independence.} 
    The support vectors $\{\tilde{x}_{i,t} : (i,t) \in \ISV\}$ are 
    linearly independent.
    
    \item[\textbf{(A3)}] \textbf{Identifiability.} 
    For all $i$, the target $\token^{(i)}_{y_i+1}$ does not lie in 
    the convex hull of the distractor tokens 
    $\{\token^{(i)}_{t+1} : t \neq y_i\}$.
    
    \item[\textbf{(A4)}] 
    \textbf{Half-space.} 
    For all $i$, the error directions lie in a common half-space: 
    $\langle d_{i,t}, d_{i,s} \rangle > 0$ for all $t, s \neq y_i$.
\end{enumerate}

\paragraph{II. \Success{Assuming Successful Optimization}.}
\begin{enumerate}
    \item[\textbf{(A5)}] \textbf{Loss Convergence.} 
    The loss converges to zero: $\MSE(w(\tau)) \to 0$ as $\tau \to \infty$.
    \item[\textbf{(A6)}] \textbf{Summable Gradients.} 
    The gradients are square-summable: 
    $\sum_{\tau=0}^{\infty} \|\nabla \MSE(w(\tau))\|^2 < \infty$.
\end{enumerate}

\paragraph{III. \Converg{Formalization of the Assumptions in \Cref{subsec:mse-technical-assumptions} (\Cref{ass:mse-stabilization})}.}
Let 
$h_{i,t}(w) := \sum_{s \neq y_i} \langle d_{i,t}, d_{i,s} \rangle \, 
\frac{a^{(i)}_{s+1}(w)}{1 - a_{i,y_i+1}(w)}$ 
be a renormalization of the projection of the residual 
onto the error direction $d_{i,t}$. 
\begin{enumerate}
    \item[\textbf{(A7)}] \textbf{Convergence of Pre-factors.} 
    For all $(i,t)$, the limit 
    $h_{i,t}^\infty := \lim_{\tau \to \infty} h_{i,t}(w(\tau))$ exists, 
    and the convergence is sufficiently fast such that 
    $\sum_{\tau=1}^\infty \tau^{-1} |h_{i,t}(w(\tau)) - h_{i,t}^\infty| < \infty$.
    
    \item[\textbf{(A8)}] \textbf{Alignment of Off-Diagonal Drift.} 
    The interactions between support vectors via the softmax 
    Jacobian satisfy the alignment condition detailed in 
    \Cref{ass:alignment} 
    (ensuring the drift does not destabilize the max-margin direction). 
    
    \item[\textbf{(A9)}] \textbf{Support Domination.} 
    The softmax tail is dominated by the support vectors.  
    Specifically, there exists a function 
    $g: \mathbb{N} \to \R$ with 
    $\sum_{\tau} g(\tau)\tau^{-1} < \infty$ 
    such that for all $i$, letting $\ISV(i) := \{s : (i,s) \in \ISV\}$,
    \[
        \sum_{s \notin \ISV(i)} \exp(-w(\tau)^\top \tilde{x}_{i,s}) 
        \le g(\tau) 
        \sum_{s \in \ISV(i)} \exp(-w(\tau)^\top \tilde{x}_{i,s}).
    \]
\end{enumerate}

\noindent \textbf{Conclusion.} Under these assumptions, 
the parameter vector $w(\tau)$ diverges in norm and aligns with the max-margin separator $\hat{w}$:
\[
    \|w(\tau)\| \sim \frac{\|\hat w\|}{2} \log(\tau)
    \qquad\text{and}\qquad
    \frac{w(\tau)}{\|w(\tau)\|} \longrightarrow \frac{\hat{w}}{\|\hat{w}\|}.
\]
\end{theorem}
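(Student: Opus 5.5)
The approach follows the Soudry et al. template for \Cref{thm:ce-bias-mainpaper}, adapted to the MSE gradient structure. I would first compute the gradient explicitly. Write $r_i(w)=\sum_t a^{(i)}_t\token^{(i)}_t-\token^{(i)}_{y_i+1}=\sum_{t\neq y_i}a^{(i)}_{t+1}d_{i,t}$. Differentiating the softmax in terms of the margins $m_{i,t}$ gives
\[
-\nabla\MSE(w)=\sum_i\sum_{t\neq y_i} a^{(i)}_{t+1}\bigl(\langle d_{i,t},r_i\rangle\bigr)\,\tilde x_{i,t}
=\sum_i (1-a^{(i)}_{y_i+1})\sum_{t\neq y_i} a^{(i)}_{t+1}\,h_{i,t}(w)\,\tilde x_{i,t}.
\]
By (A4), $h_{i,t}>0$, so $-\nabla\MSE$ is a positive combination of the difference vectors. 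Combined with (A1), this gives $\langle -\nabla \MSE(w),\hat w\rangle>0$ whenever the loss is nonzero.

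From (A3), no finite $w$ achieves zero loss, since $r_i=0$ would force $\token_{y_i+1}$ into the convex hull. Hence (A5) forces $a^{(i)}_{y_i+1}\to1$ for all $i$, so every margin $m_{i,t}(w(\tau))\to\infty$ and $\|w(\tau)\|\to\infty$. Asymptotically, $1-a^{(i)}_{y_i+1}\approx S_i(w)$ and $a^{(i)}_{t+1}\approx e^{-m_{i,t}}$. The leading term of the negative gradient is therefore
\[
\sum_i S_i(w)\sum_t e^{-m_{i,t}}h^\infty_{i,t}\tilde x_{i,t}.
\]
This is quadratic in the exponentials, not linear as in CE, and this is the source of the factor $\tfrac12$. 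Using (A9), both $S_i$ and the inner sum reduce to support-vector terms up to a summable relative error $g(\tau)$.

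Next I would use the ansatz $w(\tau)=\hat w\,f(\tau)+\tilde w+\rho(\tau)$. On the support, $\hat w^\top\tilde x_{i,t}=1$, so the leading gradient behaves like $e^{-2f(\tau)}\sum_{(i,t),(i,s)\in\ISV}c_{i,t,s}\tilde x_{i,t}$, with coefficients depending on $\tilde w$ and $h^\infty$. Matching $f'(\tau)\approx\eta e^{-2f}\cdot(\text{const})$ gives $f(\tau)=\tfrac12\log\tau+O(1)$. The vector $\tilde w$ is then chosen, using (A2) (linear independence of the support vectors) and the KKT representation $\hat w=\sum_{\ISV}\alpha_{i,t}\tilde x_{i,t}$ with $\alpha>0$, so that the leading gradient equals $\hat w/(2\eta\tau)$ exactly. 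This requires solving a nonlinear system for $\tilde w$ in which cross-terms $e^{-\tilde w^\top(\tilde x_{i,t}+\tilde x_{i,s})}$ couple the supports within each sample.

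The core step is showing $\|\rho(\tau)\|$ stays bounded. Following Soudry et al., I would expand
\[
\|\rho(\tau+1)\|^2-\|\rho(\tau)\|^2=-2\eta\langle\nabla\MSE,\rho\rangle-2\langle\rho,\text{(ansatz increment)}\rangle+\text{h.o.t.},
\]
and show the sum over $\tau$ of the right-hand side is bounded above by $C+C\sum_\tau\|\rho\|$-type terms that force boundedness. The quadratic terms are controlled by (A6) together with the summability of $\log(1+1/\tau)-1/\tau$ type terms. The errors from $h_{i,t}\neq h^\infty_{i,t}$ become summable through (A7), since they appear weighted by $\tau^{-1}$. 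Non-support contributions are summable by (A9).

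The main obstacle is the sign of the main cross term $\langle\rho,-\nabla\MSE-\hat w/(2\eta\tau)\rangle$. In CE, each support direction gives $(e^{-\rho^\top\tilde x}-1)\rho^\top\tilde x\le0$ automatically. For MSE, one gets products $e^{-\rho^\top(\tilde x_{i,t}+\tilde x_{i,s})}$ paired with $\rho^\top\tilde x_{i,t}$ for $t\neq s$. I would attack this by symmetrizing over $(t,s)$ and invoking (A8) (\Cref{ass:alignment}) to guarantee that the off-diagonal terms are nonpositive or dominated by the diagonal ones.

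Once $\rho$ is bounded, the conclusion follows directly: $w(\tau)=\tfrac12\hat w\log\tau+O(1)$. This gives $\|w(\tau)\|\sim\tfrac{\|\hat w\|}{2}\log\tau$ and directional convergence to $\hat w/\|\hat w\|$.
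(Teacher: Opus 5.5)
Your overall structure matches the paper's proof: the target trajectory $\tfrac12\log(\tau)\,\hat w+\tilde w$, the energy-plus-drift expansion with the energy part controlled by (A6), the support/non-support split, and symmetrization in $(t,s)$ plus (A8) for the off-diagonal terms. However, your gradient formula is wrong, and this matters. Since $\partial a^{(i)}_{s+1}/\partial m_{i,t}=a^{(i)}_{s+1}(a^{(i)}_{t+1}-\delta_{st})$, one gets $\partial r_i/\partial m_{i,t}=-a^{(i)}_{t+1}(d_{i,t}-r_i)$, hence (\Cref{lem:grads})
\[
-\nabla\MSE(w)=\sum_i\sum_{t\neq y_i}a^{(i)}_{t+1}\Bigl((1-a^{(i)}_{y_i+1})\,h_{i,t}(w)-\|r_i(w)\|^2\Bigr)\tilde x_{i,t}.
\]
You dropped the quadratic term $-\|r_i\|^2$. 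Consequently, your claims that $-\nabla\MSE$ is always a positive combination of the $\tilde x_{i,t}$ and that $\langle-\nabla\MSE(w),\hat w\rangle>0$ whenever the loss is nonzero are false in general, even under (A4). Suppose $a^{(i)}_{y_i+1}\approx 0$ and two distractors $t,s$ share almost all the mass equally. Then the coefficient of $\tilde x_{i,t}$ has the sign of $\langle d_{i,t}-r_i,r_i\rangle\approx(\|d_{i,t}\|^2-\|d_{i,s}\|^2)/4$, which is negative for the shorter error direction. Only the aggregate $\sum_{t\neq y_i}\pi^{\MSE}_{i,t}=a^{(i)}_{y_i+1}\|r_i\|^2$ is positive.

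You never use positivity afterwards, so the damage is contained. But your drift bookkeeping is then missing a term: $-\eta\,a^{(i)}_{t+1}\|r_i\|^2\,u_{i,t}(\tau)$, where $u_{i,t}=\langle\rho,\tilde x_{i,t}\rangle$. It is present for both support and non-support pairs, is positive whenever $u_{i,t}<0$, and must be controlled separately. The paper does this with \Cref{lem:identifiability-margin-divergence}, $\|r_i\|\le C(1-a^{(i)}_{y_i+1})\le C\,S_i$, together with (A9). This gives an $O(\tau^{-3/2})$ bound when $\min_{(i,t)\in\ISV}u_{i,t}(\tau)\ge -R$, and otherwise a bound $c\,\tau^{-1}S_i\,\Phi(\min_{(i,t)\in\ISV}u_{i,t})$, absorbed by the restoring force since $S_i\to0$. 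Likewise, your ``leading term'' is leading only because this quadratic part becomes negligible once the margins diverge (\Cref{lem:mse-asymptotics-dvg-margins}); it is not absent.

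Two further points need tightening. First, several remainders are not summable on their own. These are the ones coming from $h_{i,t}\neq h^\infty_{i,t}$, from the non-support tail, and from replacing $a^{(i)}_{t+1}(1-a^{(i)}_{y_i+1})$ by $e^{-m_{i,t}}S_i$. When some support coordinate $u_{i,t}(\tau)$ is very negative, they carry factors of order $|u|e^{2|u|}$. The argument closes only through a case split on whether $\min_{(i,t)\in\ISV}u_{i,t}(\tau)\ge -R$. On that event, (A7) and (A9) give summable bounds. Off it, the terms are absorbed by the restoring bound $-\tfrac{c}{\tau}\sum_{(i,t)\in\ISV}\Phi(u_{i,t})$ of \Cref{lem:isv-restoring}, whose exponential branch $|u|e^{2|u|}$ for $u\le -R$ is exactly what is needed.

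Second, the system for $\tilde w$ has an explicit solution, which you should state to establish existence. Note that $h_{i,t}(w)\ge\min_s\langle d_{i,t},d_{i,s}\rangle>0$ by (A4), so $h^\infty_{i,t}>0$. Set $c_{i,t}=\alpha_{i,t}/(2\eta h^\infty_{i,t})>0$ and $p_{i,t}=c_{i,t}/\sqrt{\sum_{s\in\ISV(i)}c_{i,s}}$, so that $p_{i,t}\sum_{s\in\ISV(i)}p_{i,s}=c_{i,t}$. Then use (A2) to pick $\tilde w\in\linspan\{\tilde x_{i,t}:(i,t)\in\ISV\}$ with $\tilde w^\top\tilde x_{i,t}=-\log p_{i,t}$.
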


\paragraph{Interpretation.}
\Cref{thm:mse-bias-app} is a \emph{conditional} asymptotic result.
It does not assert that gradient descent on MSE always converges 
to a max-margin direction,
nor that the listed assumptions are automatically satisfied.
Instead, it states that, conditioned on the right data geometry, 
if a trajectory achieves vanishing loss
and enters a stabilized asymptotic regime described by (A6)--(A9),
then its direction converges to the max-margin separator at a logarithmic rate.
The size and prevalence of this regime are left open;
nevertheless, our experiments indicate that such stabilized trajectories
are observed in practice, 
and the next remark clarifies which assumptions are structural
and which are primarily optimization-related.

\begin{remark}[On the scope of the assumptions]
The assumptions of \Cref{thm:mse-bias-app} serve different roles and should not be interpreted uniformly.
\begin{enumerate}
    \item \textbf{Geometric assumptions (A1--A4).}
    These concern separability, identifiability, and correlation structure of GMC data.
    For Gaussian match-and-copy, we show in \Cref{subsec:gaussian-data} that these conditions
    hold with high probability as the embedding dimension $\din$ grows relative to $(T,n)$. 
    Assumption A3 (identifiability) prevents the existence of 
    finite-norm interpolants (\Cref{subsec:identifiability-margin-divergence}). 
    Without it, the loss could reach zero with finite weights, 
    and the margins would not need to diverge. 
    Assumption A4 (half-space) ensures that
    the gradient will eventually point in a direction 
    that increases all margins simultaneously 
    as the loss vanishes (see the discussion before \Cref{lem:mse-asymptotics-dvg-margins}). 
    
    \item \textbf{Successful optimization (A5--A6).}
    These assumptions condition on gradient descent reaching a low-loss regime with square-summable gradients.
    We do not attempt to characterize when this occurs; instead, 
    we ask what happens \emph{if} it does motivated by the empirical observation that this is typical. 

    \item \textbf{Trajectory stability (A7--A9).}
    These assumptions formalize an asymptotic stabilization regime of the MSE dynamics
    in which support vectors dominate and cross-terms remain controlled.
    They are not claimed to hold for all runs. 
    In \Cref{subsec:assumptions-hold}, we exhibit a representative trajectory 
    of the runs where we found this regime to hold empirically. 
\end{enumerate}
\end{remark}

\section{Proof of \Cref{thm:mse-bias-app}}
\label{app:mse-bias-proof}

We prove \Cref{thm:mse-bias-app} in this section, 
which is the formal version of \Cref{thm:mse-bias-main} in the main text. 

At a high level, 
the proof adapts the 
classical implicit-bias machinery
for separable losses 
\citep{Soudry18ImplicitBiasCE,JiTelgarsky19Nonseparable} 
to our setting, with additional 
assumptions and work to 
handle the challenges specific to MSE. 

For a fixed $\tilde w$ (chosen later), consider the target trajectory
\begin{equation}
    w^*(\tau) := f(\tau)\hat{w} + \tilde{w}, \qquad f(\tau) = \frac{1}{2}\log(\tau).
    \label{eq:target-trajectory}
\end{equation}
It is enough to show that the residual
$r(\tau) = w(\tau) - w^*(\tau)$ remains bounded as $\tau\to\infty$. 
Since 
$\sup_{\tau }\|r(\tau)\|^2 = \sup_{\tau} \sum_{s=1}^\tau \big(\|r(s+1)\|^2 - \|r(s)\|^2\big)$ 
and each increment can be decomposed as:
\begin{equation}
  \|r(\tau+1)\|^2 - \|r(\tau)\|^2
  = \underbrace{\|r(\tau+1)-r(\tau)\|^2}_{\energyinc(\tau): \text{ energy increment}}
    + 2\underbrace{\big\langle r(\tau+1)-r(\tau),\,r(\tau)\big\rangle}_{
      \drift(\tau): \text{ instantaneous drift}},
  \label{eq:norm-increment}
\end{equation}
it suffices to show that both series 
$\energyinc(\tau)$ and $\drift(\tau)$ are bounded from above by summable sequences.

Introduce the shorthand $v(\tau) := f(\tau+1)-f(\tau)$. 
The increment of the residual is given by:
\[
r(\tau+1) - r(\tau) 
= \underbrace{-\eta \nabla \L(w(\tau))}_{\text{GD step}}
 - \underbrace{v(\tau)\hat{w}}_{\text{target drift}}.
\]
In particular, $\energyinc(\tau)$ is summable.
Indeed, the inequality $\|a+b\|^2 \le 2(\|a\|^2+\|b\|^2)$ gives:
\[
      \energyinc(\tau)
      \le 2\eta^2 \|\nabla\L(w(\tau))\|^2
         + 2v(\tau)^2 \|\hat{w}\|^2
\]
and we have assumed the gradients are \Success{$\ell^2$-summable}, 
and the second term satisfies $v(\tau)^2 = O(1/\tau^2)$. 
We now study the drift $\drift(\tau)=\langle 
-\eta \nabla \L(w(\tau)) - 
v(\tau)\hat{w} 
,
r(\tau)\big\rangle$. 

\subsection{Drift decomposition}

Since the gradient can be expressed as 
$-\nabla\L = 
\sum_{(i,t)\in\ISV\cup(\ISV)^c} \pi^{\MSE}_{i,t}(w(\tau)) \tilde x_{i,t}$ 
(\Cref{lem:grads}), 
and KKT conditions give $\hat{w} = \sum_{(i,t)\in\ISV} \alpha_{i,t} \tilde{x}_{i,t}$, 
we can decompose the drift as a sum of a contribution from support vectors and a contribution from non-support vectors:
\begin{align}
    \drift(\tau) 
    &= \underbrace{\sum_{(i,t) \in \ISV} 
    \left( \eta \pi^{\MSE}_{i,t}(w(\tau)) 
    - v(\tau)\alpha_{i,t} \right) 
    \inner{\tilde{x}_{i,t}}{r(\tau)}}_{\DISV(\tau)} \nonumber \\
    &+ \underbrace{\sum_{(i,t) \notin \ISV} 
    \eta \pi^{\MSE}_{i,t}(w(\tau)) 
    \inner{\tilde{x}_{i,t}}{r(\tau)}}_{\DnonISV(\tau)}.
    \label{eq:drift-decomp-raw}
\end{align}

\subsection{Analysis of the support drift}

We analyze
\[
    \DISV(\tau)
    :=
    \sum_{(i,t)\in\ISV}
    \bigl(\eta \pi^{\MSE}_{i,t}(w(\tau)) - v(\tau)\alpha_{i,t}\bigr)\,u_{i,t}(\tau),
    \qquad
    u_{i,t}(\tau):=\inner{r(\tau)}{\tilde x_{i,t}}.
\]

To understand how to analyze this term, 
we begin with an informal explanation 
leading to the definition of the correction vector $\tilde w$ in \Cref{def:correction-vector} below.

\paragraph{Why the target scale $f(\tau)=\tfrac12\log\tau$ and the correction $\tilde w$ are forced 
if $r(\tau)$ is bounded.} 
We start building intuition by 
assuming what we want to show, i.e., that $r(\tau)$ remains bounded as $\tau\to\infty$. 
In particular, $u_{i,t}(\tau)=\inner{r(\tau)}{\tilde x_{i,t}}$ 
remains bounded as $\tau\to\infty$. 
We rely on informal asymptotics\footnote{A rigorous version of 
these asymptotics can be derived under bounded $r(\tau)$ using 
arguments similar to \Cref{lem:mse-asymptotics-dvg-margins}, 
we omit the details here as they are not needed for the rest of the proof.}  
and the reader can confidently skip the details here if desired 
and resume at the definition of $\tilde w$ in \Cref{def:correction-vector} below.

In the $r(\tau)$ bounded regime, it can be 
seen
that the leading contribution to the MSE gradient coefficients with $(i,t)\in\ISV$ is
\[
    \eta \pi^{\MSE}_{i,t}(w(\tau))
    \approx
    \eta h_{i,t}\,e^{-2f(\tau)}\,
    e^{-\tilde w^\top\tilde x_{i,t}}
    \sum_{s\in\ISV(i)}e^{-\tilde w^\top\tilde x_{i,s}},
\]
where $\ISV(i) := \{s : (i,s) \in \ISV\}$ is the set of support vectors for sample $i$, 
and where $h_{i,t}:=\lim_{\tau\to\infty} h_{i,t}(w(\tau))\in(0,\infty)$ 
automatically exists for bounded $r(\tau)$. 
These coefficients decay at rate $e^{-2f(\tau)}$.  
By contrast, non-support vector contributions can be shown to decay faster. 
So the gradient 
is supported asymptotically on the support vectors only: 
$-\nabla\L(w(\tau))\approx \sum_{(i,t)\in\ISV} \pi^{\MSE}_{i,t}(w(\tau)) \tilde x_{i,t}$ 
as $\tau\to\infty$, 
with a decay rate of order $e^{-2f(\tau)}$ for the coefficients.

Meanwhile the target trajectory has an increment 
$v(\tau)\hat w$ with 
$v(\tau)=f(\tau+1)-f(\tau)$, and 
if the 
target trajectory remains close to the GD iterates,
their respective increments must match asymptotically:
\[
    -\eta\nabla\L(w(\tau))\approx v(\tau)\hat w.
\]
Since $\nabla\L(w(\tau))$ is asymptotically supported on the support vectors only, 
and $\hat w=\sum_{(i,t)\in\ISV}\alpha_{i,t}\tilde x_{i,t}$, 
with linearly independent $\tilde x_{i,t}$'s by the \DataGeom{rank} assumption,
we must have, for each $(i,t)\in\ISV$,
\[    
\eta \pi^{\MSE}_{i,t}(w(\tau))
    \approx v(\tau)\alpha_{i,t}.
\] 
Based on the above asymptotics for $\eta \pi^{\MSE}_{i,t}(w(\tau))$, 
this can be rewritten as 
\[
    \eta h_{i,t}\,e^{-2f(\tau)}\,
    e^{-\tilde w^\top\tilde x_{i,t}}
    \sum_{s\in\ISV(i)}e^{-\tilde w^\top\tilde x_{i,s}}
    \approx
    v(\tau)\alpha_{i,t}.
\]
This forces $v(\tau) \approx f'(\tau)\asymp e^{-2f(\tau)}$ which implies $f(\tau)=\tfrac12\log\tau+O(1)$. 
We hence fix $f(\tau)=\tfrac12\log\tau$. 
Then $e^{2f(\tau)} v(\tau) \to \frac12$, so $\tilde w$ must be a solution of:
\begin{equation}
        \eta h_{i,t}\,e^{-\tilde w^\top\tilde x_{i,t}}
    \sum_{s\in\ISV(i)}e^{-\tilde w^\top\tilde x_{i,s}}
    =
    \frac12\,\alpha_{i,t},
    \qquad\forall (i,t)\in\ISV.
    \label{eq:def-tilde-w}
\end{equation}
\begin{definition}
[Correction vector $\tilde w$]
\label{def:correction-vector}
We define $\tilde w$ as the unique solution of the system \eqref{eq:def-tilde-w} 
in the linear span of the support vectors $\linspan(\ISV):=
\linspan\{\tilde x_{i,t} : (i,t)\in\ISV\}$.
\end{definition}
Existence and uniqueness follow from the following argument. 
Define $c_{i,t}:= \frac{\alpha_{i,t}}{2\eta h_{i,t}} > 0$ 
and consider $z_{i,t} := c_{i,t} / \sqrt{\sum_{s\in\ISV(i)} c_{i,s}}$, 
which is well defined since $c_{i,t}>0$ for all $(i,t)\in\ISV$.  
By \DataGeom{linear independence of the support vectors} $\{\tilde x_{i,t} : (i,t)\in\ISV\}$,
there exists a unique $\tilde w\in\linspan(\ISV)$ 
such that $\tilde w^\top \tilde x_{i,t} = -\log(z_{i,t})$ for all $(i,t)\in\ISV$. 
This $\tilde w$ is a solution of \eqref{eq:def-tilde-w}
(and the only one in $\linspan(\ISV)$). We fix this $\tilde w$ from now on and resume the proof of \Cref{thm:mse-bias-app}.

\paragraph{Softmax tail split and deterministic remainders.}

Split the softmax tail
\[
    S_i(w):=\sum_{s\neq y_i} e^{-w^\top \tilde x_{i,s}}
\]
into support and non-support parts:
\[
    S_i(w)= S_i^{\ISV}(w) + S_i^{\nonISV}(w),
    \qquad
    S_i^{\ISV}(w):=\sum_{s\in\ISV(i)} e^{-w^\top \tilde x_{i,s}},
    \qquad
    S_i^{\nonISV}(w):=\sum_{s\notin\ISV(i)} e^{-w^\top \tilde x_{i,s}}.
\]
For each $(i,t)\in\ISV$ define the  remainders
\begin{equation}
\begin{aligned}
    \rest^{\pi}_{i,t}(w)
    \;:=\;&
    \eta\pi^{\MSE}_{i,t}(w)
    - \eta h_{i,t}\,e^{-w^\top \tilde x_{i,t}}\,
    S_i^{\ISV}(w),
    \\
    \rest^{v}_{i,t}(\tau)
    \;:=\;&
    v(\tau)\alpha_{i,t}
    - \frac{1}{2\tau} \alpha_{i,t},
\end{aligned}
\label{eq:def-rest-coef}
\end{equation}
where $h_{i,t}=\lim_{\tau\to\infty}h_{i,t}(w(\tau))$ is the limit given by the 
\Converg{convergence of pre-factors} assumption.

With this notation,
\begin{equation}
\begin{aligned}
    \eta \pi^{\MSE}_{i,t}(w(\tau)) - v(\tau)\alpha_{i,t}
    &=
    \Big(\eta h_{i,t}e^{-w(\tau)^\top\tilde x_{i,t}}S_i^{\ISV}(w(\tau)) 
    - \frac{1}{2\tau}\alpha_{i,t}\Big)
    + \rest^{\pi}_{i,t}(w(\tau)) - \rest^{v}_{i,t}(\tau),
\end{aligned}
\label{eq:ISV-coef-split}
\end{equation}
and hence we split the support drift into a \emph{restoring} term plus a \emph{remainder} term:
\begin{equation}
\DISV(\tau)=\restoreISV(\tau)+\restISV(\tau),
\label{eq:ISV-drift-split}
\end{equation}
where
\begin{equation}
\begin{aligned}
    \restoreISV(\tau)
    &:=
    \sum_{(i,t)\in\ISV}
    \Big(
    \eta h_{i,t}e^{-w(\tau)^\top\tilde x_{i,t}}S_i^{\ISV}(w(\tau))
    - \frac{1}{2\tau}\alpha_{i,t}
    \Big)\,u_{i,t}(\tau),
    \\
    \restISV(\tau)
    &:=
    \sum_{(i,t)\in\ISV}
    \Big(\rest^{\pi}_{i,t}(w(\tau))-\rest^{v}_{i,t}(\tau)\Big)\,u_{i,t}(\tau).
\end{aligned}
\label{eq:restore-rest-defs}
\end{equation}

\paragraph{Step 1: $\restoreISV$ is restoring.}

Introduce the notation
\[
    p_{i,t} := e^{-\tilde w^\top \tilde x_{i,t}} > 0.
\] 
Note that by definition of $\tilde w$ in \eqref{eq:def-tilde-w}, it holds for all $(i,t)\in\ISV$:
\begin{equation}
    \eta h_{i,t}\,p_{i,t}
    \sum_{s\in\ISV(i)} p_{i,s}
    =
    \frac{1}{2}\alpha_{i,t}.
    \label{eq:implicit-w-tilde}
\end{equation}
We now plug $w(\tau)=f(\tau)\hat w+\tilde w+r(\tau)$ with 
$f(\tau)=\tfrac12\log\tau$ into $\restoreISV(\tau)$. 
For $(i,t)\in\ISV$,
\[
    e^{-w(\tau)^\top\tilde x_{i,t}}=\tau^{-1/2}p_{i,t}e^{-u_{i,t}(\tau)},
    \qquad
    S_i^{\ISV}(w(\tau))=\tau^{-1/2}\sum_{s\in\ISV(i)}p_{i,s}e^{-u_{i,s}(\tau)}.
\]
Therefore,
\begin{align}
    \restoreISV(\tau)
    &=
    \eta\,\tau^{-1}\sum_i\sum_{t\in\ISV(i)}
    h_{i,t}\,p_{i,t}\,u_{i,t}(\tau)\,
    \Bigg(
      e^{-u_{i,t}(\tau)}\sum_{s\in\ISV(i)}p_{i,s}e^{-u_{i,s}(\tau)}
      \;-\;
      \sum_{s\in\ISV(i)}p_{i,s}
    \Bigg)\\
    &=
    \eta\,\tau^{-1}\sum_i\sum_{t,s\in\ISV(i)}
    h_{i,t}\,p_{i,t}p_{i,s}\,u_{i,t}(\tau)\,
    \big(e^{-u_{i,t}(\tau)-u_{i,s}(\tau)}-1\big).
\label{eq:restoreISV-unsym}
\end{align}

Symmetrizing \eqref{eq:restoreISV-unsym} in $(s,t)$ yields 
(we add the same sum with $s$ and $t$ swapped and divide by 2):
\begin{equation}
    \restoreISV(\tau)
    =
    \frac{\eta}{2}\,\tau^{-1}\sum_i\sum_{s,t\in\ISV(i)}
    p_{i,t}p_{i,s}\,H_{i,t,s}(\tau)\,\big(e^{-U_{i,t,s}(\tau)}-1\big),
\label{eq:restoreISV-sym}
\end{equation}
with
\[
    H_{i,t,s}(\tau):=h_{i,t}u_{i,t}(\tau)+h_{i,s}u_{i,s}(\tau),
    \qquad
    U_{i,t,s}(\tau):=u_{i,t}(\tau)+u_{i,s}(\tau).
\]

For off-diagonal terms $s\neq t$, 
since $U(e^{-U}-1)\le 0$ for all $U\in\R$,
there might be positive contributions to the drift when 
$H$ and $U$ have opposite signs. 
However, after numerical inspections on 
Gaussian match-and-copy data, 
we find that eventually most off-diagonal contributions 
in \eqref{eq:restoreISV-sym} 
remain negative, i.e., 
most $s\neq t$ pairs satisfy $\sign(H_{i,t,s}(\tau))=\sign(U_{i,t,s}(\tau))$. 
To formalize this, we make the following assumption. 

\begin{assumption}[\Converg{Alignment Assumption (A8) in \Cref{thm:mse-bias-app}}]
\label{ass:alignment}
There exists $\eps\in(0,1/2)$ such that, for all sufficiently large $\tau$,
the total off-diagonal contribution in \eqref{eq:restoreISV-sym} 
coming from pairs $(s,t)\in\ISV(i)^2$ with $s\neq t$
and $\sign(H_{i,t,s}(\tau))\neq\sign(U_{i,t,s}(\tau))$ is at 
most an $\eps$-fraction (in absolute value) of the total off-diagonal mass:
\begin{equation}
    \label{eq:alignment-assumption}
  \sum_{
    \substack{s,t\in\ISV(i),\\ s\neq t,\\ \sign(H_{i,t,s}(\tau))\neq\sign(U_{i,t,s}(\tau))}
  }
  p_{i,t} p_{i,s}\,|H_{i,t,s}(\tau)|\,\big|e^{-U_{i,t,s}(\tau)}-1\big|
  \le \eps
  \sum_{
    \substack{s,t\in\ISV(i),\\ s\neq t}
  }
  p_{i,t} p_{i,s}\,|H_{i,t,s}(\tau)|\,\big|e^{-U_{i,t,s}(\tau)}-1\big|.
\end{equation}
\end{assumption}

Under this assumption, 
the off-diagonal terms have a non-positive contribution.
Indeed, writing for brevity
$H_{t,s}=H_{i,t,s}(\tau)$ and $U_{t,s}=U_{i,t,s}(\tau)$, 
and all sums being restricted to $s, t\in\ISV(i)$ with $s\neq t$ (off-diagonal terms):
\begin{align}
\sum
     p_{i,t} p_{i,s} \,
     H_{t,s} \big(e^{-U_{t,s}} - 1\big)
  &= \sum
     p_{i,t} p_{i,s} \,
     \big|H_{t,s}\big| \,\big|e^{-U_{t,s}} - 1\big|\,
     \underbrace{\sign(H_{t,s})\sign(e^{-U_{t,s}}-1)}_{=-\,\sign(H_{t,s}U_{t,s})} \nonumber\\
  &= - \sum
     p_{i,t} p_{i,s} \,
     \big|H_{t,s}\big| \,\big|e^{-U_{t,s}} - 1\big|\,
     \sign(H_{t,s}U_{t,s}) \nonumber\\
  &= 
     \sum
     p_{i,t} p_{i,s} \,
     \big|H_{t,s}\big| \,\big|e^{-U_{t,s}} - 1\big| \indicator_{H U < 0}
     \;-\;
     \sum
     p_{i,t} p_{i,s} \,
     \big|H_{t,s}\big| \,\big|e^{-U_{t,s}} - 1\big| \indicator_{H U > 0} \nonumber\\
  &= 
     2 \sum
     p_{i,t} p_{i,s} \,
     \big|H_{t,s}\big| \,\big|e^{-U_{t,s}} - 1\big| \indicator_{H U < 0}
     \;-\;
     \sum
     p_{i,t} p_{i,s} \,
     \big|H_{t,s}\big| \,\big|e^{-U_{t,s}} - 1\big| \nonumber\\
  &\le 
     -(1-2\eps)
     \sum
     p_{i,t} p_{i,s} \,
     \big|H_{t,s}\big| \,\big|e^{-U_{t,s}} - 1\big|
  \;\le\; 0 \label{eq:off-diag-negative-alignment}
\end{align}
which is negative since $\eps\in(0,1/2)$. 
We deduce the following lemma, which shows that $\restoreISV$ is a restoring force: 
it helps push the trajectory back towards the target trajectory when $u_{i,t}(\tau)$ is large. 

\begin{lemma}[Restoring force from support-tail of support drift]
\label{lem:isv-restoring}
Fix any $R>0$ large enough. Under \Converg{Assumption (A8) of alignment}, 
formalized in
\Cref{ass:alignment}, there exist constants $c>0$ and $\tau_1$
such that for all $\tau\ge\tau_1$,
\begin{equation}
    \restoreISV(\tau)
    \le
    -\frac{c}{\tau}\sum_{(i,t)\in\ISV}\Phi\big(u_{i,t}(\tau)\big),
    \label{eq:isv-restoring}
\end{equation}
where
\[
    \Phi(u):=
    |u|\Big(e^{2|u|}\,\indicator_{\{u\le -R\}}+\indicator_{\{u\ge R\}}\Big).
\]
\end{lemma}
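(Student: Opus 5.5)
We work from the symmetrized expression \eqref{eq:restoreISV-sym}. We split $\restoreISV(\tau)$ into diagonal terms ($s=t$) and off-diagonal terms ($s\neq t$). The off-diagonal part is already shown to be $\le 0$ by \eqref{eq:off-diag-negative-alignment}, using \Cref{ass:alignment}. We therefore simply drop it. This is legitimate because we only seek an upper bound, and the whole lower-bound mass must come from the diagonal.

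\textbf{Diagonal terms.} For $s=t$ we have $H_{i,t,t}=2h_{i,t}u_{i,t}$ and $U_{i,t,t}=2u_{i,t}$. The diagonal contribution is therefore
\[
\frac{\eta}{2}\tau^{-1}\sum_{(i,t)\in\ISV} p_{i,t}^2\,2h_{i,t}\,u_{i,t}\big(e^{-2u_{i,t}}-1\big).
\]
We need $h_{i,t}>0$ and the $p_{i,t}$ bounded below. The latter is automatic, since $\tilde w$ is fixed. The former holds because $h_{i,t}$ is a convex combination of $\langle d_{i,t},d_{i,s}\rangle$, each positive by the half-space assumption (A4). The combination is over finitely many $s$, so its limit is at least $\min_{s}\langle d_{i,t},d_{i,s}\rangle>0$. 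Set
\[
c_0:=\eta\min_{(i,t)}p_{i,t}^2 h_{i,t}>0 .
\]
Each diagonal summand is then $-c_0\tau^{-1}$ times the nonnegative quantity $u(1-e^{-2u})$, since $u(e^{-2u}-1)\le 0$ always.

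\textbf{Scalar inequality.} It remains to check $u(1-e^{-2u})\ge c_1\Phi(u)$ for some $c_1>0$, for all $u$. We split into three cases.
\begin{itemize}
\item For $|u|<R$, $\Phi(u)=0$, and the inequality is trivial by nonnegativity.
\item For $u\ge R$, we have $u(1-e^{-2u})\ge (1-e^{-2R})u=(1-e^{-2R})\Phi(u)$.
\item For $u\le -R$, we have $u(1-e^{-2u})=|u|(e^{2|u|}-1)\ge (1-e^{-2R})|u|e^{2|u|}=(1-e^{-2R})\Phi(u)$.
\end{itemize}
Hence $c_1=1-e^{-2R}$ works, and the lemma follows with $c=c_0c_1$, with $\tau_1$ the threshold from \Cref{ass:alignment}.

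\textbf{Main obstacle.} The delicate point is justifying that dropping the off-diagonal terms is uniform in $\tau$. \Cref{ass:alignment} only holds for $\tau$ large, which is why $\tau_1$ appears. The off-diagonal bound \eqref{eq:off-diag-negative-alignment} is stated per sample $i$, and we sum it over $i$. We also need that the off-diagonal sign analysis uses exactly the constants $h_{i,t}$, not $h_{i,t}(w(\tau))$. This holds by construction in \eqref{eq:restore-rest-defs}, where the limits were substituted and the discrepancy was moved to $\restISV$. Finally, "$R$ large enough" plays no essential role here: any $R>0$ gives $c_1>0$. A large $R$ is presumably needed only later, when $\Phi$ must dominate the remainder terms.
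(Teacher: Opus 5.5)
Your proof is correct and follows the paper's own route. You discard the off-diagonal terms of \eqref{eq:restoreISV-sym} using \eqref{eq:off-diag-negative-alignment}, and you bound each diagonal term $\eta\tau^{-1}p_{i,t}^2h_{i,t}\,u_{i,t}(e^{-2u_{i,t}}-1)$ by a negative multiple of $\tau^{-1}\Phi(u_{i,t})$, with two small improvements over the paper. Your explicit inequality $u(1-e^{-2u})\ge(1-e^{-2R})\Phi(u)$ replaces the paper's asymptotic ``close to'' step with a bound valid for every $R>0$, and your use of (A4) to get $h_{i,t}\ge\min_s\langle d_{i,t},d_{i,s}\rangle>0$ supplies a positivity fact the paper only asserts.
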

\begin{proof}
Consider first diagonal terms $s=t$ in \eqref{eq:restoreISV-sym}.
They equal
\[
    p_{i,t}^2\,(2h_{i,t}u_{i,t}(\tau))\,(e^{-2u_{i,t}(\tau)}-1)\le 0.
\]
Since $u(e^{-2u}-1)\sim -|u|e^{2|u|}$ for $u\to-\infty$ and 
$u(e^{-2u}-1)\sim -u$ for $u\to\infty$, 
this diagonal term is close to $-2p_{i,t}^2 h_{i,t} \Phi(u_{i,t}(\tau))$ 
provided that $|u_{i,t}(\tau)|$ is large enough. 
In particular, there exists a constant $c>0$ 
(depending on $R$ and on $\min_{(i,t)\in\ISV}p_{i,t}^2 h_{i,t} > 0$) such that
the diagonal part is bounded above by
$-(c/\tau)\sum_{(i,t)\in\ISV}\Phi(u_{i,t}(\tau))$. 

Since off-diagonal terms $s\neq t$ remains non-positive 
\eqref{eq:off-diag-negative-alignment}, 
we get the upper bound in \eqref{eq:isv-restoring}.
\end{proof}

\paragraph{Step 2: $\restISV$ is compensated 
by the restoring force $\restoreISV$.} 

We bound the remainder drift
\[
    \restISV(\tau)
    =
    \sum_{(i,t)\in\ISV}
    \Big(\rest^{\pi}_{i,t}(w(\tau))-\rest^{v}_{i,t}(\tau)\Big)\,u_{i,t}(\tau).
\]

\paragraph{The time-discretization remainder $\rest^v$.}

Since $f(\tau)=\frac12\log\tau$, we have
\[
    v(\tau)=f(\tau+1)-f(\tau)
    =\frac12\log\Big(1+\frac{1}{\tau}\Big)
    =\frac{1}{2\tau} + O(\tau^{-2})
\]
hence there exists $c>0$ such that for all $\tau\ge 1$ and all $(i,t)\in\ISV$,
\begin{equation}
    |\rest^{v}_{i,t}(\tau)|
    =
    \Big|v(\tau)-\frac{1}{2\tau}\Big|\,\alpha_{i,t}
    \le
    c\,\tau^{-2}.
\label{eq:restv-abs}
\end{equation}
Therefore,
\begin{equation}
    \sum_{(i,t)\in\ISV} |\rest^{v}_{i,t}(\tau)|\,|u_{i,t}(\tau)|
    \le
    c\,\tau^{-2}\sum_{(i,t)\in\ISV}|u_{i,t}(\tau)|
    \le c'\,\tau^{-2}\Big(R+ \sum_{(i,t)\in\ISV}\Phi\big(u_{i,t}(\tau)\big)\Big),
\label{eq:restv-drift-bound}
\end{equation}
where we used $u \le R + \Phi(u)$ for all $u\in\R$ in the last inequality. 
The first term $c'R\tau^{-2}$ is summable in $\tau$,
while the second term can be absorbed by the restoring force \eqref{eq:isv-restoring} 
for $\tau$ large enough 
($c' \tau^{-2} \Phi(u) \le \frac{1}{100} c \tau^{-1} \Phi(u)$ 
for $\tau$ large enough so we still get a restoring force with modified constant $0.99 c$, say).

\paragraph{The gradient coefficient remainder $\rest^{\pi}$.}

\Cref{lem:grads} 
in the appendix gives the exact expression
for the MSE gradient coefficients:
\[
    \pi^{\MSE}_{i,t}(w)
    =
    a^{(i)}_{t+1}(w)\Big((1-a^{(i)}_{y_i+1}(w))\,h_{i,t}(w) - \|r^{(i)}(w)\|_2^2\Big),
    \quad \forall (i,t), w.
\]
Then,
\begin{align*}
    \rest^{\pi}_{i,t}(w(\tau)) u_{i,t}(\tau)
    =&
    \eta\,h_{i,t}(w(\tau)) u_{i,t}(\tau)
    \,\Big[
    a^{(i)}_{t+1}(w(\tau))\,(1-a^{(i)}_{y_i+1}(w(\tau)))
    - e^{-w(\tau)^\top\tilde x_{i,t}}S_i(w(\tau))
    \Big]
    \\
    &
    +\eta\,e^{-w(\tau)^\top\tilde x_{i,t}}S_i(w(\tau))\,u_{i,t}(\tau)\,
    \Big[h_{i,t}(w(\tau))-h_{i,t}\Big]
    \\
    &
    +\eta\,h_{i,t}\,e^{-w(\tau)^\top\tilde x_{i,t}}\,u_{i,t}(\tau)\,
    \Big[S_i(w(\tau))-S_i^{\ISV}(w(\tau))\Big]
    \\
    &
    -\eta\,a^{(i)}_{t+1}(w(\tau))\,\|r^{(i)}(w(\tau))\|_2^2\,u_{i,t}(\tau).
\end{align*}
We treat each of these four terms separately. 
We will use repeatdly that for $(i,t)\in\ISV$, 
expanding $w(\tau)=f(\tau)\hat w+\tilde w+r(\tau)$ with $f(\tau)=\tfrac12\log\tau$ 
and $u_{i,t}(\tau)=\inner{r(\tau)}{\tilde x_{i,t}}$ 
gives 
\[
    e^{-w(\tau)^\top\tilde x_{i,t}}
    =
    \tau^{-1/2}p_{i,t}e^{-u_{i,t}(\tau)},
\]
with $p_{i,t}=e^{-\tilde w^\top \tilde x_{i,t}}>0$ 
a constant independent of $\tau$ (only depending on the data 
through $\tilde w$ and $\tilde x_{i,t}$), 
and gives $(i,s)\notin\ISV$, 
\[
    e^{-w(\tau)^\top\tilde x_{i,s}}
    =
    \tau^{-1/2 - \theta_{i,t}/2} p_{i,s} e^{-u_{i,s}(\tau)}.
\]
\begin{enumerate}
    \item Since $a^{(i)}_{t+1}(w)(1-a^{(i)}_{y_i+1}(w)) 
    = e^{-w^\top\tilde x_{i,t}}S_i(w)/(1+S_i(w))^2$ for all $w$, 
    the first term equals
    \[
        \eta\,h_{i,t}(w(\tau))\,u_{i,t}(\tau)
        \,e^{-w(\tau)^\top\tilde x_{i,t}}\,S_i(w(\tau))
        \underbrace{\Big(\frac{1}{(1+S_i(w(\tau)))^2}-1\Big)}_{\le 0}.
    \]
    If $u_{i,t}(\tau)\ge 0$, this term is non-positive 
    and keeps the sum bounded by something summable in $\tau$.
    If $u_{i,t}(\tau) < 0$, 
    it is positive and 
    using that $S_i(w(\tau)) = S_i^{\ISV}(w(\tau)) + S_i^{\nonISV}(w(\tau)) 
    \le (1+g(\tau)) S_i^{\ISV}(w(\tau)) \le c \tau^{-1/2} e^{|u_{\min(\ISV)}(\tau)|}$ 
    by the \Converg{softmax domination} assumption, 
    we can bound it as
    \begin{align*}
        & \underbrace{c \eta\,h_{i,t}(w(\tau))\,p_{i,t}}_{
            0<\cdot < c\eta h_{\max} p_{\max}
        }
        \, \tau^{-1/2} 
        \underbrace{|u_{i,t}(\tau)| e^{|u_{i,t}(\tau)|}}_{
            \le 
            |u_{\min(\ISV)}(\tau)| e^{|u_{\min(\ISV)}(\tau)|}
        }
        \,
        \tau^{-1/2} p_{\max} e^{|u_{\min(\ISV)}(\tau)|}
        \,
        \underbrace{\Big(1 - \frac{1}{(1+S_i(w(\tau)))^2}\Big)}_{
            = \frac{S_i(2+S_i)}{(1+S_i)^2}
            \le 2S_i
        }
        \\
        \le & 
        c'\,\tau^{-1} 
        \,|u_{\min(\ISV)}(\tau)| e^{2|u_{\min(\ISV)}(\tau)|}
        \,S_i(w(\tau)).
    \end{align*}
    Either $u_{\min(\ISV)}(\tau) \ge -R$, in which case the term is 
    bounded by $c'\tau^{-1} S_i(w(\tau))$, 
    and using again that $S_i(w(\tau)) \le c \tau^{-1/2} e^{|u_{\min(\ISV)}(\tau)|}$ 
    by the \Converg{softmax domination} 
    assumption, this is bounded by $c''\tau^{-3/2}$ 
    which is summable in $\tau$;
    or $u_{\min(\ISV)}(\tau) \le -R$, in 
    which case $|u_{\min(\ISV)}(\tau)| e^{2|u_{\min(\ISV)}(\tau)|} = \Phi(u_{\min(\ISV)}(\tau))$ 
    and since $S_i(w(\tau))\to 0$ as $\tau\to\infty$ 
    by divergence of the margins (\Cref{lem:identifiability-margin-divergence}), 
    this becomes negligible compared to the restoring force \eqref{eq:isv-restoring}
    for $\tau$ large enough. In both cases, the first term is controlled.
    \item For the second term 
    \[
    \eta\,e^{-w(\tau)^\top\tilde x_{i,t}}S_i(w(\tau))\,u_{i,t}(\tau)\,(h_{i,t}(w(\tau))-h_{i,t})
    \]
    we bound it in absolute value using that 
    $S_i(w(\tau)) \le c \tau^{-1/2} e^{|u_{\min(\ISV)}(\tau)|}$ 
    by the \Converg{softmax domination} assumption,
    yielding the bound
    \[
        c\,\tau^{-1} e^{-u_{i,t}(\tau)} |u_{i,t}(\tau)|
        e^{-u_{\min(\ISV)}(\tau)}
        \,|h_{i,t}(w(\tau))-h_{i,t}|.
    \]
    If $u_{\min(\ISV)}(\tau) \ge -R$, 
    then this is bounded by $c'\tau^{-1} e^{-u_{i,t}(\tau)} |u_{i,t}(\tau)| |h_{i,t}(w(\tau))-h_{i,t}|$. 
    Either $u_{i,t}(\tau)\ge 0$, in which case $|u_{i,t}(\tau)| e^{-u_{i,t}(\tau)} \le 1$, 
    or $-R\leq u_{i,t}(\tau)\le 0$, 
    and in both cases the term is bounded by
    \[
        c''\,\tau^{-1}
        \,|h_{i,t}(w(\tau))-h_{i,t}|,
    \]
    which is summable in $\tau$ by the \Converg{pre-factor convergence rate} assumption.
    If $u_{\min(\ISV)}(\tau) \le -R$, 
    then again if $u_{i,t}(\tau)\ge 0$, we have $|u_{i,t}(\tau)| e^{-u_{i,t}(\tau)} \le 1$, 
    and the term is bounded by
    \[
        c\,\tau^{-1} \Phi(u_{\min(\ISV)}(\tau))
        \,|h_{i,t}(w(\tau))-h_{i,t}|.
    \]
    Or if $u_{i,t}(\tau)\le 0$, then $|u_{i,t}(\tau)| e^{-u_{i,t}(\tau)} 
    \le |u_{\min(\ISV)}(\tau)| e^{|u_{\min(\ISV)}(\tau)|}$ and the term can again be bounded by
    \[
        c\,\tau^{-1} |u_{\min(\ISV)}(\tau)| e^{2|u_{\min(\ISV)}(\tau)|}
        \,|h_{i,t}(w(\tau))-h_{i,t}|
        = c\,\tau^{-1} \Phi(u_{\min(\ISV)}(\tau))
        \,|h_{i,t}(w(\tau))-h_{i,t}|.
    \]
    In both cases, we have the same bound that can be absorbed 
    by the restoring force \eqref{eq:isv-restoring} 
    since $|h_{i,t}(w(\tau))-h_{i,t}|\to 0$ as $\tau\to\infty$ 
    by the \Converg{convergence of pre-factors} assumption.
    \item For the third term
    \[
    \eta\,h_{i,t}\,e^{-w(\tau)^\top\tilde x_{i,t}}\,u_{i,t}(\tau)\,(S_i(w(\tau))-S_i^{\ISV}(w(\tau))),
    \]
    we note that $S_i(w(\tau))-S_i^{\ISV}(w(\tau))=S_i^{\nonISV}(w(\tau))\ge 0$. 
    For $u_{i,t}(\tau)\le 0$, this term is non-positive
    and keeps the sum bounded by something summable in $\tau$. 
    For $u_{i,t}(\tau)\ge 0$, we use that $S_i^{\nonISV}(w(\tau))
    \le g(\tau) S_i^{\ISV}(w(\tau))
    \le c g(\tau) \tau^{-1/2} e^{|u_{\min(\ISV)}(\tau)|} 
    $ by the \Converg{softmax domination} assumption, hence the term is bounded by
    \[
        c'\,\tau^{-1} \underbrace{e^{-u_{i,t}(\tau)} |u_{i,t}(\tau)|}_{\le 1}
        \,g(\tau) \,e^{|u_{\min(\ISV)}(\tau)|}.
    \]
    If $u_{\min(\ISV)}(\tau) \ge -R$, this is bounded by $c'' \tau^{-1} g(\tau)$,
    which is summable in $\tau$ by the \Converg{softmax domination rate} assumption.
    If $u_{\min(\ISV)}(\tau) \le -R$, this is bounded by
    \[
        c'\,\tau^{-1} g(\tau) \,\Phi(u_{\min(\ISV)}(\tau)),
    \]
    which can be absorbed by the restoring force \eqref{eq:isv-restoring}
    since $g(\tau)\to 0$ as $\tau\to\infty$ by the \Converg{softmax domination} assumption.
    \item For the fourth term
    \[
    -\eta\,a^{(i)}_{t+1}(w(\tau))\,\|r^{(i)}(w(\tau))\|_2^2\,u_{i,t}(\tau),
    \]
    it is non-positive when $u_{i,t}(\tau)\ge 0$ and keeps the sum 
    bounded by something summable in $\tau$. 
    For $u_{i,t}(\tau)\le 0$, we have $e^{-u_{i,t}(\tau)} |u_{i,t}(\tau)| 
    \le |u_{\min(\ISV)}(\tau)| e^{|u_{\min(\ISV)}(\tau)|}$. 
    We also have that $\|r^{(i)}(w(\tau))\|_2^2 \le C S_i(w(\tau))^2 
    $ by \Cref{lem:identifiability-margin-divergence} (no assumption needed here). 
    So it is bounded in absolute value by
    \[
        c\,\tau^{-1/2} |u_{\min(\ISV)}(\tau)| e^{|u_{\min(\ISV)}(\tau)|} S_i(w(\tau))^2.
    \]
    Recall that $S_i(w(\tau)) = c \tau^{-1/2} e^{|u_{\min(\ISV)}(\tau)|}$ 
    by the \Converg{softmax domination} assumption. 
    If $u_{\min(\ISV)}(\tau) \ge -R$, then $S_i(w(\tau))^2 \le c' \tau^{-1}$ 
    and the term is bounded by $c'' \tau^{-3/2}$ which is summable in $\tau$. 
    If $u_{\min(\ISV)}(\tau) \le -R$, then we bound one $S_i(w(\tau))$ 
    by $c \tau^{-1/2} e^{|u_{\min(\ISV)}(\tau)|}$ and keep the other, 
    so the term is bounded by
    \[
        c\,\tau^{-1} |u_{\min(\ISV)}(\tau)| e^{2|u_{\min(\ISV)}(\tau)|}
        \,S_i(w(\tau))
        = c\,\tau^{-1} \Phi(u_{\min(\ISV)}(\tau))
        \,S_i(w(\tau)),
    \]
    which can be absorbed by the restoring force \eqref{eq:isv-restoring}
    since $S_i(w(\tau))\to 0$ as $\tau\to\infty$ by divergence of the margins (\Cref{lem:identifiability-margin-divergence}).
\end{enumerate}

\subsection{Analysis of non-support drift}

We now bound
\[
    \DnonISV(\tau)
    =\sum_{(i,t)\notin\ISV}\eta\,\pi^{\MSE}_{i,t}(w(\tau))\,u_{i,t}(\tau).
\]
Recall the gradient coefficients given by \Cref{lem:grads}:
\[
    \pi^{\MSE}_{i,t}(w)
    =
    a^{(i)}_{t+1}(w)\Big((1-a^{(i)}_{y_i+1}(w))\,h_{i,t}(w) - \|r^{(i)}(w)|_2^2\Big),
    \quad \forall (i,t), w.
\]
We treat separately the two terms in the parenthesis. 
\begin{enumerate}
    \item For the first part of the drift given by
    \[
        \sum_{(i,t)\notin\ISV}
        \eta\,a^{(i)}_{t+1}(w(\tau))\,(1-a^{(i)}_{y_i+1}(w(\tau)))\,h_{i,t}(w(\tau))\,u_{i,t}(\tau),
    \]
    we note that if $u_{i,t}(\tau)\le 0$, this term is non-positive
    and keeps the sum bounded by something summable in $\tau$. 
    For $u_{i,t}(\tau)\ge 0$, we use that
    \[
        a^{(i)}_{t+1}(w(\tau))\,(1-a^{(i)}_{y_i+1}(w(\tau)))
        =
        \frac{e^{-w(\tau)^\top\tilde x_{i,t}}\,S_i(w(\tau))}{(1+S_i(w(\tau)))^2}
        \le
        e^{-w(\tau)^\top\tilde x_{i,t}}\,S_i(w(\tau)).
    \]
    Therefore the term is bounded by
    \[
        c\,e^{-w(\tau)^\top\tilde x_{i,t}}\,S_i(w(\tau))\,u_{i,t}(\tau).
    \]
    Since $(i,t)\notin\ISV$, we have
    \[
        e^{-w(\tau)^\top\tilde x_{i,t}}
        = \tau^{-(1+\theta_{i,t})/2}\,p_{i,t} \,e^{-u_{i,t}(\tau)}.
    \]
    Moreover, by the \Converg{softmax domination} assumption,
    we have $S_i(w(\tau)) \le c \tau^{-1/2} e^{|u_{\min(\ISV)}(\tau)|}$.
    Therefore the term is bounded by
    \[
        c'\,\tau^{-1-\theta_{i,t}/2}\,
        \underbrace{u_{i,t}(\tau)
        e^{-u_{i,t}(\tau)}}_{\le 1}
        \,e^{|u_{\min(\ISV)}(\tau)|}.
    \]
    If $u_{\min(\ISV)}(\tau) \ge -R$, this is bounded by
    $c'' \tau^{-1-\theta_{i,t}/2}$ which is summable in $\tau$.
    If $u_{\min(\ISV)}(\tau) \le -R$, then it is bounded by $c'\,\tau^{-1-\theta_{i,t}/2}\,
    \Phi(u_{\min(\ISV)}(\tau))$,
    which can be absorbed by the restoring force \eqref{eq:isv-restoring}
    since $\tau^{-\theta_{i,t}/2}\to 0$. 
    \item For the second part of the drift given by
    \[
        -\sum_{(i,t)\notin\ISV}
        \eta\,a^{(i)}_{t+1}(w(\tau))\,\|r^{(i)}(w(\tau))\|_2^2\,u_{i,t}(\tau),
    \]
    we note that if $u_{i,t}(\tau)\ge 0$, this term is non-positive
    and keeps the sum bounded by something summable in $\tau$. 
    For $u_{i,t}(\tau)\le 0$, the derivations are similar to those of 
    the fourth term we have treated in the previous section for the support drift, 
    but it decays even faster here since $(i,t)\notin\ISV$. 
    We have
    \[
        |u_{i,t}(\tau)| e^{-u_{i,t}(\tau)}
        \le
        |u_{\min(\ISV)}(\tau)| e^{|u_{\min(\ISV)}(\tau)|}.
    \]
    We also have that $\|r^{(i)}(w(\tau))\|_2^2 \le C S_i(w(\tau))^2 
    $ by \Cref{lem:identifiability-margin-divergence} (no assumption needed here).
    So the term is bounded in absolute value by
    \[
        c\,\tau^{-(1+\theta_{i,t})/2}
        \,|u_{\min(\ISV)}(\tau)| e^{|u_{\min(\ISV)}(\tau)|}
        \,S_i(w(\tau))^2.
    \]
    Recall that $S_i(w(\tau)) \le c \tau^{-1/2} e^{|u_{\min(\ISV)}(\tau)|}$ 
    by the \Converg{softmax domination} assumption. 
    If $u_{\min(\ISV)}(\tau) \ge -R$, then $S_i(w(\tau))^2 \le c' \tau^{-1}$ 
    and the term is bounded by $c'' \tau^{-3/2 - \theta_{i,t}/2}$ which is summable in $\tau$. 
    If $u_{\min(\ISV)}(\tau) \le -R$, then we bound one $S_i(w(\tau))$ 
    by $c \tau^{-1/2} e^{|u_{\min(\ISV)}(\tau)|}$ and keep the other, 
    so the term is bounded by
    \[
        c\,\tau^{-1 - \theta_{i,t}/2}
        \,|u_{\min(\ISV)}(\tau)| e^{2|u_{\min(\ISV)}(\tau)|}
        \,S_i(w(\tau))
        = c\,\tau^{-1 - \theta_{i,t}/2}
        \,\Phi(u_{\min(\ISV)}(\tau))
        \,S_i(w(\tau)),
    \]
    which can be absorbed by the restoring force \eqref{eq:isv-restoring}
    since $\tau^{-\theta_{i,t}/2} S_i(w(\tau))\to 0$ as $\tau\to\infty$ 
    by divergence of the margins (\Cref{lem:identifiability-margin-divergence}).
\end{enumerate}
This concludes the proof.
\qed

\section{Proof Details for \Cref{thm:mse-bias-app}}
\label{app:mse-proof-details}

This section contains:
\begin{itemize}
    \item \Cref{subsec:grads}: Gradient computations for MSE and CE losses, with a discussion about their differences.
    \item \Cref{subsec:convexity}: Proof that the MSE loss is non-convex in the merged key-query parameters, while CE is convex.
\end{itemize}
\subsection{Gradient coefficients}
\label{subsec:grads}
\begin{lemma}[Per-sample gradients, feature-space form]\label{lem:grads}
Fix one sample $(\Ectx,\query,t_0)$ 
with $T_0=t_0+1\in\{2,\dots,T\}$. 
Recall the model output $\hat y(\Wkq)=\Ectx\,\attn(z(\Wkq))$ 
with linear logits and shifted softmax:
\[
z(\Wkq):=\lambda \Ectx^\top \Wkq \query\in\R^T,
\qquad
\attn(z):=\begin{pmatrix}0\\ \softmax(z_{1:T-1})\end{pmatrix}\in\R^T.
\]
Consider the losses 
\[
\MSE(\Wkq)=\tfrac12\|\Ectx\attn(z(\Wkq))-\token_{T_0}\|^2,
\qquad
\CE(\Wkq)=-\log(\attn_{T_0}(z(\Wkq))).
\]
Let $\indicator_{T_0}\in\R^T$ be the canonical basis vector and 
\[
\shift=\begin{pmatrix}
    0_{1\times (T-1)} & 0\\
    I_{T-1} & 0_{(T-1)\times 1}
\end{pmatrix},
\qquad
J(\attn)=\diag(\attn)-\attn\attn^\top.
\]
Then
\[
\nabla_{\Wkq}\MSE(\Wkq)
=\lambda\,\Ectx\,\shift^\top J(\attn)\,\Ectx^\top \Ectx\,(\attn-\indicator_{T_0})\,\query^\top,
\qquad
\nabla_{\Wkq}\CE(\Wkq)
=\lambda\,\Ectx\,\shift^\top(\attn-\indicator_{T_0})\,\query^\top.
\]

\smallskip
\noindent
\emph{Equivalently, in the flattened feature notation of \Cref{thm:mse-bias-app}},
recall the notation $w=\flatten(\Wkq)$ and for $t=1,\dots,T-1$,
\[
x_t:=\lambda\,\flatten(\token_t\,\query^\top)\in\R^{\din^2},
\qquad
y:=t_0=T_0-1\in\{1,\dots,T-1\},
\qquad
\tilde x_t:=x_y-x_t\quad(t\neq y).
\]
Then $\nabla \L(w) = - \sum_{t \neq y} \pi_{t}(w) \tilde{x}_{t}$ 
for $\L\in\{\CE,\MSE\}$, with coefficients given by
\begin{align}
\pi^{\MSE}_{t}(w) & =\lambda\,a_{t+1}(w)\,\big(v_{t+1}(w)-\bar v(w)\big),
\label{eq:mse-grad-tilde-form-lemma}\\
\pi^{\CE}_{t}(w)&=a_{t+1}(w)
\label{eq:ce-grad-tilde-form-lemma}
\end{align}
where 
$v(w)=\Ectx^\top\Ectx(\attn(w)-\indicator_{T_0})\in\R^T, 
\bar v(w):=\sum_{s=1}^{T} \attn_s(w)\,v_s =\attn^\top v$. 
An equivalent expression for the MSE coefficient, useful to analyze its sign 
under small residual 
$r(w):=\Ectx\,\attn(w)-\token_{T_0}$, 
is given next. 
Let 
$d_t:=\token_{t+1}-\token_{T_0}$ and
$h_t(w):=\sum_{s\neq y} \frac{a_{s+1}(w)}{1-a_{y+1}(w)}\,\langle d_t,d_s\rangle$; 
we have
\begin{equation}
\label{eq:v-difference}
v_{t+1}(w) - \bar v(w)=
\langle d_t, r(w)\rangle
-
\|r(w)\|_2^2
=
(1-a_{y+1}(w))\,h_t(w)
- \|r(w)\|_2^2.
\end{equation}
\end{lemma}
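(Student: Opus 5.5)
The plan is a direct chain-rule computation in the logits $z$, followed by a rewriting of the resulting gradient in the difference features $\tilde x_t$. Nothing from the main argument is needed; the only delicate points are the index shift and the bookkeeping of $\lambda$.

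\emph{Step 1: Jacobian of the shifted softmax.} Since $\attn_1\equiv 0$ and $\attn_t=\softmax(z_{1:T-1})_{t-1}$ for $t\ge 2$, the standard softmax derivative gives
\[
\frac{\partial \attn_t}{\partial z_s}=\attn_t\,\indicator_{t,s+1}-\attn_t\attn_{s+1}\quad (s\le T-1),
\qquad
\frac{\partial \attn_t}{\partial z_T}=0 .
\]
Because $\shift_{t',s}=\indicator_{t',s+1}$ and the last column of $\shift$ is zero, this Jacobian is exactly $J(\attn)\shift$. The first row and column of $J(\attn)$ vanish automatically, since $\attn_1=0$.

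\emph{Step 2: gradients in $z$, then in $\Wkq$.}
For the MSE, write $r=\Ectx\attn-\token_{T_0}$ and use $\token_{T_0}=\Ectx\indicator_{T_0}$. Then
\[
\nabla_z\MSE=(J\shift)^\top\Ectx^\top r=\shift^\top J(\attn)\,\Ectx^\top\Ectx(\attn-\indicator_{T_0}),
\]
using the symmetry of $J$.
For the CE, $\nabla_z(-\log\attn_{T_0})$ equals $-\attn_{T_0}^{-1}$ times row $T_0$ of $J\shift$, which is $\shift^\top(\attn-\indicator_{T_0})$.
Since $z=\lambda\Ectx^\top\Wkq\query$, we have $dz=\lambda\Ectx^\top(d\Wkq)\query$. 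Hence $\nabla_{\Wkq}=\lambda\,\Ectx\,(\nabla_z)\,\query^\top$, which gives both matrix formulas.

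\emph{Step 3: feature form.}
Write $g:=\shift^\top J v$ for MSE, or $g:=\shift^\top(\attn-\indicator_{T_0})$ for CE. Then
\[
\nabla_{\Wkq}=\lambda\sum_{s=1}^{T-1} g_s\,\token_s\query^\top ,
\]
which, in flattened form, is $\sum_s g_s x_s$ up to the $\lambda$ convention fixed in the statement. I will check that this convention reproduces the stated factor.
The key observation is $\sum_s g_s=0$. For CE this holds because $\sum_t\attn_t=1$ and $\attn_1=0$, so $\indicator_{T_0}$ lies in the range picked by $\shift^\top$. For MSE it holds because $\mathbf 1^\top J(\attn)=0$ and $(Jv)_1=0$.
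Therefore
\[
\sum_s g_s x_s=\sum_{s\ne y}g_s(x_s-x_y)=-\sum_{s\ne y}g_s\tilde x_s ,
\]
so $\pi_t=g_t$. Explicitly, $\pi^{\CE}_t=\attn_{t+1}$ for $t\ne y$, and $\pi^{\MSE}_t=(Jv)_{t+1}=\attn_{t+1}(v_{t+1}-\attn^\top v)$.

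\emph{Step 4: identity \eqref{eq:v-difference}.}
Since $v=\Ectx^\top r$, we have $v_{t+1}=\langle\token_{t+1},r\rangle$ and $\bar v=\langle\Ectx\attn,r\rangle=\langle r+\token_{T_0},r\rangle$. Subtracting gives
\[
v_{t+1}-\bar v=\langle d_t,r\rangle-\|r\|^2 .
\]
Moreover, $\sum_{s\ge 2}\attn_s=1$ implies
\[
r=\sum_{s\ne y}\attn_{s+1}d_s ,
\]
and therefore $\langle d_t,r\rangle=(1-\attn_{y+1})h_t$ by the definition of $h_t$.

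The only step I expect to need care is the shift/index bookkeeping in Steps 1 and 3. This means checking that $z_T$ never enters, that $\shift^\top$ correctly maps $\attn_{t+1}$ to feature $x_t$, and that $\lambda$ is counted once consistently with the definition of $x_t$. The rest is routine.
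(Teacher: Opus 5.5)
Your proposal is correct and follows essentially the same route as the paper. Both arguments use the shifted-softmax Jacobian $J(\attn)\shift$, the chain rule through $z=\lambda\Ectx^\top\Wkq\query$, the vanishing sum of the coefficients $(\shift^\top J(\attn)v)_s$ (resp.\ $(\shift^\top(\attn-\indicator_{T_0}))_s$) to pass from the $x_t$ to the $\tilde x_t$, and the identity $r=\sum_{s\neq y}\attn_{s+1}d_s$.

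On the $\lambda$ check you left open: because $x_t$ already carries $\lambda$, your computation gives $\pi^{\MSE}_t=\attn_{t+1}(v_{t+1}-\bar v)$ with no further factor. The paper's own proof derives the same thing, and it is consistent with the stated $\pi^{\CE}_t=\attn_{t+1}$. So the extra $\lambda$ in the stated MSE coefficient holds only when $\lambda=1$, and you should not try to reproduce it.
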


\begin{proof}
Write the model output as $\hat y(\Wkq):=\Ectx\,\attn(z(\Wkq))$.
For any direction $\Delta$,
\[
Dz(\Wkq)[\Delta]=\lambda\,\Ectx^\top \Delta \query.
\]
We first work out the Jacobian of the \emph{shifted} softmax map $\attn(\cdot)$.
For $t\ge2$ and $s\le T-1$,
\begin{equation}
\label{eq:partial-shifted-attention}
\frac{\partial \attn_t}{\partial z_s} 
= 
\frac{\partial}{\partial z_s} \frac{\exp(z_{t-1})}{\sum_{r=1}^{T-1}\exp(z_r)}
=
    \frac{\exp(z_{t-1})}{\sum_{r=1}^{T-1}\exp(z_r)} \indicator_{s=t-1}
    -
    \frac{\exp(z_{t-1})\exp(z_{s})}{(\sum_{r=1}^{T-1}\exp(z_r))^2}
= \attn_t \indicator_{s=t-1} - \attn_t \attn_{s+1}.
\end{equation}
and all other partials are zero. This is exactly the matrix identity 
$D\attn(z)=J(\attn(z))\,\shift$ since
\begin{equation}
    \label{eq:shifted-attention-jacobian}
    \begin{aligned}
        J(\attn) \shift
    =& 
    \begin{pmatrix}
        0 & 0_{1\times T-1} \\
        0_{T-1\times 1} & \text{diag}(\attn_{2:T}) - \attn_{2:T} \attn_{2:T}^\top
    \end{pmatrix} \begin{pmatrix}
        0_{1\times T-1} & 0\\
        I_{T-1} & 0_{T-1\times 1}
    \end{pmatrix}\\
    =& \begin{pmatrix}
        0_{1\times T-1} & 0\\
        \text{diag}(\attn_{2:T}) - \attn_{2:T} \attn_{2:T}^\top & 0_{T-1\times 1}
    \end{pmatrix}
    = D\attn(z).
    \end{aligned}
\end{equation}
Hence, by the chain rule,
\[
D(\attn\circ z)(\Wkq)[\Delta]
= D\attn(z(\Wkq))[Dz(\Wkq)[\Delta]]
=\lambda\,J(\attn(z(\Wkq)))\,\shift\,\Ectx^\top \Delta \query.
\]

\paragraph{MSE.}
Let $r(\Wkq):=\hat y(\Wkq)-\token_{T_0}$, so $\MSE(\Wkq)=\tfrac12\|r(\Wkq)\|^2$ and
\[
D\MSE(\Wkq)[\Delta]=\inner{r(\Wkq)}{Dr(\Wkq)[\Delta]}
=\inner{r(\Wkq)}{\Ectx\,D(\attn\circ z)(\Wkq)[\Delta]}.
\]
Plugging in the expression from above yields
\[
D\MSE(\Wkq)[\Delta]
=\lambda\,\inner{r}{\Ectx\,J(\attn)\,\shift\,\Ectx^\top \Delta \query}
=\inner{\lambda\,\Ectx\,\shift^\top J(\attn)\,\Ectx^\top r\,\query^\top}{\Delta}_F.
\]
Finally note $r=\Ectx\attn-\token_{T_0}=\Ectx(\attn-\indicator_{T_0})$, 
giving the claimed formula:
\[
\nabla_\Wkq \MSE(\Wkq) = \lambda\,\Ectx\,\shift^\top J(\attn)\,\Ectx^\top\Ectx(\attn-\indicator_{T_0})\,\query^\top. 
\]

In the flattened feature notation, we use that for any $c\in\R^T$, 
we have $\Ectx c= \sum_{t=1}^T c_t \token_t$, 
hence $\Ectx c \query^\top = \sum_{t=1}^T c_t \token_t \query^\top$ and
\begin{equation}
\flatten\big(\Ectx c\,\query^\top\big)
=\sum_{t=1}^T c_t\,\flatten(\token_t\query^\top)
=\sum_{t=1}^{T} c_{t}\,x_t.
\label{eq:vec-Ecqt}
\end{equation}

Set $v:=\Ectx^\top\Ectx(\attn-\indicator_{T_0})=\Ectx^\top r\in\R^T$ 
and $\bar v:=\attn^\top v$.
Using $J(\attn)v=\diag(\attn)v-\attn(\attn^\top v)=(\attn_s(v_s-\bar v))_{s=1}^T$ 
and \eqref{eq:vec-Ecqt},
\begin{align*}
\nabla \MSE(w)
&=\flatten\big(\nabla_{\Wkq}\MSE(\Wkq)\big)
=\lambda\,\flatten\big(\Ectx\,\shift^\top J(\attn)\,v\,\query^\top\big)\\
&=\sum_{t=1}^{T-1}\big(J(\attn)v\big)_{t+1}\,x_t
=\sum_{t=1}^{T-1} a_{t+1}\,\big(v_{t+1}-\bar v\big)\,x_t.
\end{align*}
Finally, since $\sum_{t=1}^{T-1}a_{t+1}\,(v_{t+1}-\bar v)
=\sum_{s=2}^{T}\attn_s(v_s-\bar v)=\attn^\top v-\bar v\sum_s\attn_s=0$,
we can subtract $0\cdot x_y=
\sum_{t=1}^{T-1} a_{t+1}\,\big(v_{t+1}-\bar v\big)\,x_y
$ and rewrite
\[
\nabla \MSE(w)=\sum_{t\neq y} a_{t+1}\,\big(v_{t+1}-\bar v\big)\,(x_t-x_y)
=-\sum_{t\neq y}\pi^{\MSE}_{t}(w)\,\tilde x_t.
\]
Note that an equivalent expression for $v$ is $v=\Ectx^\top r$, so that 
$v_{t+1} - \bar v = \token_{t+1}^\top r - a^\top \Ectx^\top r = 
(\token_{t+1} - \Ectx a)^\top r$. 
Since $\token_{t+1}-\Ectx a=(\token_{t+1}-\token_{T_0})-(\Ectx a-\token_{T_0})=d_t-r$ 
with $d_t:=\token_{t+1}-\token_{T_0}$, we have the alternative expression
\begin{equation}
v_{t+1}-\bar v=\inner{d_{t+1}-r}{r}=\inner{d_t}{r}-\|r\|^2.
\label{eq:alignment}
\end{equation}
Moreover, $r=\Ectx a-\token_{T_0} 
= 
\sum_{s=1}^{T-1} a_{s+1} \token_{s+1} - 
(\sum_{s=1}^{T-1} a_{s+1})\token_{T_0}
= \sum_{s\neq y} a_{s+1} d_s$ so that 
$\inner{d_t}{r}
=\sum_{s\neq y} a_{s+1} \langle d_t, d_s\rangle
=(1-a_{y+1}) h_t$,
which finishes the proof \eqref{eq:v-difference} 
and thus proves the lemma for MSE. 

\paragraph{CE.}
We have $\CE(\Wkq)=-\log(\attn_{T_0}(z(\Wkq)))$, hence
\[
D\CE(\Wkq)[\Delta]
=-\frac{1}{\attn_{T_0}}\,D\attn_{T_0}(z(\Wkq))[Dz(\Wkq)[\Delta]].
\]
Using $D\attn(z)=J(\attn(z))\shift$ and $Dz(\Wkq)[\Delta]=\lambda\Ectx^\top\Delta\query$,
\[
D\CE(\Wkq)[\Delta]
=-\frac{\lambda}{\attn_{T_0}}\,(\indicator_{T_0}^\top J(\attn)\shift)\,\Ectx^\top\Delta\query.
\]
Since $\indicator_{T_0}^\top J(\attn)=\attn_{T_0}(\indicator_{T_0}^\top-\attn^\top)$, we have
\[
-\frac{1}{\attn_{T_0}}\,\indicator_{T_0}^\top J(\attn)
= \attn^\top-\indicator_{T_0}^\top,
\]
and thus
\[
D\CE(\Wkq)[\Delta]
=\lambda\,(\attn-\indicator_{T_0})^\top \shift\,\Ectx^\top\Delta\query
=\inner{\lambda\,\Ectx\,\shift^\top(\attn-\indicator_{T_0})\,\query^\top}{\Delta}_F,
\]
which gives the gradient formula. In the flattened feature notation, this gives:
\[
\nabla \CE(w)
=\flatten\big(\nabla_{\Wkq}\CE(\Wkq)\big)
=\lambda\,\flatten\big(\Ectx\,\shift^\top(\attn-\indicator_{T_0})\,\query^\top\big)
=\sum_{t=1}^{T-1}\Big((\shift^\top\attn)_t-\indicator_{t=y}\Big)\,x_t.
\]
Since $(\shift^\top\attn)_t=\attn_{t+1}$ and $\sum_{t=1}^{T-1}a_{t+1}=1$ 
so that $x_y = x_y \sum_{t=1}^{T-1}a_{t+1}$, we get
\[
\nabla \CE(w)=\sum_{t=1}^{T-1}a_{t+1}(w)\,x_t-x_y
=-\sum_{t\neq y}a_t(w)\,(x_y-x_t)
=-\sum_{t\neq y}\pi_t^{\CE}(w)\,\tilde x_t,
\]
as announced.
\end{proof}

\begin{remark}[Sign of gradient coefficients for MSE vs CE]
\label{rem:ce-comparison}
\Cref{lem:grads} shows that both the MSE and CE loss 
have gradients that can be expressed as linear combinations of 
the same set of vectors $\tilde x_{i,t} = x_{i,y_i} - x_{i,t}$ for $t\neq y_i$:
\begin{equation}
\nabla \L(w)
=-\sum_{i=1}^n \sum_{t\neq y_i} \pi^{\L}_{i,t}(w)\,\tilde x_{i,t}. 
\end{equation}
A GD step thus moves the weights $w$ in a direction given 
by a weighted combination of the $\tilde x_{i,t}$'s, 
with coefficients $\pi^{\L}_{i,t}(w)$ depending on the loss $\L$. 
\begin{itemize}
\item If $\pi_t^{\L}>0$, we move \emph{toward} $\tilde x_t$ 
and increase the margin $w^\top(x_y-x_t)$, i.e., 
the inner product with the correct class $y$ (the $y$'s logit) 
increases relative to class $t$.
\item If $\pi_t^{\L}<0$, we move \emph{opposite} to $\tilde x_t$ 
and raise class $t$’s score relative to the true class.
\end{itemize}
For $\CE$, all coefficients are positive, 
so \emph{every step monotonically increases the margins}. 
For $\MSE$, 
coefficients can have either sign: 
sometimes (e.g., early on) increasing a wrong logit helps reduce squared error 
by pulling the mean $\Ectx a=\sum_{t} a_t \token_t$ toward $\token_{T_0}$. 
This comes from the fact that MSE 
also cares about the $\ell^2$ geometry of the embeddings $\token_t$. 
Dropping the sample index~$i$ for readability, the CE coefficients are
$\pi^{\CE}_{i,t}
=a_{t+1}$ 
while $\pi^{\MSE}_{i,t} = a_{t+1}(v_{t+1}-\bar v)$ 
has an additional factor $v_{t+1}-\bar v$ which determines its sign.  
This term $v_{t+1}-\bar v$ in $\pi^{\MSE}_t$ 
encodes how changing $a_{t+1}$ affects the residual 
$r=\Ectx a - \token_{t_0+1}$. 
If we infinitesimally shift attention toward 
$t+1$ via $a\mapsto a+\eps(\indicator_{t+1}-a)$, then
\begin{equation}
\frac{d}{d\eps}\Big|_{\eps=0}\ \frac12\|\Ectx(a+\eps(\indicator_{t+1}-a))-\token_{T_0}\|^2
=\inner{\token_{t+1}-\Ectx a}{r}
=v_{t+1}-\bar v.
\label{eq:first-order}
\end{equation}
Hence $v_{t+1}-\bar v>0$ means pushing mass to $t$ \emph{increases} the error, 
while $v_{t+1}-\bar v<0$ \emph{decreases} it. 
At least in aggregate, 
\begin{equation}
\sum_{t\neq y}\pi^{\MSE}_t
=\sum_{t\neq y} a_{t+1}(v_{t+1}-\bar v)
=a_{y+1}\,(v_{y+1}-\bar v)
=a_{T_0}\,\|r\|^2 \;>\;0,
\label{eq:sum-positive}
\end{equation}
so at least one non-target direction is decreased in favor of the true class.

The next lemma shows that along diverging-margin trajectories, 
the sign of $\pi^{\MSE}_{i,t}(w)$ is eventually 
determined by the first term:  
$\sign(\pi^{\MSE}_{i,t}(w(\tau))) = \sign(h_{i,t}(w(\tau)))$. 
For Gaussian match-and-copy data,
w.h.p., $\langle d_{i,t}, d_{i,s}\rangle > 0$ for all $(i,t), (i,s)$,
so $h_{i,t}(w) > 0$ for all $w$ and $(i,t)$, 
and hence $\pi^{\MSE}_{i,t}(w(\tau)) > 0$ 
(eventually) along diverging-margin trajectories, 
thus 
ensuring margin monotone increase like for CE. 
\end{remark}

\begin{lemma}[Asymptotics of MSE coefficients along diverging-margin trajectories]
\label{lem:mse-asymptotics-dvg-margins}
If $(w(\tau))_{\tau\ge 0}$ is a trajectory such that the margins diverge, i.e.,
$\min_{i} \min_{t\neq y_i} w(\tau)^\top \tilde x_{i,t} \to +\infty$ as $\tau\to\infty$, then eventually:
\begin{equation}
    \pi^{\MSE}_{i,t}(w(\tau))
    \sim
    h_{i,t}(w(\tau))\,e^{-w(\tau)^\top \tilde{x}_{i,t}}\,S_i(w(\tau)),
    \label{eq:pi-mse-isv-asymptotics}
\end{equation}
where $S_i(w) := \sum_{s\neq y_i} e^{-w^\top \tilde x_{i,s}}$ is the softmax tail. 
\end{lemma}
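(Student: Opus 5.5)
Starting from the exact formula of \Cref{lem:grads}, I would write, for each sample $i$ and $t\neq y_i$,
\[
\pi^{\MSE}_{i,t}(w)=a^{(i)}_{t+1}(w)\Big((1-a^{(i)}_{y_i+1}(w))\,h_{i,t}(w)-\|r^{(i)}(w)\|_2^2\Big),
\]
taking $\lambda=1$ as in the theorem, and then expand each factor along the trajectory. Write $S_i=S_i(w(\tau))$. The explicit softmax formulas give
\[
a^{(i)}_{t+1}=\frac{e^{-w^\top\tilde x_{i,t}}}{1+S_i}
\qquad\text{and}\qquad
1-a^{(i)}_{y_i+1}=\frac{S_i}{1+S_i}.
\]
Divergence of all margins means every term of $S_i$ tends to $0$, so $S_i\to 0$. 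Hence
\[
a^{(i)}_{t+1}\,(1-a^{(i)}_{y_i+1})=\frac{e^{-w^\top\tilde x_{i,t}}\,S_i}{(1+S_i)^2}\sim e^{-w^\top\tilde x_{i,t}}\,S_i .
\]
The claim therefore reduces to showing that the subtracted term $a^{(i)}_{t+1}\|r^{(i)}\|^2$ is $o\big(a^{(i)}_{t+1}(1-a^{(i)}_{y_i+1})\,h_{i,t}\big)$. Equivalently, I need $\|r^{(i)}(w)\|^2=o\big((1-a^{(i)}_{y_i+1})\,h_{i,t}(w)\big)$.

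To control the residual I would use the identity from the proof of \Cref{lem:grads}, $r^{(i)}=\sum_{s\neq y_i}a^{(i)}_{s+1}d_{i,s}$. This gives
\[
\|r^{(i)}\|\le (1-a^{(i)}_{y_i+1})\max_s\|d_{i,s}\|,
\qquad\text{so}\qquad
\|r^{(i)}\|^2\le D^2\,(1-a^{(i)}_{y_i+1})^2\le D^2 S_i^2 ,
\]
where $D=\max_{i,s}\|d_{i,s}\|$ is a data constant. This is the bound $\|r^{(i)}\|^2\le C S_i^2$ that the proof of the theorem cites.

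For the first term I need a lower bound on $h_{i,t}(w)$ that holds uniformly in $w$. Here I invoke the half-space condition (A4). Since $\langle d_{i,t},d_{i,s}\rangle>0$ for every pair and the index set is finite, $\kappa:=\min_{i,t,s}\langle d_{i,t},d_{i,s}\rangle>0$. The weights $a^{(i)}_{s+1}/(1-a^{(i)}_{y_i+1})$ form a probability vector over $s\neq y_i$, so $h_{i,t}(w)$ is a convex combination of these inner products. Hence $\kappa\le h_{i,t}(w)\le D^2$ for all $w$. Combining the two bounds,
\[
\frac{\|r^{(i)}\|^2}{(1-a^{(i)}_{y_i+1})\,h_{i,t}}\le \frac{D^2\,(1-a^{(i)}_{y_i+1})}{\kappa}\le \frac{D^2}{\kappa}\,S_i\longrightarrow 0 .
\]
Putting this together, $\pi^{\MSE}_{i,t}=a^{(i)}_{t+1}(1-a^{(i)}_{y_i+1})\,h_{i,t}\,(1+o(1))\sim h_{i,t}\,e^{-w^\top\tilde x_{i,t}}\,S_i$. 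As a byproduct, the coefficient is eventually positive and its sign equals that of $h_{i,t}$.

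The main obstacle is the uniform lower bound on $h_{i,t}$. Without (A4), $h_{i,t}$ is a varying convex combination of inner products that may have mixed signs, so it could vanish along the trajectory. Then the residual term, although of order $S_i^2$, would not be negligible, and the equivalence could fail. For that reason I would state explicitly that the lemma is used under (A4). The paper indeed applies it to GMC data, where (A4) holds with high probability. If one wanted to avoid (A4), the fallback is to assume directly that $\liminf_\tau h_{i,t}(w(\tau))>0$; the same argument then goes through unchanged.
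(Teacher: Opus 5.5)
Your proof is correct and follows the same route as the paper's. Both start from the exact coefficient formula of \Cref{lem:grads}, use $a^{(i)}_{t+1}\sim e^{-w^\top\tilde x_{i,t}}$ and $1-a^{(i)}_{y_i+1}\sim S_i$, and show that the $\|r^{(i)}\|^2$ term is negligible. Your explicit bounds $\|r^{(i)}\|^2\le D^2 S_i^2$ and $h_{i,t}\ge\kappa>0$ (from (A4)) make rigorous the paper's informal ``linear versus quadratic in $r$'' step, which does not suffice on its own: without a half-space condition, $\langle d_{i,t},r^{(i)}\rangle$ can be much smaller than $\|r^{(i)}\|$, so you are right that the lemma implicitly relies on (A4).
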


\begin{proof} 
Equation \eqref{eq:mse-grad-tilde-form-lemma} 
gives the explicit expression
\[
  \pi^{\MSE}_{i,t}(w)
  =
  a^{(i)}_{t+1}(w)\,\big(
    (1-a^{(i)}_{y_i+1}(w))\,h_{i,t}(w)
    - \|r^{(i)}(w)\|_2^2
  \big).
\]
Since the margins diverge, 
the loss goes to zero and hence the MSE residuals
$r^{(i)}(w(\tau))\to 0$ for each $i$. 
By \eqref{eq:v-difference},
\[
  \big(1-a^{(i)}_{y_i+1}(w)\big)\,h_{i,t}(w) = \langle d_{i,t}, r^{(i)}(w)\rangle
\]
so the first term in 
$\pi^{\MSE}_{i,t}(w)$ 
is linear in $r^{(i)}(w)$,
while the second term is quadratic in $r^{(i)}(w)$. 
As $r^{(i)}(w(\tau))\to 0$, the quadratic term becomes negligible:
\[
  \pi^{\MSE}_{i,t}(w(\tau))
  \sim
  a^{(i)}_{t+1}(w(\tau))\,
  \big(1-a^{(i)}_{y_i+1}(w(\tau))\big)\,h_{i,t}(w(\tau)).
\]
The final asymptotic equivalent is obtained by
noting that since $S_i(w(\tau))\to 0$ (as margins diverge), we have
\[
a^{(i)}_{t+1}(w) = \frac{e^{-w^\top \tilde x_{i,t}}}{1+S_i(w)} \sim e^{-w^\top \tilde x_{i,t}}, \qquad
  1 - a^{(i)}_{y_i+1}(w) = \frac{S_i(w)}{1+S_i(w)} \sim S_i(w).
\]
\end{proof}

\subsection{Convexity of $\CE$ and nonconvexity of $\MSE$}
\label{subsec:convexity}
Fix one sample $(\Ectx,\query,T_0)$ and denote $z(\Wkq)=\lambda\Ectx^\top\Wkq\query$ and $\attn=\attn(z(\Wkq))$.

\textbf{(i) $\CE$ is convex in $\Wkq$.}
This is standard: $\CE(\Wkq)=-\log(\attn_{T_0}(z(\Wkq)))$ depends on $\Wkq$ only through the affine map $\Wkq\mapsto z_{1:T-1}(\Wkq)$, and on these logits it equals the log-sum-exp form
\[
\CE(\Wkq)
=\log\Big(\sum_{s=1}^{T-1} e^{z_s(\Wkq)}\Big)-z_{T_0-1}(\Wkq),
\]
hence $\CE$ is convex as a composition of a convex function (log-sum-exp) with an affine one.

\textbf{(ii) $\MSE$ is not convex in $\Wkq$ in general.}
Write the per-sample residual
\[
r(\Wkq):=\Ectx\, a(z(\Wkq))-\token_{T_0},\qquad \MSE(\Wkq)=\tfrac12\|r(\Wkq)\|^2.
\]
For any direction $\Delta$, the second derivative admits the generic ``Gauss-Newton + curvature'' 
decomposition by the product rule:
\begin{equation}
D^2\MSE(\Wkq)[\Delta,\Delta]
=
\underbrace{\big\|Dr(\Wkq)[\Delta]\big\|^2}_{\ge 0}
\;+\;
\underbrace{\inner{r(\Wkq)}{D^2 r(\Wkq)[\Delta,\Delta]}}_{\text{can have either sign}}.
\label{eq:mse-hess-split-remark}
\end{equation}
The first term is always positive semidefinite.
The second term is \emph{indefinite} in general, it can be written as 
$\inner{r}{\Ectx\,H(\attn)[\delta,\delta]}$ where 
$\delta:=\lambda\,\shift\,\Ectx^\top\Delta\query$ and
$H(\attn)$ denotes the second differential of the shifted-softmax; 
since $H(\attn)[\delta,\delta]$ changes sign with $\delta$ while $r$ 
can point in arbitrary directions, the inner product can be negative 
for suitable $(\Wkq,\Delta)$ whenever $r\neq 0$.
This rules out global convexity of $\MSE$ in $\Wkq$ under 
our setup (in contrast with $\CE$). 
In practice, negative curvature is easy to witness by 
minimizing the Rayleigh quotient
$\inner{\Delta}{\nabla^2\MSE(\Wkq)[\Delta]}_F/\|\Delta\|_F^2$ 
over $\Delta$ at a given $\Wkq$.

\section{Remarks on the Assumptions of \Cref{thm:mse-bias-app}}
\label{app:remarks-assumptions}

\begin{remark}
The assumptions used in \Cref{thm:mse-bias-app} play distinct roles.
\begin{enumerate}
    \item Geometric assumptions are shown to hold with high probability for GMC data
    under appropriate dimensional scaling (\Cref{subsec:gaussian-data}).
    \item Identifiability is the only obstruction to margin divergence under vanishing MSE loss
    (\Cref{subsec:identifiability-margin-divergence}).
    \item The remaining assumptions concern properties of the optimization trajectory
    and are illustrated empirically in \Cref{subsec:assumptions-hold} for a representative run.
\end{enumerate}
We emphasize that the present analysis characterizes one interpretable asymptotic regime
of MSE training dynamics.
It does not preclude the existence of other behavior 
outside the scope of the assumptions
considered here, including convergence to locally optimal separating directions
rather than the globally optimal max-margin solution, as observed in related settings \cite{Tarzanagh23TransformersAsSVM}.
\end{remark}

\subsection{Geometric assumptions for Gaussian match-and-copy}
\label{subsec:gaussian-data}

We work under the GMC model of \Cref{def:mc}.
For each sample $i$, write $y_i:=t_0^{(i)}$ for the hidden match index and define, for $t\neq y_i$,
\[
d_{i,t} := \token^{(i)}_{t+1} - \token^{(i)}_{y_i+1},
\qquad
\tilde x_{i,t} := \flatten\bigl(d_{i,t}\,\query^{(i)\top}\bigr)\in\R^{\din^2}.
\]
Tokens are i.i.d.\ $\token^{(i)}_t \sim \mathcal{N}(0,\frac{\vartoken}{\din}\I_{\din})$, and
\[
\query^{(i)} = \Cqe\,\token^{(i)}_{y_i} + \xi^{(i)},
\qquad
\xi^{(i)} \perp \{\token^{(i)}_t\}_{t=1}^{T+1}.
\]

\begin{proposition}[Validity of Geometric Assumptions]
\label{prop:geom-ass=satisfied-gmc}
\begin{enumerate}
    \item \textbf{Linear Independence (A2):}
    Let $n_{\ISV} = |\{i : \exists t, (i,t) \in \ISV\}|$  
    be the number of samples containing 
    support vectors, and $T_{\ISV} = \max_i |\{t : (i,t) \in \ISV\}|$ be the 
    maximum number of support vectors in a single sample.
    Then $\din \ge \max(n_{\ISV}, T_{\ISV}) \Longrightarrow$ \textbf{a.s.} 
    the support vectors are linearly independent. 
    \item \textbf{Identifiability (A3):}
    $\din \ge T-1 \Longrightarrow$ \textbf{a.s.} 
    the target token is not in the convex hull of distractors. 
    \item \textbf{Half-space (A4):}
    $\din \gg \log(nT) \Longrightarrow$ 
    \textbf{w.h.p.}
    all pairwise correlations $\langle d_{i,t}, d_{i,s} \rangle$ are positive. 
\end{enumerate}
\end{proposition}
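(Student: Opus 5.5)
The three items are of different natures. I would prove (A2) and (A3) by a genericity argument: a polynomial in the Gaussian entries vanishes with probability zero unless it vanishes identically. I would prove (A4) by Gaussian concentration plus a union bound. Throughout I work in the exact-match specialization $\query^{(i)}=\token^{(i)}_{y_i}$, and I note where the general $\query=\Cqe\token_{y_i}+\xi$ also works: it suffices that $\query^{(i)}$ has a density jointly with the tokens, e.g.\ $\xi$ absolutely continuous.

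\textbf{(A3) Identifiability.} Fix $i$. The target $\token^{(i)}_{y_i+1}$ and the $T-2$ distractors $\{\token^{(i)}_{t+1}: t\neq y_i\}$ are $T-1$ distinct i.i.d.\ Gaussian vectors in $\R^{\din}$. If $T-2\le \din$, they are affinely independent almost surely: the differences $\token^{(i)}_{t+1}-\token^{(i)}_{y_i+1}$ form a $\din\times(T-2)$ matrix with a Gaussian density, whose minors are nonzero a.s. Affine independence means no point lies in the affine hull of the others, hence not in their convex hull. A union over the finitely many $i$ concludes; $\din\ge T-1$ is more than enough.

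\textbf{(A2) Linear independence.} The difficulty is that $\ISV$ is data-dependent. I would sidestep this by proving the stronger statement: almost surely, for \emph{every} index set $I\subset\{(i,t)\}$ with at most $\din$ distinct samples and at most $\din$ indices per sample, the family $\{\tilde x_{i,t}\}_{(i,t)\in I}$ is linearly independent. There are finitely many such $I$, so a union bound over null sets suffices.

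For a fixed $I$, linear dependence is equivalent to the vanishing of the Gram determinant, which is a polynomial in the token entries. It is therefore either identically zero or nonzero a.s. To show it is not identically zero, I would exhibit one configuration of the $\token^{(i)}_t$ where independence holds. A natural choice is the following.
\begin{itemize}
\item Place the queries $\query^{(i)}=\token^{(i)}_{y_i}$, for the at most $\din$ involved samples, at distinct canonical basis vectors.
\item Within each sample, choose the successors $\token^{(i)}_{t+1}$ so that the $d_{i,t}$, $t\in I(i)$, are linearly independent. Take $\token^{(i)}_{y_i+1}=0$ and the other successors distinct basis vectors, adjusting when $t+1=y_i$ forces $\token^{(i)}_{t+1}=\query^{(i)}$; this coupling is the one subtle case.
\end{itemize}
Then $\sum c_{i,t} d_{i,t}\query^{(i)\top}=0$ reads column by column (the $\query^{(i)}$ being orthonormal) as $\sum_t c_{i,t}d_{i,t}=0$ for each $i$, which forces $c=0$.

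\textbf{(A4) Half-space.} For $t=s$ the inner product is $\|d_{i,t}\|^2>0$ a.s. For $t\neq s$, expand
\[
\langle d_{i,t},d_{i,s}\rangle=\|\token_{y+1}\|^2-\langle \token_{y+1},\token_{t+1}\rangle-\langle \token_{y+1},\token_{s+1}\rangle+\langle \token_{t+1},\token_{s+1}\rangle .
\]
The first term concentrates at $\vartoken$ with $\chi^2$ deviations $O(\vartoken\sqrt{\log(1/\delta)/\din})$. Each cross term is an inner product of independent $\mathcal N(0,\frac{\vartoken}{\din}\I)$ vectors, of order $\vartoken/\sqrt{\din}$, with sub-exponential (Bernstein) tails. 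Hence each pairwise inner product is at least $\vartoken/2$ with probability at least $1-C e^{-c\din}$. A union bound over at most $nT^2$ triples $(i,t,s)$ gives failure probability $nT^2e^{-c\din}$, which is small when $\din\gg\log(nT)$.

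The main obstacle is (A2). First, the data-dependence of $\ISV$ must be handled, which the uniform-over-subsets argument does. Second, the witness configuration must respect the coupling between $\query^{(i)}$ and the tokens entering $d_{i,t}$ (the case $t+1=y_i$) while keeping the queries orthonormal. I would first try the explicit basis-vector construction above and fall back, if it fails, on a dimension count of the zero set.
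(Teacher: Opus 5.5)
Your proposal is correct. For (A3) and (A4) it matches the paper's proof in substance. In (A3) the target has a density and is independent of the distractors, so it a.s.\ avoids a set of lower affine dimension. In (A4) you use sub-exponential concentration plus a union bound over $O(nT^2)$ pairs. You are slightly more careful than the paper here: you count the $T-2$ distractors correctly, and you treat the diagonal case $s=t$, which the paper omits.

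For (A2) your route differs. The paper builds no witness. It applies the rank-one structure $\tilde x_{i,t}=\query^{(i)}\otimes d_{i,t}$ directly to the random data and combines two separate almost-sure facts. First, the at most $\din$ active queries come from independent samples and are linearly independent, so a dependence $\sum_i \query^{(i)}\otimes v_i=0$ forces each $v_i=\sum_t\alpha_{i,t}d_{i,t}=0$. Second, the at most $\din$ difference vectors of one sample are linearly independent, so every $\alpha_{i,t}=0$. The two families never need to be jointly generic, so the coupling $d_{i,y_i-1}=\query^{(i)}-\token^{(i)}_{y_i+1}$ that you single out as the main obstacle never enters.

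Your Gram-determinant argument instead verifies the same tensor identity at one deterministic configuration and transfers it to the random data by polynomial genericity. This is more roundabout, but your witness does work. With $\token^{(i)}_{y_i+1}=0$ one has $d_{i,t}=\token^{(i)}_{t+1}$, and the single forced vector $e_{k_i}$ can be completed to an independent family because $|I(i)|\le\din$. So no fallback is needed.

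Your version buys two things. It treats the data-dependence of $\ISV$ explicitly, through the finite union over admissible index sets. The paper tacitly treats $\ISV$ as fixed, although the same union makes its argument rigorous. It also applies verbatim to any absolutely continuous law of tokens and query noise.
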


In particular, 
these assumptions hold more easily as the
input dimension $\din$ grows relative to $T$ and $n$. 

\begin{proof}[Proof of \Cref{prop:geom-ass=satisfied-gmc}]
\textbf{1. Linear Independence:} 
We exploit the rank-one structure $\tilde x_{i,t} = \flatten(d_{i,t}\query^{(i)\top}) 
= \query^{(i)} \otimes d_{i,t}$.
Consider a linear dependence $\sum_{(i,t)\in\ISV} \alpha_{i,t} \tilde x_{i,t} = 0$. 
Regrouping terms by sample index $i$, this can be written as  
$\sum_{i} \query^{(i)} \otimes v_i = 0$, where $v_i = \sum_{t \in \ISV(i)} \alpha_{i,t} d_{i,t}$.
If $\din \ge n_{\ISV}$, the query vectors $\{\query^{(i)}\}$ associated with the active 
samples are linearly independent a.s. (as they are independent Gaussian vectors). 
This implies that $v_i = 0$ for each sample $i$. 
Furthermore, if $\din \ge T_{\ISV}$, the difference vectors $\{d_{i,t}\}_{t\in\ISV(i)}$ 
within each sample are linearly independent a.s. (as they are Gaussian in $\R^{\din}$).
The condition $v_i = 0$ thus forces $\alpha_{i,t} = 0$ for all coefficients.

\textbf{2. Identifiability:} 
Fix a sample $i$. 
The set of distractor tokens 
$D_i = \{\token^{(i)}_{t+1} : t \neq y_i\}$ 
contains $T-1$ vectors. 
The convex hull 
$\mathcal{K} = \text{Conv}(D_i)$ 
is contained 
in an affine subspace of dimension at most $T-2$. 
Since $\din > T-2$, 
the Lebesgue measure of $\mathcal{K}$ in 
$\R^{\din}$ is zero. 
The target token 
$\target = \token^{(i)}_{y_i+1}$ is drawn from 
$\mathcal{N}(0, \frac{\stdtoken^2}{\din} I_{\din})$ 
independently of $D_i$. 
Therefore, 
$\mathbb{P}(\target \in \mathcal{K}) = 
\E[\mathbb{P}(\target \in \mathcal{K} | D_i)] = 0$.
Taking a union bound over $n$ samples 
preserves the almost sure validity.

\textbf{3. Half-space condition:} 
We show that for some universal constants $c_1,c_2>0$,
\[
\mathbb{P}\Bigl(
  \exists\, i, \exists s \neq t \neq y_i:\ 
  \langle d_{i,s}, d_{i,t} \rangle
         \le 0
\Bigr)
\le c_1 n T^2 e^{-c_2 \din}.
\]
The event is scale-invariant, 
so assume $\token^{(i)}_t\sim \mathcal{N}(0, I_{\din})$.
Fix a sample $i$ and distinct $s,t \neq y_i$. 
Omit index $i$ for brevity. 
Let
$d_s = \token_{s+1} - \token_{t_0+1}$ and 
$d_t = \token_{t+1} - \token_{t_0+1}$.
Expanding the inner product:
\[
\langle d_s, d_t \rangle
= \token_{s+1}^\top \token_{t+1}
  - \token_{s+1}^\top \token_{t_0+1}
  - \token_{t+1}^\top \token_{t_0+1}
  + \|\token_{t_0+1}\|^2.
\]
Expanding in coordinates, 
this is a sum of $\din$ i.i.d.\ random variables $Z_k$ with the structure $XY - XW - YW + W^2$ 
(where $X,Y,W$ are standard normal).
We calculate the moments: 
$\E[Z_k] 
= \E[W^2]
= 1$ 
and $\E[Z_k^2] = \E[X^2 Y^2] + \E[X^2 W^2] + \E[Y^2 W^2] + \E[W^4]
= 1 + 1 + 1 + 3 = 6$ since terms in the square expansion with odd
powers have zero expectation. 
Thus, $\Var[Z_k] = \E[Z_k^2] - \E[Z_k]^2 = 5$ 
and we deduce that 
\[
\E[\langle d_s, d_t \rangle] = \din,
\qquad
\Var(\langle d_s, d_t \rangle) = 5\din.
\]
Thus $\langle d_s, d_t\rangle$ is a sum of i.i.d.\ \emph{sub-exponential}
variables with mean $\din$ and variance of order $\din$. By standard
concentration inequalities for sub-exponential sums there exist
universal constants $c_1,c_2>0$ such that
\[
\mathbb{P}\bigl(\langle d_s, d_t \rangle \le 0\bigr)
= \mathbb{P}\bigl(\langle d_s, d_t \rangle - \din \le -\din\bigr)
\le c_1 e^{-c_2 \din}.
\]
There are at most $n \binom{T-2}{2} \le n T^2/2$ such pairs. 
By a union bound, the probability of any violation is 
at most $c_1 n T^2 e^{-c_2 \din}$, which vanishes 
exponentially fast in $\din - \log(nT)$.
\end{proof}

\subsection{On why identifiability is the only obstacle to margin divergence 
under vanishing MSE loss}
\label{subsec:identifiability-margin-divergence}
The MSE loss can go to zero even when the margins do not diverge,
unlike the CE loss studied in \citet{Soudry18ImplicitBiasCE}. 
A trivial example is when the target token 
$\token_{T_0} = \sum_{t\ne T_0} \alpha_t \token_t$ is in the 
convex hull of the non-target tokens, with $(\alpha_t)_t$ 
that can be realized as attention weights. 
In that case, setting the attention weights to $(\alpha_t)_t$
yields zero MSE loss, but the margins remain bounded. 
The next lemma shows that this is the only obstacle to margin divergence. 

\begin{lemma}[Identifiability implies margin divergence under vanishing MSE loss]
\label{lem:identifiability-margin-divergence}
For each sample $i$, let 
$C_i := \textrm{conv}\{\token^{(i)}_t : t\in{2,\dots,T_i}\setminus{T_0^{(i)}}\}$ 
be the convex hull of the non-target tokens and define its 
identifiability gap as
\[
\gamma_i:=\dist\big(\token^{(i)}_{T_0^{(i)}},C_i\big).
\]
There exists a constant $C_i>0$ such that 
the MSE residual $r^{(i)}(w) = \Ectx^{(i)} a^{(i)}(w) - \token^{(i)}_{T_0^{(i)}}$ 
for sample $i$ satisfies for all $w$:
\[
(1-a^{(i)}_{T_0^{(i)}}) C_i
\ge 
\|r^{(i)}\|
\ge (1-a^{(i)}_{T_0^{(i)}})\gamma_i.
\]
In particular, when $\gamma_i>0$, we have $\MSE_i\to0$ 
\emph{iff} $a^{(i)}_{T_0^{(i)}}\to1$, 
and there is no finite-norm minimizer of the MSE loss under identifiability.
\end{lemma}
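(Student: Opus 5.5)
The plan rests on the convex-combination representation of the residual already derived in the proof of \Cref{lem:grads}. Drop the sample index $i$ and write $T_0=y+1$. Since the shifted attention weights $a_{s+1}(w)$, $s=1,\dots,T-1$, are nonnegative and sum to one, we have
\[
r(w)=\Ectx\,a(w)-\token_{T_0}=\sum_{s\neq y}a_{s+1}(w)\,\bigl(\token_{s+1}-\token_{T_0}\bigr)=\sum_{s\neq y}a_{s+1}(w)\,d_s .
\]
Factor out the total distractor mass $1-a_{T_0}(w)=\sum_{s\neq y}a_{s+1}(w)$. On the event $a_{T_0}<1$, define normalized weights $\beta_s:=a_{s+1}/(1-a_{T_0})$. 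They form a probability vector over the distractors, and
\[
r(w)=(1-a_{T_0})\Bigl(\sum_{s\neq y}\beta_s\token_{s+1}-\token_{T_0}\Bigr)=(1-a_{T_0})\,(p-\token_{T_0}),
\]
where $p:=\sum_{s\neq y}\beta_s\token_{s+1}\in C_i$. If $a_{T_0}=1$, then $r=0$ and both inequalities hold trivially.

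\textbf{Two-sided bound.} Both bounds follow from the factorization. For the lower bound, $\|p-\token_{T_0}\|\ge\dist(\token_{T_0},C_i)=\gamma_i$ because $p\in C_i$, so $\|r\|\ge(1-a_{T_0})\gamma_i$. For the upper bound, take the constant to be $\max_{s\neq y}\|\token_{s+1}-\token_{T_0}\|=\max_{s\neq y}\|d_s\|$. By the triangle inequality, $\|p-\token_{T_0}\|\le\sum_s\beta_s\|d_s\|$, which is at most this maximum. (The constant shares the letter $C_i$ with the convex hull in the statement; I would rename it $K_i$ in the write-up to avoid the clash.) The constant depends only on the data, not on $w$, as required.

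\textbf{Consequences.} When $\gamma_i>0$, the two bounds sandwich $\|r^{(i)}\|$ between positive multiples of $1-a^{(i)}_{T_0}$. Hence $\MSE_i=\tfrac12\|r^{(i)}\|^2\to0$ if and only if $a^{(i)}_{T_0}\to1$. For any finite $w$, the softmax gives every distractor positive weight, so $a^{(i)}_{T_0}(w)<1$. The lower bound then gives $\MSE_i(w)\ge\tfrac12(1-a^{(i)}_{T_0}(w))^2\gamma_i^2>0$. Since the infimum of the loss is $0$ (approached along any direction that separates the margins, by (A1)), no finite $w$ attains it, so there is no finite-norm minimizer.

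\textbf{Main obstacle.} The estimates are elementary; the work is in the bookkeeping. The indexing of the convex hull (positions $2,\dots,T$ excluding $T_0$) must match the shifted attention, so that the $\beta_s$ really are supported on $C_i$. The degenerate case $a_{T_0}=1$ must be handled separately. I would also record the by-product used repeatedly in the proof of \Cref{thm:mse-bias-app}: since $1-a_{T_0}=S_i/(1+S_i)\le S_i$, the upper bound gives $\|r^{(i)}\|^2\le K_i^2S_i(w)^2$.
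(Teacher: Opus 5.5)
Your proposal is correct and follows the paper's proof: both write $r^{(i)}=(1-a^{(i)}_{T_0})(u-\token_{T_0})$, with $u\in C_i$ the renormalized average of the distractor tokens, and then take norms, so that $\|u-\token_{T_0}\|\ge\gamma_i$ gives the lower bound and boundedness of $\|u-\token_{T_0}\|$ gives the upper bound. The paper compresses the rest into ``taking the norms yields the claim''; you spell it out with an explicit constant $\max_{s\neq y}\|d_s\|$, an appeal to (A1) showing the infimum of the loss is zero (so that $\MSE_i(w)>0$ at every finite $w$ rules out a finite minimizer), and the by-product $\|r^{(i)}\|^2\le K_i^2 S_i(w)^2$ that the theorem's proof later uses.
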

\begin{proof}
We drop the index $i$. By definition of shifted attention, $a_1=0$, hence 
\[ 
\Ectx a = a_{T_0}\token_{T_0} + (1-a_{T_0}) 
\underbrace{
  \sum_{t\ne T_0, t \ge 2} 
\frac{a_t}{1-a_{T_0}} \token_t
}_{=:u}
\] 
where $u$ lies 
in the convex hull $C$. 
Therefore, $r=\Ectx a - \token_{T_0}=(1-a_{T_0})(u-\token_{T_0})$ and taking the norms yields the claim.
\end{proof}

\subsection{A representative run where the assumptions hold}
\label{subsec:assumptions-hold}

We provide an empirical sanity check that the technical assumptions used in our analysis
can hold in practice on Gaussian match-and-copy (GMC) data in a regime where gradient descent
exhibits clear directional alignment with the max-margin solution within a finite iteration budget.
This section is intentionally illustrative: 
it documents one representative configuration where
all proxy diagnostics behave consistently with the assumptions of \Cref{thm:mse-bias-app}.

\paragraph{Setup.}
We consider GMC samples with exact copy ($\target=\token_{t_0+1}$) and exact match ($\query=\token_{t_0}$),
and we optimize the MSE objective by (full-batch) gradient descent.
Unless stated otherwise, we run for $40$k iterations with step size $10^{-3}$ and log relevant quantities at each iteration. In particular, partial sums indexed by time iteration are computed with all terms. 

\paragraph{Directional alignment and loss decay.}
\Cref{fig:mse-implicit-bias} reports the training loss decay and the cosine similarity between the iterate
$\Wkq(\tau)$ and the max-margin predictor $\maxmargin$.
In this run, the loss converges to a small value and the direction of $\Wkq(\tau)$ rapidly aligns with $\maxmargin$.
Empirically, we observe that such strong alignment within a fixed budget occurs in some regimes but not uniformly across all
configuration depending on the interplay of $(\din,T,n)$ and step size. Since our theoretical statement concerns the asymptotic max-margin regime, we focus here on runs that clearly aligns within the iteration budget, and seek to verify whether the assumptions used to guarantee the alignment occur or not, and hence explain or not the observed behavior. 

\begin{figure}[t]
    \centering
    \begin{subfigure}[t]{0.32\linewidth}
        \centering
        \includegraphics[width=\linewidth]{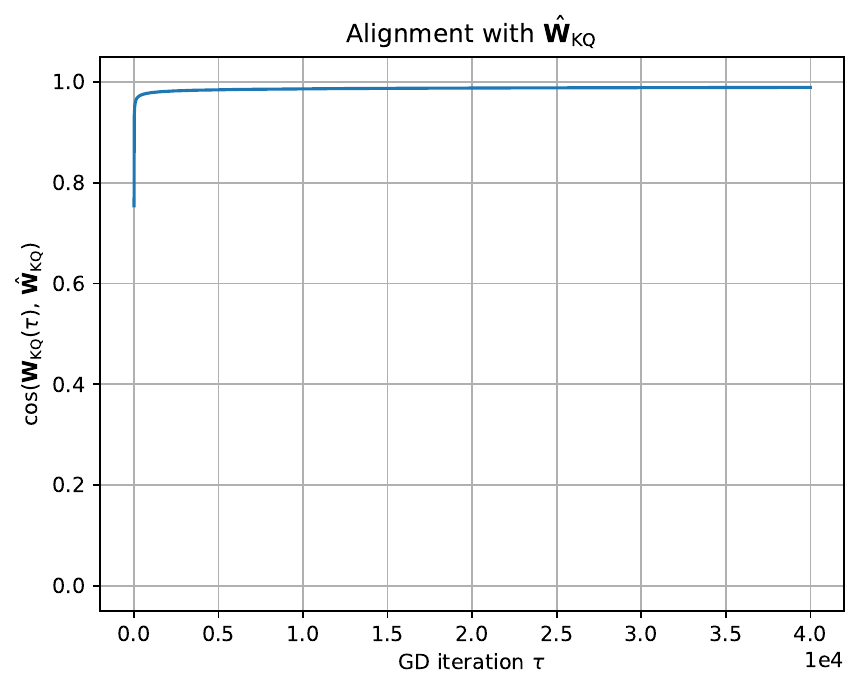}
        \caption{$\cos(\Wkq(\tau),\maxmargin)$}
    \end{subfigure}\hfill
    \begin{subfigure}[t]{0.32\linewidth}
        \centering
        \includegraphics[width=\linewidth]{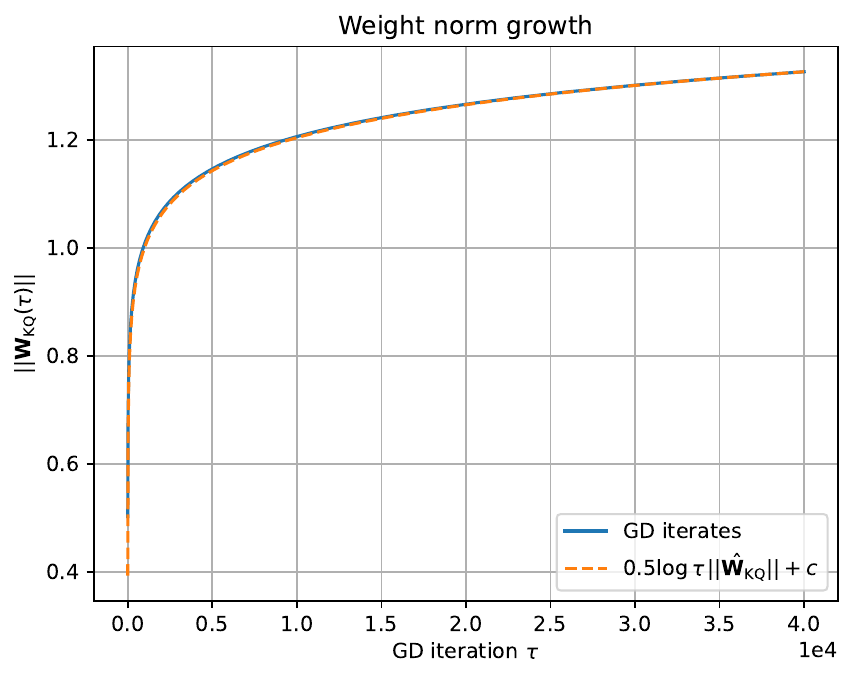}
        \caption{$\|\Wkq(\tau)\|$}
    \end{subfigure}\hfill
    \begin{subfigure}[t]{0.32\linewidth}
        \centering
        \includegraphics[width=\linewidth]{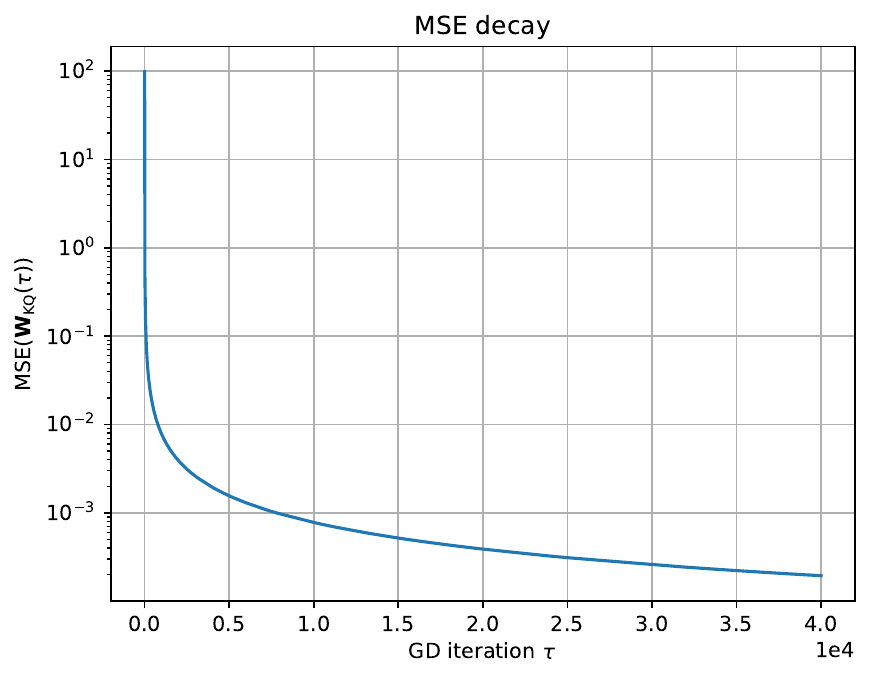}
        \caption{training loss}
    \end{subfigure}
    \caption{GD on MSE for shifted-key one-layer attention on GMC data with exact match and copy
    ($\din{=}16$, $T{=}4$, $n{=}8$, step size $10^{-3}$, $40$k iterations).
    We plot (left) cosine similarity between $\Wkq(\tau)$ and the max-margin solution $\maxmargin$,
    (middle) the weight norm $\|\Wkq(\tau)\|$, and (right) training loss. In (b), we verify that the norm diverges at speed $\frac12 \log \tau \|\maxmargin\|$ (dashed line), confirming the rate in \Cref{thm:mse-bias-app}.}
    \label{fig:mse-implicit-bias}
\end{figure}

\paragraph{Geometric conditions (A1--A4).} 
We verify a hard-margin feasibility condition (minimum margin $\ge 1$), linear independence of support vectors (numerical rank), strict separation from convex hull of distractors, and positivity of relevant dot-products within the support set. These are direct finite-dimensional checks on the synthetic data, which are satisfied in this configuration. 

\paragraph{Training loss (A5).} 
\Cref{fig:mse-implicit-bias} shows that the loss decreases monotonically across iterations, 
and has reached a low-loss regime at the end of the run.

\paragraph{Gradient-square summability proxy (A6).}
\Cref{fig:assump-grad-l2} shows that the cumulative sum of squared gradient norms appears to stabilize, tail increments decrease, 
and $\|\nabla\mathcal{L}(\Wkq(\tau))\|^2$ exhibits an approximate power-law decay on log-log axes $~\tau^{-2}$. 
Together these observations are consistent with the summability condition required in (A6).

\begin{figure}[t]
    \centering
    \begin{subfigure}[t]{0.40\linewidth}
        \centering
        \includegraphics[width=\linewidth]{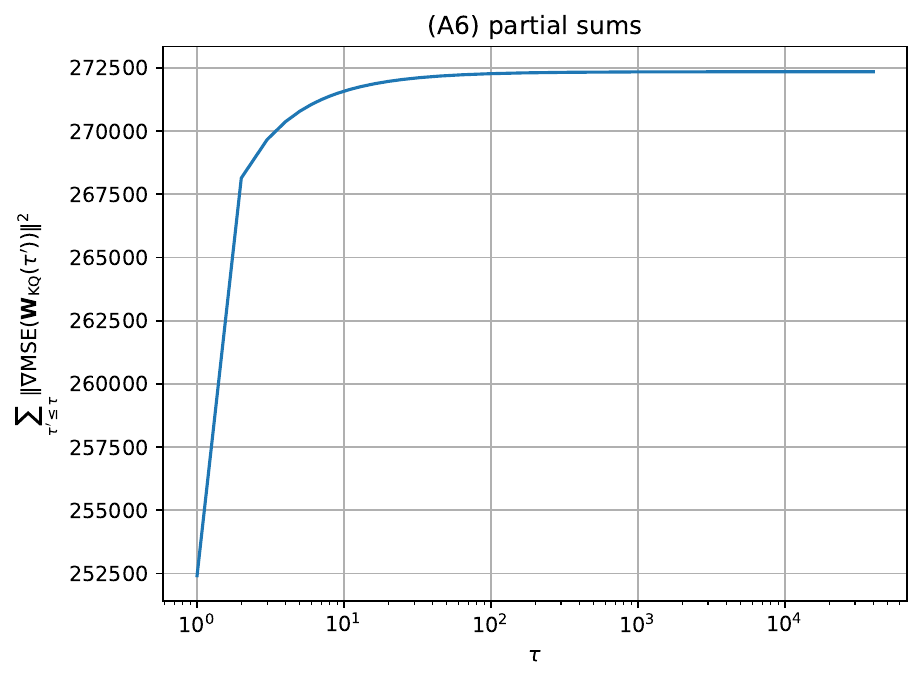}
        \caption{$\sum_{\tau'\le \tau}\|\nabla \mathcal{L}(\Wkq(\tau'))\|^2$}
    \end{subfigure}\hfill
    \begin{subfigure}[t]{0.40\linewidth}
        \centering
        \includegraphics[width=\linewidth]{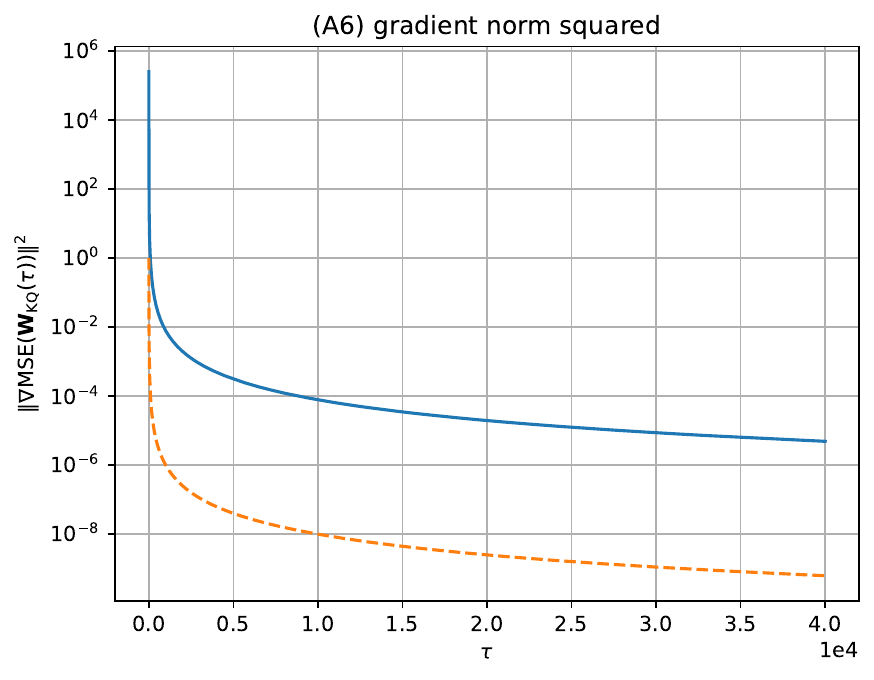}
        \caption{$\|\nabla \mathcal{L}(\Wkq(\tau))\|^2$}
    \end{subfigure}
    \caption{Empirical proxies for (A6). 
    The orange dashed line in (b) corresponds to a power-law decay $\tau^{-2}$ fitted on the last $10$k iterations.}
    \label{fig:assump-grad-l2}
\end{figure}

\paragraph{Pre-factor convergence proxy (A7).}
\Cref{fig:assump-h} reports empirical proxies for convergence of the auxiliary quantities $h_{i,t}(\Wkq(\tau))$
and for the weighted series that appears in (A7). 
We estimate $h^\infty$ by a tail average over the last percent of checkpoints, 
and report the maximum relative deviation $\max_{i,t}|h_{i,t}(\tau)-h^\infty_{i,t}|/ \sum_{i,t} |h_{i,t}^\infty|$ 
and the mean of these relative deviations across $i,t$ as a function of iteration $\tau$. 
We observe stabilization of $h_{i,t}(\Wkq(\tau))$ and a saturating behavior of the partial sums. 
Precisely estimating asymptotic summability rates from finite horizons is delicate, but these plots are consistent 
with the qualitative behavior required by (A7). 

\begin{figure}[t]
    \centering
    \begin{subfigure}[t]{0.40\linewidth}
        \centering
        \includegraphics[width=\linewidth]{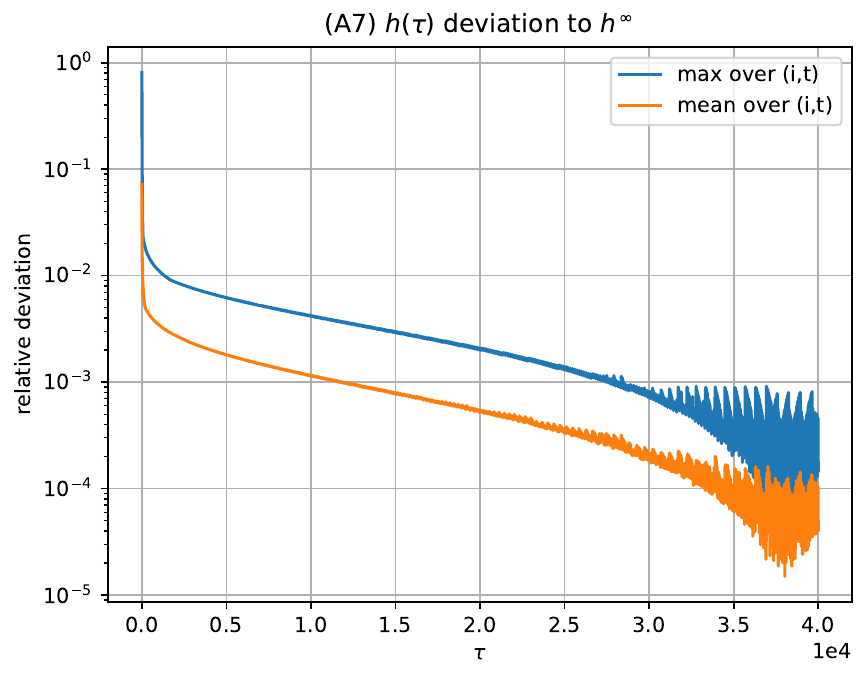}
        \caption{deviation to $h_\infty$}
    \end{subfigure}\hfill
    \begin{subfigure}[t]{0.40\linewidth}
        \centering
        \includegraphics[width=\linewidth]{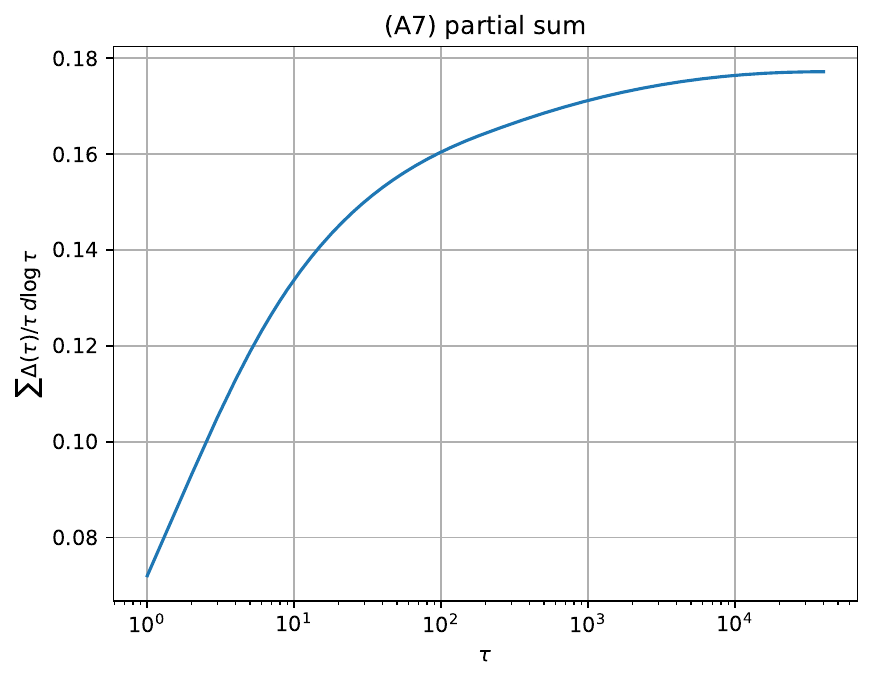}
        \caption{$\sum \|h(\tau)-h_\infty\|/\tau$}
    \end{subfigure}
    \caption{Empirical proxies for (A7) 
    based on estimation of $h^\infty$ from tail averages.} 
    \label{fig:assump-h}
\end{figure}

\paragraph{Alignment-ratio proxy (A8).} 
We estimate the ratio between the off-diagonal contributions
in \eqref{eq:alignment-assumption} 
with disagreeing signs
and the total off-diagonal contributions. 
The estimation is based on the estimate of $h^\infty$ from tail averages,
and subsequent estimates of $\tilde w_{i,t}(\tau)$, $U_{i,t,s}(\tau)$, $H_{i,t,s}(\tau)$.
We find that this ratio remains close to zero across iterations.
Since (A8) only requires that misaligned off-diagonal contributions
remain a strict minority of the total off-diagonal mass,
this behavior is compatible with the condition in this configuration.

\paragraph{Softmax tail domination proxy (A9).}
\Cref{fig:assump-softmax-tail-proxy} reports proxies for the softmax tail domination condition (A9).
The tail-mass proxy $g(\tau)$ decreases, 
and the corresponding partial sum appears to stabilize,
which is consistent with the behavior required by (A9). 

\begin{figure}[t]
    \centering
    \begin{subfigure}[t]{0.40\linewidth}
        \centering
        \includegraphics[width=\linewidth]{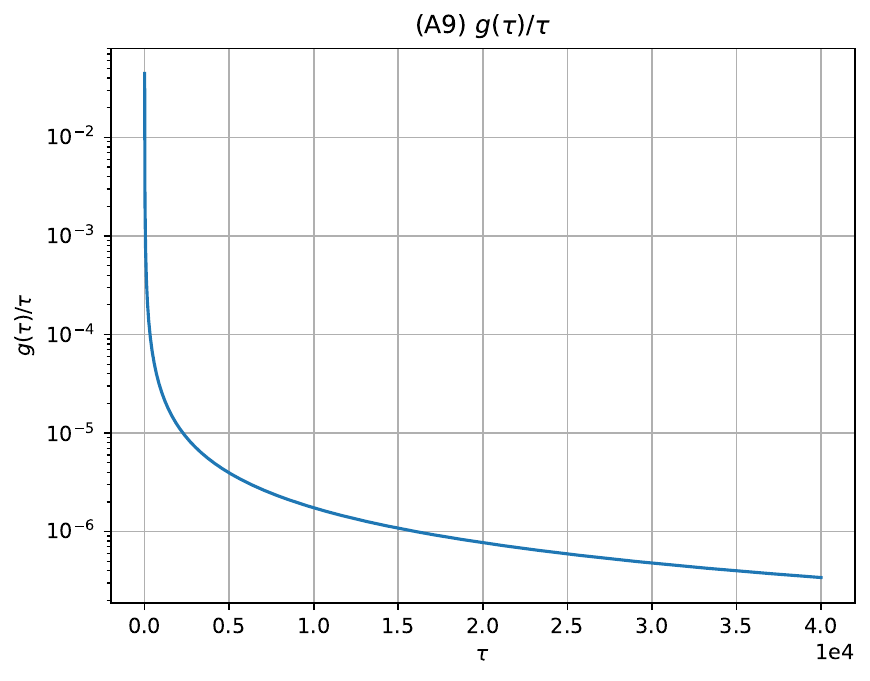}
        \caption{$g(\tau)/\tau$}
    \end{subfigure}\hfill
    \begin{subfigure}[t]{0.40\linewidth}
        \centering
        \includegraphics[width=\linewidth]{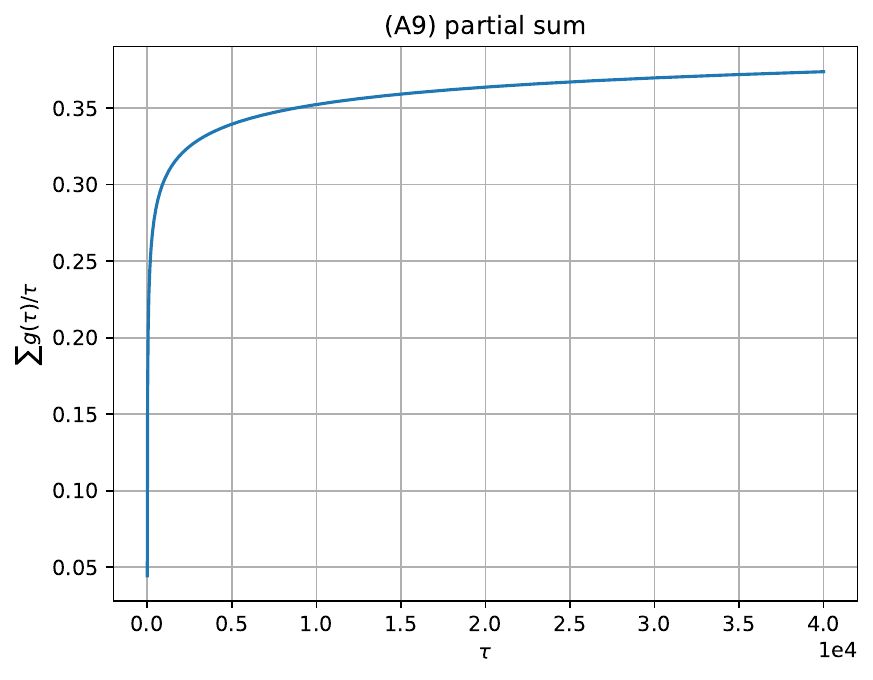}
        \caption{$\sum g(\tau)/\tau$}
    \end{subfigure}
    \caption{(A9) Softmax tail domination.}
    \label{fig:assump-softmax-tail-proxy}
\end{figure}

\section{Notes on Benchmark Classification and Design Choices}
\label{app:related-works-table}

\Cref{tab:benchmark-comparison} 
summarizes several benchmarks from the literature that
study match-and-copy or in-context learning tasks. This table is not intended to be exhaustive (and is surely not, given the size of the literature). Instead, it is  
intended to highlight how the design choices vary across works 
and how GMC fits within this landscape. 

Overall, \Cref{tab:benchmark-comparison} should be read as a navigational aid rather than a
definitive taxonomy.
Its purpose is to highlight which benchmarks simultaneously (i) make match-and-copy
structure explicit, (ii) give rise to empirically observable and robust PTH$\to$IH behavior, 
(iii) exhibit architectural sensitivity favoring Transformers,
and (iv) is amenable to the theoretical analysis of the optimization dynamics. 

Obviously, each column compresses a wide range of modeling and experimental choices across works, so the annotations should be interpreted with care.

\paragraph{Classification of \citet{Marion22AttentionLayers}.} 
The task of single-location regression studied in \citet{Marion22AttentionLayers} can be reformulated as a special case of
Gaussian match-and-copy (GMC) with a deterministic, fixed query across samples. 
We therefore include \citet{Marion22AttentionLayers} in the GMC family, with the annotation
``fixed query'' to emphasize this restriction.

\paragraph{Why some entries are marked as ``not discussed'' or ``unlikely''.}
In several works, the presence or robustness of PTH$\to$IH mechanisms 
is not examined, and we therefore mark the corresponding entries as 
``not discussed''. 
In a small number of cases, although this aspect is not studied explicitly, 
we label the entry as ``unlikely'', 
as the reported results indirectly suggest that PTH$\to$IH 
mechanisms do not emerge or are not robust.

For instance, in \citet{Marion22AttentionLayers}, 
a solution based on detecting a mean shift can solve 
the task without requiring a PTH$\to$IH circuit. 
Likewise, in \citet{Lee24ICLwithOtherArchitectures}, 
metrics associated with PTH$\to$IH behavior vary across data distributions, 
suggesting limited robustness of such mechanisms.

\begin{table}[t]
\centering
\scriptsize
\setlength{\tabcolsep}{4pt}
\renewcommand{\arraystretch}{1.15}
\caption{
Comparison of synthetic and empirical benchmarks for match-and-copy / ICL. 
\textbf{Task:} NTP = next-token prediction; 
NLP = natural language processing; 
NLP Class. ICL = NLP classification in-context learning 
as in \eqref{eq:template-icl-rw},
like sentiment analysis, 
topic classification, etc.;
(G)MC = (Gaussian) match-and-copy; 
AR = recent variants of associative recall, which differ from the original discrete version of \citet{Graves14NeuralTuringMachines} as discussed in \Cref{sec:related-work}; 
MQAR = multi-query associative recall; n-gram ICL = in-context learning of n-gram models; 
LinReg ICL = in-context learning of linear regression ($f$ in \eqref{eq:template-icl-rw} is a linear function);
Class. ICL = classification in-context learning ($f$ in \eqref{eq:template-icl-rw} 
is a classifier), typically Omniglot or Gaussian clusters. 
\textbf{Match signal:} $=$ for exact equality (the query is equal to the context token it should match); \Cone\ for injected mean-shift (first-order marker); 
\Ctwo\ for injected second-order correlation; \noExplicitMatch 
for no explicit match signal provided (typically the case for ICL tasks as in \eqref{eq:template-icl-rw}, where the query $x$ is independent of the examples $x_i$);
same marg. for query and match having the same marginal distribution. 
\textbf{Arch. sens.:} qualitative architectural sensitivity of the benchmark: 
pre-T for pre-Transformer architecture comparisons (e.g., RNNs, LSTMs, Fast Weights);
$T>$ when Transformers outperform alternatives;
$\approx$ when ranking depends on setup and no clear winner; 
Alt when a new architecture is proposed to close a gap;
left blank when no comparison with non-Transformer architectures. 
\textbf{PTH$\to$IH:} \emph{Obs.} = observed in trained models; \emph{Rob.} = robust across setups. 
\cmark when observed/robust; \xmark when not emerging or not robust; \unlikely when not discussed but indirect evidence against; left blank \notDiscussed when not studied.
\textbf{Theory:} $\to$ for works establishing results about the optimization dynamics; 
$\nabla$ for ones exhibiting that among critical points, some are (close to) PTH$\to$IH solutions;
$\exists$ for ones establishing representational/existence statements only (e.g., 
the model can be instantiated to implement a PTH$\to$IH circuit, but it is not known whether it is learned by optimization), 
left blank when no such results are provided.
}
\begin{tabular}{l l c c cc cc}
\toprule
\textbf{Paper} & \textbf{Task} & \textbf{Match} & \textbf{Arch. sens.} &
\multicolumn{2}{c}{\textbf{PTH$\!\to\!$IH}} &
\multicolumn{2}{c}{\textbf{Theory}} \\
& & & & Obs. & Rob. & Opt. & Rep. \\
\midrule

\citep{Elhage21TransformerCircuits,Olsson22AnthropicInContextLearning}
& NLP NTP & \noExplicitMatch & \noComparison & \cmark & \cmark & & \\
\citep{Min22ICLwithRandomLabels}
& NLP ICL & \noExplicitMatch & \noComparison & \notDiscussed & \notDiscussed & & \\
\citep{Dai23GPTMetaOptimization}
& NLP Class. ICL & \noExplicitMatch & \noComparison & \notDiscussed & \notDiscussed & & \\
\citep{Crosbie24AblatingIHDeterioratesICL}
& NLP Class. ICL & \noExplicitMatch & $T>$ & \cmark & \cmark & & \\

\midrule
\citep{Graves14NeuralTuringMachines}
& discrete MC (orig. AR) & $=$ & pre-T & \notDiscussed & \notDiscussed & & $\exists$ \\
\citep{Marion22AttentionLayers}
& GMC with fixed query & \Cone & \noComparison & \unlikely & \notDiscussed & $\to$ & \\
\textbf{This work (GMC)}
& GMC & \Ctwo & $T>$ & \cmark & \cmark & $\to$ & \\

\midrule
\citep{Ba16FastWeightsAttendRecentPast}
& AR & $=$ & pre-T & \notDiscussed & \notDiscussed & & \\
\citep{Arora24ZoologyRecall}
& MQAR & $=$ & $T>$ & \notDiscussed & \notDiscussed & & $\exists$ \\
\citep{Fu23H3,Poli23HyenaHierarchy}
& AR & $=$ & Alt & \notDiscussed & \notDiscussed & & \\

\midrule
\citep{Bietti23TransformerMemoryViewpoint}
& n-gram ICL & $=$ & \noComparison & \cmark & \xmark & & $\exists$ \\
\citep{Akyurek24ICLLanguageGenbyAutomata}
& n-gram ICL & \noExplicitMatch & $T>$ & \cmark & \notDiscussed & & \\
\citep{Nichani24OneGramTaskLearnsIH}
& n-gram ICL & \noExplicitMatch & \noComparison & \cmark & \notDiscussed & $\to$ & \\
\citep{Chen24nGramTaskLearnsIH}
& n-gram ICL & \noExplicitMatch & \noComparison & \cmark & \notDiscussed & $\to$ & \\
\citep{Makkuva24AttentionMarkov}
& n-gram ICL & \noExplicitMatch & \noComparison & \cmark & \notDiscussed & & $\exists$ \\
\citep{Edelman24OnetoTwoGramMarkovLearning}
& n-gram ICL & \noExplicitMatch & \noComparison & \cmark & \notDiscussed & & $\exists$ \\
\citep{Rajaraman24ApproximationNGramsByTransformers}
& n-gram ICL & \noExplicitMatch & \noComparison & \notDiscussed & \notDiscussed & & $\exists$ \\
\citep{Varre25LearningInContextNGrams}
& n-gram ICL & \noExplicitMatch & \noComparison & \cmark & \notDiscussed & $\nabla$
& \\

\midrule
\citep{Akyurek23InContextLearningLinearModelsviaGD}
& LinReg ICL & $=$ & \noComparison & \notDiscussed & \notDiscussed & & $\exists$ \\
\citep{Oswald23InContextGD}
& LinReg ICL & \noExplicitMatch & \noComparison & \notDiscussed & \notDiscussed & & \\
\citep{Garg22InContextCaseStudy}
& LinRe / Reg. Tree / 2-layer NN ICL & \noExplicitMatch & \noComparison & \notDiscussed & \notDiscussed & & \\
\midrule
\citep{Chan22DataDistributionDrivesInContextLearning,Singh23TransienceICL}
& Class. ICL & mix & $T>$ & \cmark & \xmark & & \\
\citep{Singh24InductionHeadFormation}
& Class. ICL & $=$ & \noComparison & \cmark & \notDiscussed & & \\
\citep{Reddy24GaussianDataforMechanisticStudyClassificationICL}
& Class. ICL & same marg. & \noComparison & \cmark & \xmark & & \\

\midrule 
\citep{Bhattamishra23ICLBooleanFunctions}
& Boolean ICL & \noExplicitMatch & $\approx$ & $\approx$ & \notDiscussed & & \\

\midrule
\citep{Wang25LazyRichLearningIH}
& mix 4-gram and IH & \noExplicitMatch & \noComparison & \cmark & \notDiscussed & $\to$ & \\

\midrule
\citep{Lee24ICLwithOtherArchitectures}
& AR / ICL & \noExplicitMatch & $\approx$ & \notDiscussed & \unlikely & & \\

\bottomrule
\end{tabular}
\label{tab:benchmark-comparison}
\end{table}

\end{document}